
\documentclass{article}

\usepackage{microtype}
\usepackage{booktabs} 
\usepackage[table]{xcolor} 

\usepackage{hyperref}


\usepackage[accepted]{icml2025}

\usepackage{amsmath}
\usepackage{amssymb}
\usepackage{mathtools}
\usepackage{amsthm}
\usepackage{mathrsfs}
\usepackage{enumitem}
\usepackage{multirow}
\usepackage{makecell}
\usepackage{bbm}
\usepackage{bm}
\usepackage{pifont}
\usepackage[abs]{overpic}


\usepackage{amsmath,amsfonts,bm}


\newcommand{\Zt}{\mathbf{Z}^{(t)}}
\newcommand{\hatZt}{\widehat{\mathbf{Z}}^{(t)}}
\newcommand{\Xt}{\mathbf{X}^{(t)}}
\newcommand{\hatXt}{\widehat{\mathbf{X}}^{(t)}}
\newcommand{\Fx}{\mathbf{F}_{\ell}}
\newcommand{\hatFx}{\widehat{\mathbf{F}}_{\ell}}








\def\eqref#1{equation~\ref{#1}}









\def\1{\bm{1}}











\DeclareMathAlphabet{\mathsfit}{\encodingdefault}{\sfdefault}{m}{sl}
\SetMathAlphabet{\mathsfit}{bold}{\encodingdefault}{\sfdefault}{bx}{n}













\usepackage{xurl}
\usepackage{graphicx}
\usepackage{wrapfig}  
\usepackage{subcaption}
\usepackage{annotate-equations}
\usepackage[capitalize,noabbrev]{cleveref}
\usepackage{array} 
\newcolumntype{C}[1]{>{\centering\arraybackslash}p{#1}}

\setlength{\abovedisplayskip}{-5pt}
\setlength{\belowdisplayskip}{-5pt}
\setlength{\abovedisplayshortskip}{-5pt}
\setlength{\belowdisplayshortskip}{-5pt}
\setlength{\belowcaptionskip}{-5pt}
\setlength{\abovecaptionskip}{1pt}

\theoremstyle{plain}
\newtheorem{theorem}{Theorem}
\newtheorem{proposition}{Proposition}
\newtheorem{lemma}{Lemma}

\theoremstyle{definition}
\newtheorem{definition}{Definition}
\newtheorem*{definition*}{Definition}

\theoremstyle{remark}

\newcommand{\change}[1]{{#1}}

\makeatletter
\newcommand{\longdash}[1][2em]{%
  \makebox[#1]{$\m@th\smash-\mkern-7mu\cleaders\hbox{$\mkern-2mu\smash-\mkern-2mu$}\hfill\mkern-7mu\smash-$}}
\makeatother
\newcommand{\omitskip}{\kern-\arraycolsep}

\newcommand{\cmark}{\ding{51}}%
\newcommand{\xmark}{\ding{55}}%

\usepackage[textsize=tiny]{todonotes}

\icmltitlerunning{\change{Discovering Latent Causal Graphs from Spatiotemporal Data}}

\begin{document}

\twocolumn[

\icmltitle{\change{Discovering Latent Causal Graphs from Spatiotemporal Data}}



\icmlsetsymbol{equal}{*}

\begin{icmlauthorlist}
\icmlauthor{Kun Wang}{equal,ucsd1}
\icmlauthor{Sumanth Varambally}{equal,ucsd2}
\icmlauthor{Duncan Watson-Parris}{ucsd2,ucsd3}
\icmlauthor{Yi-An Ma}{ucsd2,ucsd1}
\icmlauthor{Rose Yu}{ucsd1,ucsd2}

\end{icmlauthorlist}

\icmlaffiliation{ucsd1}{Department of Computer Science and Engineering, University of California, San Diego, La Jolla, USA}
\icmlaffiliation{ucsd2}{Halıcıoglu Data Science Institute, University of California, San Diego, La Jolla, USA}
\icmlaffiliation{ucsd3}{Scripps Institution of Oceanography, University of California, San Diego, La Jolla, USA}

\icmlcorrespondingauthor{Sumanth Varambally}{\texttt{svarambally@ucsd.edu}}
\icmlcorrespondingauthor{Kun Wang}{\texttt{bw5889@princeton.edu}}

\icmlkeywords{Causal Discovery, Causal Representation Learning, Spatiotemporal Causal Discovery}

\vskip 0.3in
]



\printAffiliationsAndNotice{\icmlEqualContribution} 

\begin{abstract}

Many important phenomena in scientific fields like climate, neuroscience, and epidemiology are naturally represented as spatiotemporal gridded data with complex interactions.  Inferring causal relationships from these data is a challenging problem compounded by the high dimensionality of such data and the correlations between spatially proximate points. We present SPACY (SPAtiotemporal Causal discoverY), a novel framework based on variational inference, designed to model latent time series and their causal relationships from spatiotemporal data. SPACY alleviates the high-dimensional challenge by discovering causal structures in the latent space. To aggregate spatially proximate, correlated grid points, we use \change{spatial factors, parametrized by spatial kernel functions}, to map observational time series to latent representations. \change{Theoretically, we generalize the problem to a continuous spatial domain and establish identifiability when the observations arise from a nonlinear, invertible function of the product of latent series and spatial factors. Using this approach, we avoid assumptions that are often unverifiable, including those about instantaneous effects or sufficient variability.} Empirically, SPACY outperforms state-of-the-art baselines on synthetic data, even in challenging settings where existing methods struggle, while remaining scalable for large grids. SPACY also identifies key known phenomena from real-world climate data. An implementation of SPACY is available at \url{https://github.com/Rose-STL-Lab/SPACY/}


\end{abstract}

\section{Introduction}


In many scientific domains such as climate science, neurology, and epidemiology, low-level sensor measurements generate high-dimensional observational data. These data are naturally represented as gridded time series, with interactions that evolve over both space and time. Discovering causal relationships from spatiotemporal  data is of great  scientific importance.  It allows researchers to predict future states, intervene in harmful trends, and develop new insights into the underlying mechanisms. For example, in climate science, the study of teleconnections \citep{liu2023teleconnections}, the interactions between regions thousands of kilometers away, is important to understanding how climate events in one part of the world may affect weather patterns in distant locations. 

\begin{figure}[t]
    \centering
    \includegraphics[width=0.48\textwidth]{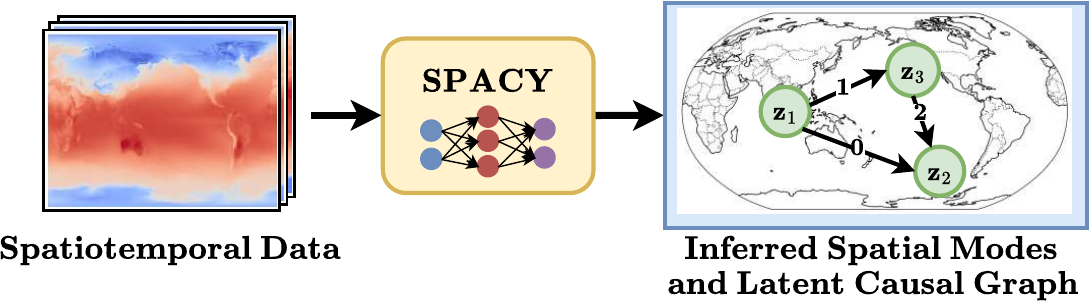}
    \caption{SPACY jointly infers latent time series and the underlying causal graph from gridded time-series data by identifying spatial modes of variability.}
    \label{fig:spacy_overview}
\end{figure}
\change{
\begin{table*}[t]
\centering
\begin{tabular}{lC{1.7cm}C{3cm}C{1.8cm}C{3cm}}
\toprule
\textbf{Methods} & \textbf{Non-linear SCM} & \textbf{Non-linear mapping from latents to observations} & \textbf{Instantaneous Links} & \textbf{Allows multiple latent parents}\\
\midrule
Mapped-PCMCI \citep{xavi2022teleconnection}     & \xmark & \xmark & \xmark & \xmark \\
Linear-Response \citep{Falasca_2024} & \xmark & \xmark & \xmark & \xmark \\
LEAP \citep{yao2021learning}   & \cmark & \cmark & \xmark & \cmark\\
TDRL \citep{zhao2023generative}    & \cmark & \cmark & \xmark & \cmark \\
CDSD \citep{brouillard2024causal}   & \cmark & \cmark& \xmark & \xmark \\
\rowcolor{gray!20} 
\textbf{SPACY (ours)}  & \cmark & \cmark$^*$ & \cmark & \cmark \\
\bottomrule
\end{tabular}

\caption{{Assumptions of various spatiotemporal causal discovery algorithms. SPACY supports a non-linear mapping from latent variables to observations via an invertible transformation of the product between latent time series and spatial factors (denoted by $*$).}}
\label{tab:comparison}
\end{table*}
}

Despite the plethora of work on causal structure learning from time series data \citep{granger1969investigating, hyvarinen2010estimation, runge2020discovering, tank2021neural, gong2022rhino, cheng2023cutsneuralcausaldiscovery}, they face significant challenges for  high-dimensional spatiotemporal data. A primary reason is scalability. The high dimensionality of gridded data makes it difficult for many of these techniques, especially those relying on conditional independence tests, to scale effectively \citep{glymour2019review}. Additionally, spatially proximate points often exhibit redundant and highly correlated time series. Conditioning on nearby correlated points can obscure true causal relationships between distant locations, reducing the statistical power of conditional independence tests and leading to inaccurate results \citep{xavi2022teleconnection}.

Recent advances in spatiotemporal causal discovery typically follow a two-stage process: first, dimensionality reduction is applied to extract a small number of latent time series from the original grid of time series; then, causal discovery is performed on these low-dimensional representations. Examples of this approach include \citet{xavi2022teleconnection} and \citet{Falasca_2024}. However, these two-stage approaches perform dimensionality reduction independent of the causal structure, potentially leading to latent representations that obscure the relationships among causally relevant entities. Another important line of research is causal representation learning from time series data \citep{scholkopf2021toward}. While approaches like \citet{yao2021learning, yao2022temporally, chencaring} infer latent time series from high-dimensional data, they do not incorporate spatial priors, making them less suitable for spatiotemporal causal discovery. \citet{brouillard2024causal} learn to map each  time series to a latent variable under the single-parent assumption, that is, each observed variable is influenced by only one latent variable. However, this assumption can be overly restrictive in spatiotemporal systems where observed variables are often influenced by multiple interacting latent factors (e.g., atmospheric patterns, ocean currents) that jointly drive their behavior.


We present SPAtiotemporal Causal DiscoverY (SPACY), a novel causal representation learning framework for spatiaotemporal data, to address these limitations (Figure \ref{fig:spacy_overview}). To model spatial variability on the grid, we introduce spatial factors, parametrized by spatial kernel functions such as Radial Basis Functions (RBFs). These spatial factors aggregate proximate grid points and map the observed variables to their corresponding latent time series. Since SPACY performs causal discovery on these lower-dimensional representations, SPACY is naturally scalable. We derive an evidence lower bound to learn the spatial factors, latent time series as well as the causal graph. Our approach jointly infers both the latent time series and the underlying causal graph in an \textit{end-to-end} manner.


We also prove the identifiability of our framework for a continuous spatial domain with an infinite resolution. \change{We show that when the observations are a nonlinear, invertible function of the product of latent series and spatial factors, we can leverage the overdetermined structure of the system to recover spatial factors and latent time series (up to permutation and scaling)}. Notably, compared to previous works, our framework can handle instantaneous edges while also allowing observed variables to be associated with multiple latent parents.

Our main contributions can be summarized as follows.
\begin{enumerate}[noitemsep, topsep=0pt]
    \item We introduce SPACY, a novel variational inference-based spatiotemporal causal discovery framework that jointly infers latent time series and the underlying causal graph.
    \item Theoretically, we analyze the identifiability of our system for continuous spatial domains. We show that the latent factors are identifiable up to permutation and scaling under both linear and nonlinear invertible mappings between the latent and observed variables.
    \item Experimentally, we demonstrate the strong performance of our method on both synthetic and real-world datasets. SPACY accurately recovers causal links in scenarios involving both linear and nonlinear interactions, including settings where existing methods fail. SPACY is also highly scalable, and can infer causal links from high-dimensional grids (upto $250\times 250$).
\end{enumerate}

\section{Related Work}
\textbf{Causal Discovery from Time Series data.} A prominent approach to time series causal discovery is based on Granger causality \citep{granger1969investigating}, and its extension using neural networks \citep{khanna2019economy, tank2021neural, lowe2022amortized, cheng2023cutsneuralcausaldiscovery, cheng2024cuts+}.  However, Granger causality only captures predictive relationships and ignores instantaneous effects, latent confounders, and history-dependent noise \citep{peters2017elements}. The Structural Causal Model (SCM) framework, as implemented in  \citet{hyvarinen2010estimation, pamfil2020dynotears, gong2022rhino, wang2024neuralstructurelearningstochastic}, can theoretically overcome these limitations by explicitly modeling causal relationships between variables. Another line of work \citep{runge2019detecting, runge2020discovering}  extends the conditional independence testing-based PC \citep{spirtes2000causation} to time series. However, these methods face scaling and accuracy challenges when applied to spatiotemporal data due to high dimensionality and spatial correlation effects that can mask true causal relationships between distant locations  \citep{xavi2022teleconnection}.

\textbf{Causal Representation Learning.} The primary objective of causal representation learning is to extract high-level causal variables and their relationships from temporal data. \citet{lippe2022citris, lippecausal} explore this topic specifically in the context of interventional time series data. Meanwhile, \citet{yao2021learning, yao2022temporally, chencaring, moriokacausal, li2024identification, lachapelle2024nonparametric} establish identifiability conditions for reconstructing latent time series from observational data. However, their theorems  rely on various assumptions about the underlying latent dynamics such as no instantaneous effects, sufficient variability or sparsity. These conditions can be challenging to validate in practice. They also do not consider the spatial structure critical to spatiotemporal causal discovery. \change{Another line of research is Dynamic Causal Modeling \citep{FRISTON20031273DCM, STEPHAN20103099DCM, FRISTON2013172DCM, Friston2022DCM}, which infers causal relationships in dynamical systems and has been applied to neuroscience. However, DCM assumes that the parameters of the forward model (i.e., the relationship between the latent and observed variables) are known \emph{a priori}.} In contrast, SPACY guarantees identifiability of latent time series from spatiotemporal data with minimal assumptions about the latent-generating process. 

 \textbf{Spatiotemporal Causal Discovery.} Numerous studies have extended Granger causality to spatiotemporal settings, particularly in climate science \citep{GrangerCausalityofCoupledClimateProcessesOceanFeedbackontheNorthAtlanticOscillation, lozano2009grangerspatial, Grangercausalitycarbondioxide}. Another approach to spatiotemporal causal discovery is to perform dimensionality reduction to obtain a smaller number of latent time series and then infer a causal graph among the latent variables \citep{xavi2022teleconnection, Falasca_2024}. A key limitation of these methods is that dimensionality reduction occurs independently of the causal structure in the data. Consequently, the latent variables may not correspond to causally relevant entities. Some studies like \citet{sheth2022stcd} address specific spatial dependencies relevant to the considered domain. \citet{zhao2023generativecausal} extend \citet{yao2022temporally} by incorporating spatial structures using graph convolutional networks. \citet{brouillard2024causal, boussard2023towards} adopt the single-parent assumption, where each observed variable is influenced by only one latent variable. On the other hand, SPACY allows multiple latent parents per observed variable.

\section{SPACY: Spatiotemporal Causal Discovery}



\textbf{{Preliminaries.}} {A Structural Causal Model \citep{Pearl_2009} (SCM) defines the causal relationships between variables in the form of functional equations. Formally, an SCM over $D$ variables \change{in a time-varying system, with time denoted by $t \in [T]$,} consists of a 5-tuple $\displaystyle \langle \mathcal{X}^{(1:T)}, \varepsilon^{(1:T)}, \mathbf{G}, \mathcal{F}, p_{\varepsilon_i^{(1:T)}}(\cdot) \rangle$:}
\change{
\begin{enumerate}[noitemsep, topsep=0pt]
    \item Observed variables {$\mathcal{X}^{(t)} = \{\Xt_1, \dots, \Xt_D \}$};
    \item Noise variables {$\varepsilon^{(t)} = \{ \varepsilon_1^{(t)}, \dots, \varepsilon_D^{(t)} \}$} which influence the observed variables;
    \item A \textit{Directed Acyclic Graph} (DAG) $\mathbf{G}$, which denotes the causal links among the members of $\mathcal{X}^{(1:T)}$ upto a maximum time-lag of $\tau$.
    \item {A set of $D$ functions $\mathcal{F} = \{f_1, \dots, f_D\}$ which determines $\mathcal{X}^{(t)}$ through the equations $\Xt_i = f_i(\text{Pa}_{\mathbf{G}}^i(\leq t), \varepsilon_{i}^{(t)})$, where $\text{Pa}_{\mathbf{G}}^i(\leq t) \subset \mathcal{X}^{(t-\tau:t)}$ denotes the parents of node $i$ in $\mathbf{G}$;}
    \item {$p_{\varepsilon_i^{(t)}}$, which describes a distribution over noise $\varepsilon_i^{(t)}$.}
\end{enumerate}
}

\begin{figure*}
    \centering
    \includegraphics[width=0.85 \textwidth]{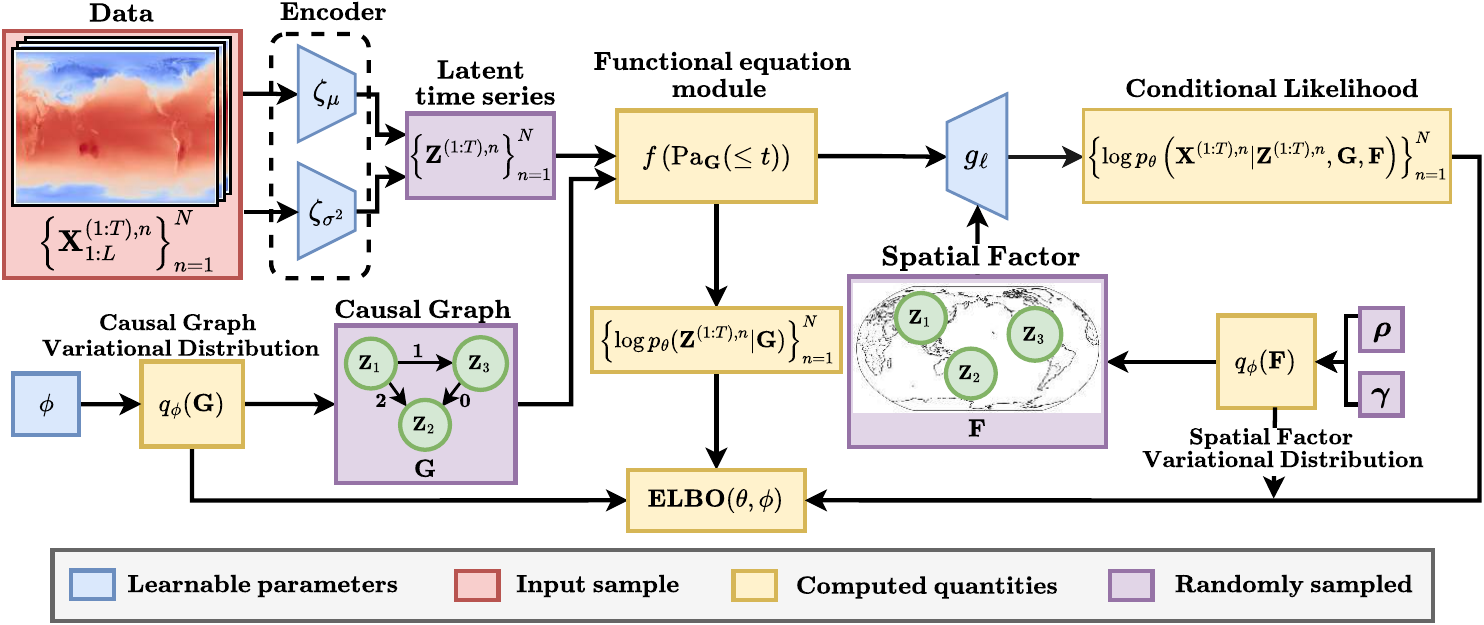}
    
    \caption{Overview of the ELBO calculation for SPACY. The model processes spatiotemporal data $\left\{\mathbf{X}_{1:L}^{(1:T), n}\right\}_{n=1}^N$ to infer latent time series $\left\{\mathbf{Z}_{1:D}^{(1:T), n}\right\}_{n=1}^N$, where $D \ll L$. Causal relationships are modeled using a DAG $\mathbf{G}$ sampled from $q_\phi(\mathbf{G})$. Latent time-series are mapped to grid locations via spatial factors $\mathbf{F}$ sampled from $q_\phi(\mathbf{F})$. Arrows in $\mathbf{G}$ are labeled with edge time-lags.}
    \label{fig:spacy_model}
\end{figure*}
\textbf{Problem Setting.}
We are given $N$ samples of $L$-dimensional time series with $T$ timesteps each. These $L$ time series are arranged in a $K$-dimensional grid of shape $L_1 \times \ldots \times L_K$ such that $L=\prod_{k=1}^K L_k$, with the spatial coordinates scaled to lie in $\mathcal{G} = [0, 1]^K$. In our setting, we consider $K=2$. We denote the observational time series as $\left\{\mathbf{X}^{(1:T), n}_{1:L} \right\}_{n=1}^{N}$. We assume that the dynamics of the observed data are driven by interactions in a smaller number of \textit{latent} time series. We denote the $D$-dimensional latent time series for each of the $N$ samples as $\left\{ \mathbf{Z}^{(1:T), n}_{1:D} \right\}_{n=1}^N$, with $D << L$. The latent time series is stationary with a maximum time lag of $\tau$, meaning the present is influenced by up to $\tau$ past timesteps. Interactions in the latent time series follow an SCM represented by a DAG $\mathbf{G}$. Our goal is to infer the latent time series $\left\{ \mathbf{Z}^{(1:T), n}_{1:D} \right\}_{n=1}^N$ and the causal graph $\mathbf{G}$ in an unsupervised manner.

\begin{wrapfigure}[12]{r}{0.20\textwidth}
  \begin{center}
  \vspace{-35pt}
    \includegraphics[width=0.20\textwidth]{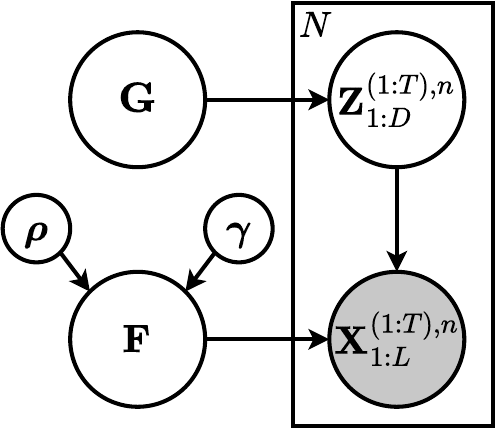}
  \end{center}
  \caption{Probabilistic graphical model for SPACY. Shaded circles are observed and hollow circles are latent.}
  \label{fig:spacy_pgm_main}
\end{wrapfigure}

    

\subsection{Forward Model} \label{sec:forward_model} 
We formalize our assumptions about the data generation process using a probabilistic graphical model (Figure \ref{fig:spacy_pgm_main}). We assume that the latent time series $\mathbf{Z}$ is generated by an SCM with causal graph $\mathbf{G}$. The number of latent variables $D$ is a hyperparameter. The spatial correlations between grid points are captured by the spatial factors $\mathbf{F} \in \mathbb{R}^{L \times D}$, which map the latent time series $\mathbf{Z}^{(1:T)}_{1:D} \in \mathbb{R}^{D \times T}$ to the observed time series $\mathbf{X}^{(1:T)}_{1:L} \in \mathbb{R}^{L \times T}$. 
Specifically, the observational time series is generated by applying a grid point-wise non-linearity $g_\ell$, to the product of the spatial factors and latent time series, with additive Gaussian noise. \change{$\ell$ denotes the index of the grid points.} 

\begin{align}
\mathbf{X}_\ell^{(t)} &= g_\ell \left( \left[ \mathbf{F}\mathbf{Z} \right]_\ell^{(t)} \right) + \varepsilon_\ell^{(t)}, \quad
\varepsilon_\ell^{(t)} \sim \mathcal{N}(0, \sigma^2_\ell) \nonumber \\
g_\ell(x) &= \Xi \left( \left[x, \mathscr{E}_\ell \right] \right), \quad \mathscr{E}_\ell \in \mathbb{R}^{f}   \label{eqn:decoder}
\end{align}
We implement the nonlinearity $g_\ell$ as an MLP (multilayer perceptron) $\Xi$ shared across all grid-points, with concatenated embeddings $\mathscr{E} \in \mathbb{R}^{L \times f}$ of dimension $f$.





\textbf{Latent SCM.} We model the latent SCM describing $\mathbf{Z}^{(t)}$ as an additive noise model \citep{hoyer2008nonlinear}:
\begin{equation*}
    \mathbf{Z}_d^{(t)} = f_{d}\left(\text{Pa}^d_\mathbf{G}(<t), \text{Pa}^d_\mathbf{G}(t)\right) + \eta_d^{(t)}, 
\end{equation*}
\change{Note that $\eta_d^{(t)}$ is distinct from the additive Gaussian noise $\varepsilon_\ell^{(t)}$ at the grid-level}. The causal graph $\mathbf{G}$ specifies the causal parents of each node, represented by a temporal adjacency matrix with shape $(\tau+1) \times D \times D$. The parent nodes from previous and current time steps are denoted by $\text{Pa}^d_\mathbf{G}(<t)$ and $\text{Pa}^d_\mathbf{G}(t)$ respectively.  We assume that $\Zt_d$ is influenced by at most $\tau$ preceding time steps, i.e., $\text{Pa}_\mathbf{G}(<t) \subseteq \{ \mathbf{Z}^{(t-1)}, \dots, \mathbf{Z}^{(t-\tau)}\}$. $\mathbf{G}^{(1:\tau)}$ represents the lagged relationships and $\mathbf{G}^{(0)}$ represents the instantaneous edges. The time-lag $\tau$ is treated as a hyperparameter. 

 \change{In principle, SPACY is compatible with any differentiable temporal causal discovery algorithm. In this work, we choose Rhino \citep{gong2022rhino} to model the functional relationships because of its flexibility. It is an identifiable framework that captures both instantaneous effects and history-dependent noise.} We parameterize the structural equations $f_d$ using MLPs $\xi_f$ and $\lambda_f$ shared across all nodes. We use trainable embeddings $\mathcal{E} \in \mathbb{R}^{(\tau+1)\times D \times D}$ with embedding dimension $e$ to distinguish between nodes. $f_d$ is defined as:

\begin{equation}
\resizebox{0.48\textwidth}{!}{$
  f_d\left( \text{Pa}_{\mathbf{G}}(\leq t) \right) = \xi_f \left( \sum_{k=0}^\tau \sum_{j=1}^{D} \mathbf{G}^{(k)}_{j,d} \times \lambda_f \left( \left[ \mathbf{Z}^{(t-k)}_{j}, \mathcal{E}^{k}_{j} \right], \mathcal{E}_{0}^{d} \right) \right)
$},
\label{eqn:rhino_eqn}
\end{equation}
\change{where $\mathbf{G}_{j,d}^{(k)}$ denotes the presence of an edge from node $j$ to $d$ at the $k^\text{th}$ time-lag.}
The noise model is based on conditional spline flows \citep{durkan2019neural}, with the parameters of the spline flow predicted by MLPs $\xi_\eta$ and $\lambda_\eta$, which share a similar architecture to $\xi_f$ and $\lambda_f$. 

\textbf{Spatial Factors.} The low-dimensional latent time series are mapped to the high-dimensional grid by the spatial factors $\mathbf{F} \in \mathbb{R}^{L \times D}$, where the $d^\text{th}$ column represents the influence of the $d^\text{th}$ latent variable on each grid location. To effectively capture the correlation between spatially proximate locations under a single latent variable, we model the spatial factors using kernel functions. In theory, any linearly independent, real analytic family of functions would work (see Section \ref{sec:identifiability}). In practice, we find that radial basis functions (RBFs) work quite well, following \citet{manning2014topographic,sennesh2020neuraltopographicfactoranalysisrbf, 2021farnoosh} (see Appendix \ref{sec:ablation_study} for an ablation study). RBFs not only ensure locality, they are also smooth functions that are parameter-efficient. We assume a uniform prior over the grid $\mathcal{G}$ for the center parameter $\boldsymbol{\rho}_d$ of each kernel, and that the scale parameter $\boldsymbol{\gamma}_d$ comes from a standard normal distribution. Mathematically,
\begin{align}
\boldsymbol{\rho}_d &\sim U[0, 1]^K, \boldsymbol{\gamma}_d \sim \mathcal{N}\left(0, I \right), \label{eqn:f_prior} \nonumber \\
\mathbf{F}_{\ell d} &= \text{RBF}_d(x_\ell; \boldsymbol{\rho}_d, \boldsymbol{\gamma}_d) = \exp\left( -\frac{||x_\ell - \boldsymbol{\rho}_d||^2}{\exp(\boldsymbol{\gamma}_d)} \right),
\end{align}
where $x_\ell$ denotes the spatial coordinates of the $\ell^\text{th}$ grid point.

\subsection{Variational Inference}

Let $\theta$ denote the parameters of the forward model. The likelihood $p_\theta \left( \mathbf{X} \right)$ is intractable due to the presence of latent variables $\mathbf{Z}$, $\mathbf{G}$ and $\mathbf{F}$.  We propose using variational inference, optimizing an evidence lower bound (ELBO) instead.

\begin{proposition}
    The data generation model described in Figure \ref{fig:spacy_pgm_main} admits the following evidence lower bound (ELBO):
    \begin{small}
    \begin{align}
        & \log p_{\theta}\left( \mathbf{X}^{(1:T), 1:N} \right) \geq \sum_{n=1}^N \Bigg\{ \mathbb{E}_{q_{\phi}(\mathbf{Z}^{(1:T), n}|\mathbf{X}^{(1:T), n}) q_{\phi}(\mathbf{G}) q_{\phi}(\mathbf{F})}  \nonumber \\
        & \qquad \Bigg[ {\log p_{\theta}\left( \mathbf{X}^{(1:T), n} | \mathbf{Z}^{(1:T), n}, \mathbf{F} \right)} \nonumber + \Big[ \log p_{\theta}\left(\mathbf{Z}^{(1:T), n} | \mathbf{G} \right) \\
        &\qquad - \log q_{\phi}\left(\mathbf{Z}^{(1:T), n} | \mathbf{X}^{(1:T), n}\right) \Big] \Bigg] \Bigg\} - \text{KL}\left( q_\phi(\mathbf{G}) \mid \mid  p(\mathbf{G})  \right) \nonumber\\
         & \qquad - \text{KL}\left( q_\phi(\mathbf{F}) \mid \mid  p(\mathbf{F})  \right) = \text{ELBO}(\theta,\phi)     \label{eqn:elbo}
    \end{align}
    \end{small}

\end{proposition}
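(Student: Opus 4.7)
The plan is a standard variational lower bound derivation for the joint model depicted in Figure~\ref{fig:spacy_pgm_main}. The key observation is that, under that graphical model, the joint density factorizes as
\begin{equation*}
p_\theta(\mathbf{X}^{(1:T),1:N},\mathbf{Z}^{(1:T),1:N},\mathbf{G},\mathbf{F}) = p(\mathbf{G})\, p(\mathbf{F}) \prod_{n=1}^N p_\theta(\mathbf{Z}^{(1:T),n}\mid \mathbf{G})\, p_\theta(\mathbf{X}^{(1:T),n}\mid \mathbf{Z}^{(1:T),n},\mathbf{F}),
\end{equation*}
where $\mathbf{G}$ and $\mathbf{F}$ are shared across the $N$ samples, and each $\mathbf{X}^{(1:T),n}$ depends on the globals only through $\mathbf{Z}^{(1:T),n}$ and $\mathbf{F}$. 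I would mirror this structure in the variational family by taking $q_\phi$ to factor as $q_\phi(\mathbf{G})\, q_\phi(\mathbf{F}) \prod_n q_\phi(\mathbf{Z}^{(1:T),n}\mid \mathbf{X}^{(1:T),n})$, which is exactly the posterior approximation used in Figure~\ref{fig:spacy_model}.

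Next, I would start from $\log p_\theta(\mathbf{X}^{(1:T),1:N})$, multiply and divide inside the marginalizing integral by $q_\phi$, and apply Jensen's inequality to pull the logarithm inside the expectation, producing
\begin{equation*}
\log p_\theta(\mathbf{X}^{(1:T),1:N}) \;\ge\; \mathbb{E}_{q_\phi}\!\left[\log \frac{p_\theta(\mathbf{X}^{(1:T),1:N},\mathbf{Z}^{(1:T),1:N},\mathbf{G},\mathbf{F})}{q_\phi(\mathbf{Z}^{(1:T),1:N},\mathbf{G},\mathbf{F}\mid \mathbf{X}^{(1:T),1:N})}\right].
\end{equation*}
Substituting the two factorizations above and splitting the logarithm separates the expression into three additive pieces: a sum over $n$ of the $\mathbf{Z}^{(1:T),n}$-dependent terms, a term involving only $p(\mathbf{G})/q_\phi(\mathbf{G})$, and a term involving only $p(\mathbf{F})/q_\phi(\mathbf{F})$.

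Finally, I would identify the last two pieces as $-\mathrm{KL}(q_\phi(\mathbf{G})\,\|\,p(\mathbf{G}))$ and $-\mathrm{KL}(q_\phi(\mathbf{F})\,\|\,p(\mathbf{F}))$ by the definition of KL divergence. For the per-sample piece, since $p_\theta(\mathbf{X}^{(1:T),n}\mid \mathbf{Z}^{(1:T),n},\mathbf{F})$ does not depend on $\mathbf{G}$ and $p_\theta(\mathbf{Z}^{(1:T),n}\mid \mathbf{G})$ does not depend on $\mathbf{F}$, the expectation reduces to one against $q_\phi(\mathbf{Z}^{(1:T),n}\mid \mathbf{X}^{(1:T),n})\, q_\phi(\mathbf{G})\, q_\phi(\mathbf{F})$, exactly as in \eqref{eqn:elbo}. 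Collecting the three contributions yields the claimed ELBO.

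There is no real technical obstacle here; the derivation is mechanical once the variational factorization is fixed. The only point that warrants care is ensuring that the shared globals $\mathbf{G}$ and $\mathbf{F}$ are not double-counted when pulling the sum over $n$ outside the expectation: because $q_\phi(\mathbf{G})$ and $q_\phi(\mathbf{F})$ are sample-independent, their log-ratio terms appear once overall rather than once per $n$, which is precisely why the two KL divergences sit outside the outer sum in \eqref{eqn:elbo}.
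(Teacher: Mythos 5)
Your derivation is correct and matches the paper's own proof essentially step for step: both multiply and divide by the factorized variational distribution $q_\phi(\mathbf{Z}\mid\mathbf{X})\,q_\phi(\mathbf{G})\,q_\phi(\mathbf{F})$, apply Jensen's inequality, exploit the conditional-independence factorizations of $p_\theta(\mathbf{X}\mid\mathbf{Z},\mathbf{F})$ and $p_\theta(\mathbf{Z}\mid\mathbf{G})$ across samples, and regroup terms into the per-sample reconstruction/latent terms plus the two global KL divergences. Your closing remark about the shared globals appearing once rather than per sample is exactly the point the paper's grouping step relies on.
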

See Section \ref{section:elbo_derivation} for the derivation. We outline the computation of the ELBO in Figure \ref{fig:spacy_model}. $q_\phi$ represents the variational distribution parameterized by $\phi$. The first term $\log p_{\theta}(\mathbf{X}^{(1:T), n} | \mathbf{Z}^{(1:T), n}, \mathbf{F})$ in \eqref{eqn:elbo} represents the conditional likelihood of the observed data $\mathbf{X}^{(1:T), n}$ conditioned on $\mathbf{Z}^{(1:T), n}$ and $\mathbf{F}$. The remaining terms represent the KL divergences of the variational distributions from their prior distributions.  Next, we describe the design of the variational distributions, with the full implementation details   in Appendix \ref{app:implementation_details}.

\textbf{Causal graph $q_\phi(\mathbf{G})$.} The variational distribution of the causal graph is modeled as a product of independent Bernoulli distributions, indicating the presence or absence of an edge.
\begin{equation*}
    q_\phi \left( \mathbf{G} \right) = \prod_{k=0}^{\tau} \prod_{i,j=1}^D \text{Bernoulli}\left( \mathcal{W}_{i, j}^{k} \right),
\end{equation*}
where $\mathcal{W} \in \mathbb{R}^{(\tau+1) \times D \times D}$ is a learned parameter.
To estimate the expectation over $q_\phi(\mathbf{G})$, we use Monte Carlo sampling by drawing a single graph sample, employing the Gumbel-Softmax trick \citep{jang2016categorical}. \change{We ensure that the learned graph $\mathbf{G}$ is a DAG using the acyclicity constraint from \citet{zheng2018dags}, which is added to the prior $p(\mathbf{G})$.}

\textbf{Spatial Factor $q_\phi(\mathbf{F})$.} We model the variational distributions of the center ${\boldsymbol{\rho}_d}$ and scale  ${\boldsymbol{\gamma}_d}$ as normal distributions with learnable mean and log-variance parameters $(\boldsymbol{\mu}_{\rho_d}, v_{\rho_d}), (\boldsymbol{\mu}_{\gamma_d}, v_{\gamma_d})$. To sample from $q_\phi(\mathbf{F})$, we first sample ${\boldsymbol{\rho}_d}$ and ${\boldsymbol{\gamma}_d}$ using the reparameterization trick \citep{kingma2014}, and then compute the RBF kernel using these parameters. We ensure that the coordinates of the center lie in $[0,1]^K$ by applying the sigmoid function.
\begin{align*}
    {\boldsymbol{\rho}_d} &\sim \mathcal{N} \left(\boldsymbol{\mu}_{\rho_d}, \exp \left( v_{\boldsymbol{\rho}_d} \right)I \right), {\boldsymbol{\gamma}_d} \sim \mathcal{N}\left(\boldsymbol{\mu}_{\boldsymbol{\gamma}_d}, \exp \left( v_{\gamma_d} \right) I\right)
\end{align*}
\begin{equation*}
    \resizebox{0.48\textwidth}{!}{
    $\mathbf{F}_{\ell d} = \text{RBF}_d(x_\ell; \boldsymbol{\rho}_d, \boldsymbol{\gamma}_d) = \exp\left( -\frac{||x_\ell - \text{sigmoid}(\boldsymbol{\rho}_d)||^2}{\exp(\boldsymbol{\gamma}_d)} \right)$}
\end{equation*}
\textbf{Latent Time Series $q_{\phi}(\mathbf{Z}^{(1:T), n}|\mathbf{X}^{(1:T), n})$.} To obtain latents from the observations, we use a neural network encoder. Specifically, we assume  $q_\phi(\mathbf{Z}^{(1:T), n}|\mathbf{X}^{(1:T), n})$ to be a normal distribution whose mean and log-variance are parameterized by MLPs $\zeta_\mu$ and $\zeta_{\sigma^2}$. 
\begin{align*}
\resizebox{0.48\textwidth}{!}{$
      q_{\phi} \left(\mathbf{Z}^{(1:T), n}|\mathbf{X}^{(1:T), n}\right) = \mathcal{N} \left(\zeta_\mu(\mathbf{X}^{(t), n}), \exp \left( \zeta_{\sigma^2}(\mathbf{X}^{(t), n})\right) \right).
$}
\end{align*}
We sample the latents $\mathbf{Z}$ from the variational distribution using the reparameterization trick.

\subsection{Multivariate Extension}
In many applications, it is important to model interactions between different variates on the same grid. For instance, understanding how global temperature patterns influence precipitation requires analyzing multivariate relationships. We extend SPACY to handle such multivariate time series data, where the observational time series $\mathbf{X} \in \mathbb{R}^{V \times L \times T}$ consists of $V \geq 1$ variates. We modify SPACY to learn variate-specific spatial factors $\mathbf{F}^{(v)}$ by specifying the number of nodes $D_v$ for each variate $v$. We then combine latent representations from all variates and perform causal discovery with them. For more implementation details, refer to Appendix \ref{app:multi-variate}.

\section{Identifiability Analysis}
\label{sec:identifiability}


We analyze the identifiability of the spatiotemporal generative model from Section \ref{sec:forward_model}. Identifiability ensures that the latent variables can be uniquely recovered from observations, up to permutation and scaling. Prior work, such as \citep{yao2021learning, yao2022temporally, lachapelle2024nonparametric}, establishes identifiability in temporal settings with nonlinear invertible mixing but uses restrictive assumptions about the latent process, like the absence of instantaneous effects, sparsity of the causal graph, or sufficient variability. In contrast, we demonstrate identifiability in our spatiotemporal setting, \change{where the observations are a nonlinear function of the product of latent time series and spatial factors,} without relying on such assumptions, by explicitly leveraging spatial correlations. This is made possible by the overdetermined nature of the problem, where the number of spatial locations exceeds the latent dimension, allowing for identifiability with minimal assumptions about the latent process.

Specifically, we generalize the discrete grid $\mathcal{G}$ to a continuous spatial domain with infinite spatial resolution. This generalization allows us to the prove identifiability condition using tools from real analysis.  Specifically, for every point $\ell$ in the $K$-dimensional spatial domain $\mathcal{G} = (0,1)^K$, we observe a corresponding time series $\mathbf{X}(\ell)$, representing a $T$-dimensional random variable. We model the spatial factors as function evaluations of a family of linearly independent functions. Notably, the family of RBF functions is one such family of functions \citep{smola1998learning}. We introduce the notion of a spatial factor process (SFP), and mathematically describe the identifiability of SFPs.
\begin{definition}[Spatial Factor Process]
    \label{def:spatial_factor_process}
    Let $\mathcal{G} = (0,1)^K$ be a $K-$dimensional spatial domain, and let $\{\Zt\}_{t=1}^T$ denote the latent causal process, where $\Zt \in \mathbb{R}^D$. Let $\mathcal{F} = \left\{F_{\psi_1}, ..., F_{\psi_D} \right\}$ be a finite linearly independent family of functions defined on $\mathcal{G}$, and $\mathscr{G} = \big\{ g_{\ell} \colon \mathbb{R} \to \mathbb{R} \ \big| \ \ell \in \mathcal{G} \big\}$ be a family of functions defined for each point $\ell$ on the grid $\mathcal{G}$. Let $p_{\varepsilon_{\ell}^{(t)}}$ be a zero-mean noise distribution. The Spatial Factor Process $\displaystyle \text{SFP}\left(\left\{\Zt\right\}_{t=1}^T, \mathcal{F}, \mathscr{G}, p_{\varepsilon_{\ell}^{(t)}} \right)$, denoted by $\mathbf{X}$, is defined as follows: For each location $\ell \in \mathcal{G}$ on the grid and $t \in [T]$,
\begin{equation} \Xt (\ell) = g_{\ell} \left (\Fx^\top \Zt \right) + \varepsilon_{\ell}^{(t)}, \label{eqn:x_eqn_2}
    \end{equation}
    where
    \begin{equation*}
        \Fx = \begin{bmatrix}
            F_{\psi_1}(\ell), \ldots, F_{\psi_D}(\ell)
        \end{bmatrix}^\top.
    \end{equation*}
\end{definition}
We can define the identifiability of such SFPs as below:
\begin{definition}[Identifiability of SFPs]
  Let $\mathbf{X}$ denote the true generative SFP, specified by $\displaystyle \left(\left\{\Zt\right\}_{t=1}^T, \left\{ F_{\psi_i} \right\}_{i=1}^D , \{ g_{\ell} \mid \ell \in \mathcal{G}\}, p_{\varepsilon_{\ell}^{(t)}} \right)$  with observational distribution $\displaystyle p(\Xt | \Zt; \mathbf{F})$, where 
  \begin{equation}
      \Xt(\ell) = g_{\ell} \left( \Fx^\top \Zt \right) + \varepsilon_{\ell}^{(t)}, \label{eqn:xt1_main}
  \end{equation} 
  with $\varepsilon_{\ell}^{(t)} \sim p_{\varepsilon_{\ell}^{(t)}}$ for some zero-mean noise distribution $p_{\varepsilon_{\ell}^{(t)}}$. Suppose we have a learned SFP $\displaystyle \widehat{\mathbf{X}}$, specified by $\displaystyle \left(\left\{\hatZt \right\}_{t=1}^T, \left\{ F_{\widehat{\psi}_i} \right\}_{i=1}^D, \{ \widehat{g}_{\ell} \mid \ell \in \mathcal{G}\}, \change{p_{{\varepsilon}_{\ell}}^{(t)}} \right)$
  with observational distribution $\displaystyle p( \hatXt | \hatZt; \widehat{\mathbf{F}})$, where
  \begin{equation}
  \displaystyle \hatXt(\ell) = \widehat{g}_{\ell} \left( \hatFx^\top \hatZt \right) + \widehat{\varepsilon}_{\ell}^{(t)}, \label{eqn:hatxt1_main}   
  \end{equation}
  with $\widehat{\varepsilon}_{\ell}^{(t)} \sim \change{p_{{\varepsilon}_{\ell}^{(t)}}}$. The latent process $\{\Zt\}$ is said to be \textit{identifiable} upto permutation and scaling, if:
  \begin{align*}
    p \left(\Xt(\ell) | \Zt; \Fx \right) = p \left( \hatXt(\ell) | \hatZt; \hatFx \right)&, \change{\forall \ell \in \mathcal{G}, t \in [T]} \\
    \implies \widehat{\mathbf{Z}}_t = PS \Zt, \text{ and } \left\{ F_{\psi_i}\right\}_{i=1}^D = &\left\{ F_{\widehat{\psi}_i}\right\}_{i=1}^D,
  \end{align*}
  for some permutation matrix $P$ and scaling matrix $S$.
\end{definition}

We first prove the identifiability of SFPs in the absence of nonlinearity, ($g_\ell = \text{Id}$) by leveraging the linear independence of the spatial factors $\mathcal{F}$. We then extend the proof to the general setting with nonlinear, invertible mapping, with real analytic spatial factor functions.

\setcounter{theorem}{0}

\begin{theorem}[Identifiability of Linear SFPs] \label{thm:linear_identifiability}
    Suppose we are given two SFPs $\mathbf{X} = \text{SFP}\left(\mathbf{Z}, \mathcal{F}, \left\{ g_{\ell} \mid \ell \in \mathcal{G} \right\}, p_{\varepsilon_{\ell}^{(t)}} \right)$ and $\widehat{\mathbf{X}} = \text{SFP}\left(\widehat{\mathbf{Z}}, \widehat{\mathcal{F}}, \left\{\widehat{g}_{\ell} \mid \ell \in \mathcal{G} \right\}, \change{p_{{\varepsilon}_{\ell}^{(t)}}} \right)$ specified by Equations \ref{eqn:xt1_main} and \ref{eqn:hatxt1_main} respectively, such that both $\mathcal{F}$ and $\widehat{\mathcal{F}}$ belong to the same (potentially infinite) family of linearly independent functions. Further, suppose that the following conditions are satisfied: 
    \begin{enumerate}[noitemsep, topsep=0pt]
        \item \textbf{Linearity:} For all $\ell \in \mathcal{G}$, $\displaystyle g_{\ell} = \widehat{g}_{\ell} = \text{Id}$, where $\text{Id}$ is the identity function.
        \item \textbf{Non-degenerate latent processes:} For all $d \in [D]$,  $\exists \,  t_0 \in [T]$ such that $\mathbf{Z}^{(t_0)}_d \neq 0$, that is, none of the time series is trivially zero. A similar condition holds for $\widehat{\mathbf{Z}}$.
        \item \change{\textbf{Characteristic function of the noise:} For all $\ell \in \mathcal{G}$ and $t \in [T]$, the set $\displaystyle \left\{ {x} \in \mathbb{R} \mid \varphi_{\varepsilon_\ell^{(t)}} ({x}) = 0\right\}$ has measure zero where $\varphi_{\varepsilon_\ell^{(t)}}$ represents the characteristic function of the density $p_{\varepsilon_\ell^{(t)}}$}.
    \end{enumerate}
     If $p(\Xt(\ell) | \Zt ; \Fx) = p \left(\hatXt(\ell)| \hatZt; \hatFx \right)$ for every $\ell \in \mathcal{G}$ and $t \in [T]$, then $\mathbf{Z} = P\tilde{\mathbf{Z}}$ and $\mathcal{F} = \widehat{\mathcal{F}}$ for some permutation matrix $P$. \label{thm:linear_sfp_identifiability}
\end{theorem}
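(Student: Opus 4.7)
The plan is to reduce the distributional equation to a pointwise functional identity in $\ell$, and then extract identifiability using the linear independence of the spatial factor family. At a high level, I will first strip away the noise to obtain a deterministic equation between the mean shifts $\Fx^\top \Zt$ and $\hatFx^\top \hatZt$, then read this off as a linear combination in the function basis $\mathcal{F} \cup \widehat{\mathcal{F}}$, and finally invoke linear independence together with non-degeneracy to match the two generating systems term by term.

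\textbf{Step 1 (denoising via characteristic functions).} Conditionally on $\Zt$ (resp.\ $\hatZt$), the observation is a deterministic shift of the noise, $\Xt(\ell) = \Fx^\top \Zt + \varepsilon_\ell^{(t)}$, and likewise for $\hatXt(\ell)$. Taking characteristic functions of both sides of the assumed equality of conditional densities, and using that the two noise distributions are identical, yields
\[
\bigl(e^{i s \, \Fx^\top \Zt} - e^{i s \, \hatFx^\top \hatZt}\bigr)\,\varphi_{\varepsilon_\ell^{(t)}}(s) \;=\; 0 \qquad \forall\, s \in \mathbb{R}.
\]
By the hypothesis that the zero set of $\varphi_{\varepsilon_\ell^{(t)}}$ has measure zero, together with the continuity of $s \mapsto e^{isa}$, the two exponentials must agree on a dense set and hence everywhere, which forces $\Fx^\top \Zt = \hatFx^\top \hatZt$ for every $\ell \in \mathcal{G}$ and $t \in [T]$.

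\textbf{Step 2 (matching families and extracting a permutation).} Writing the identity from Step~1 in coordinates, for each fixed $t$ we obtain the functional identity
\[
\sum_{i=1}^D \mathbf{Z}^{(t)}_i\, F_{\psi_i} \;-\; \sum_{j=1}^D \widehat{\mathbf{Z}}^{(t)}_j\, F_{\widehat{\psi}_j} \;\equiv\; 0 \quad \text{on } \mathcal{G}.
\]
Because $\mathcal{F}$ and $\widehat{\mathcal{F}}$ are both drawn from the same linearly independent family, their union $\mathcal{F} \cup \widehat{\mathcal{F}}$ remains linearly independent. Grouping terms by distinct basis functions, the coefficient of any $F_{\psi_i} \in \mathcal{F} \setminus \widehat{\mathcal{F}}$ must vanish for every $t$, which contradicts the non-degeneracy assumption; hence $\mathcal{F} \subseteq \widehat{\mathcal{F}}$, and by symmetry $\mathcal{F} = \widehat{\mathcal{F}}$. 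Choosing the permutation $\pi$ with $F_{\widehat{\psi}_j} = F_{\psi_{\pi(j)}}$ and substituting back into the functional identity, linear independence once more forces $\mathbf{Z}^{(t)}_i = \widehat{\mathbf{Z}}^{(t)}_{\pi^{-1}(i)}$ for every $i,t$, which is exactly $\mathbf{Z} = P \widehat{\mathbf{Z}}$ for the permutation matrix $P$ associated with $\pi^{-1}$.

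\textbf{Main obstacle.} The delicate step is Step~1: turning an abstract equality of conditional densities into a deterministic pointwise equality of the mean shifts requires the noise to be ``invertible'' in the Fourier sense, which is precisely the purpose of the characteristic-function hypothesis. Once that deterministic identity is in hand, the rest is a clean application of linear-independence bookkeeping on the function space; notably, the absence of a scaling matrix (in contrast to the nonlinear setting) is because the spatial factor functions themselves are pinned down exactly by the common basis, leaving no rescaling freedom to be traded between $\mathbf{F}$ and $\mathbf{Z}$.
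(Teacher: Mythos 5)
Your proposal is correct and follows essentially the same route as the paper: a Fourier/characteristic-function denoising step to reduce the density equality to the pointwise identity $\Fx^\top \Zt = \hatFx^\top \hatZt$ (the paper's Denoising Lemma), followed by linear-independence bookkeeping on $\mathcal{F} \cup \widehat{\mathcal{F}}$ combined with non-degeneracy to force the families to coincide and the latents to match up to a permutation. The only cosmetic difference is that the paper organizes the second step via explicit index sets $I, J$ and a matching map $\mathcal{V}$ rather than your ``coefficients of unshared basis functions must vanish'' phrasing, but the underlying argument is identical.
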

We now consider the general case where the functions $g_\ell$ may be nonlinear. When the spatial factors are analytic-meaning they can be represented by power series everywhere— we can construct invertible maps between $\Zt$ and $\hatZt$. Notably, these maps can be shown to have Jacobian matrices that are permutation and scaling matrices.

\begin{theorem}[Identifiability of General SFPs] \label{thm:nonlinear_identifiability}
    Suppose we are given two SFPs $\mathbf{X} = \text{SFP}\left(\mathbf{Z}, \mathcal{F}, \left\{ g_{\ell} \mid \ell \in \mathcal{G} \right\}, p_{\varepsilon_{\ell}^{(t)}} \right)$ and $\widehat{\mathbf{X}} = \text{SFP}\left(\widehat{\mathbf{Z}}, \widehat{\mathcal{F}}, \left\{\widehat{g}_{\ell} \mid \ell \in \mathcal{G} \right\}, \change{p_{{\varepsilon}_{\ell}^{(t)}}} \right)$ specified by Equations \ref{eqn:xt1_main} and \ref{eqn:hatxt1_main} respectively, such that both $\mathcal{F}$ and $\widehat{\mathcal{F}}$ belong to the same (potentially infinite) family of linearly independent functions. Further, suppose that the following conditions are satisfied: 
    \begin{enumerate}[noitemsep, topsep=0pt]
        \item \textbf{Diffeomorphisms:} For all $\ell \in \mathcal{G}$, $\displaystyle g_{\ell}, \widehat{g}_{\ell}$ are diffeomorphisms, that is, $g_{\ell}, \widehat{g}_{\ell}$ are invertible and continuously differentiable, and their inverses are also continuously differentiable.
        \item \textbf{Real analytic functions:} All the functions $F_{\psi_i} \in \mathcal{F}, F_{\widehat{\psi}_i} \in \widehat{\mathcal{F}}$ are real analytic. 
        \item \textbf{Non-degenerate latent processes:} For all $d \in [D]$,  $\exists \,  t_0 \in [T]$ such that $\mathbf{Z}^{(t_0)}_d \neq 0$, that is, none of the time series is trivially zero. A similar condition holds for $\widehat{\mathbf{Z}}$.
        \item \change{\textbf{Characteristic function of the noise:} For all $\ell \in \mathcal{G}$ and $t \in [T]$, the set $\displaystyle \left\{ {x} \in \mathbb{R} \mid \varphi_{\varepsilon_\ell^{(t)}} ({x}) = 0\right\}$ has measure zero where $\varphi_{\varepsilon_\ell^{(t)}}$ represents the characteristic function of the density $p_{\varepsilon_\ell^{(t)}}$}.
    \end{enumerate}
     If $p(\Xt(\ell) | \Zt ; \Fx) = p \left(\hatXt(\ell)| \hatZt; \hatFx \right)$ for every $\ell \in \mathcal{G}$ and $t \in [T]$, then $\mathbf{Z} = PS\tilde{\mathbf{Z}}$ and $\mathcal{F} = \widehat{\mathcal{F}}$ for some permutation matrix $P$ and scaling matrix $S$. \label{thm:sfp_identifiability}
\end{theorem}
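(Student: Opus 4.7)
The plan is to reduce Theorem \ref{thm:nonlinear_identifiability} to the linear case established in Theorem \ref{thm:linear_identifiability}, by showing that the nonlinear mixing $g_\ell$ can always be ``stripped away'' up to an affine distortion that is absorbed into the latents. I would carry this out in three stages.

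First, I would cancel the noise. Since both models share the same noise law and the hypothesis on $\varphi_{\varepsilon_\ell^{(t)}}$ allows deconvolution, the equality of conditional densities $p(\Xt(\ell)\mid\Zt)=p(\hatXt(\ell)\mid\hatZt)$ forces the drift terms to match: there is a map $\phi$ sending every realization $z$ of $\Zt$ to a realization $\phi(z)$ of $\hatZt$ with $g_\ell(\Fx^\top z)=\widehat g_\ell(\hatFx^\top\phi(z))$ for every $\ell\in\mathcal{G}$. Using the diffeomorphism assumption on $g_\ell,\widehat g_\ell$, define $h_\ell:=\widehat g_\ell^{-1}\circ g_\ell$, a diffeomorphism of $\mathbb{R}$, to obtain the central functional equation
\begin{equation}
\hatFx^\top\phi(z)=h_\ell(\Fx^\top z), \qquad \forall\,\ell\in\mathcal{G}. \label{eqn:prop_fe}
\end{equation}
A standard argument shows that $\phi$ is itself a $C^1$-diffeomorphism of $\mathbb{R}^D$, using the invertibility of each $h_\ell$ together with linear independence of $\mathcal{F}$ (which guarantees that the vectors $\{\Fx:\ell\in\mathcal{G}\}$ span $\mathbb{R}^D$).

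Second, I would show that $h_\ell$ is in fact affine in its argument, which is the heart of the argument. Differentiating \eqref{eqn:prop_fe} in $z$ yields the row-vector identity $\hatFx^\top J_\phi(z)=h_\ell'(\Fx^\top z)\,\Fx^\top$. Fixing $z$ and comparing two coordinates $i\neq i'$ gives the identity $F_{\psi_{i'}}(\ell)\sum_k[J_\phi(z)]_{ki}F_{\widehat\psi_k}(\ell)=F_{\psi_i}(\ell)\sum_k[J_\phi(z)]_{ki'}F_{\widehat\psi_k}(\ell)$ on $\mathcal{G}$. Because each side is real analytic in $\ell$, and a nontrivial real-analytic combination cannot vanish on an open set, linear independence of the family $\mathcal{F}\cup\widehat{\mathcal{F}}$ (and of the pairwise products that appear) forces $J_\phi(z)$ to have at most one nonzero entry per column, i.e.\ $J_\phi(z)=P(z)D(z)$ with $P(z)$ a permutation and $D(z)$ diagonal. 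Comparing \eqref{eqn:prop_fe} at two realizations with identical projection $\Fx^\top z$ (which exist by the non-degeneracy hypothesis on $\Zt$) then forces the $z$-dependence of $J_\phi$ to drop out, so $\phi$ is globally affine and hence $h_\ell(u)=c_\ell u+d_\ell$ for scalars depending only on $\ell$.

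Third, I would substitute the affine forms $\phi(z)=Mz+b$ and $h_\ell(u)=c_\ell u+d_\ell$ back into \eqref{eqn:prop_fe} and match coefficients of $z$, obtaining the linear identity $\hatFx^\top M=c_\ell\Fx^\top$ for every $\ell\in\mathcal{G}$. This is precisely the configuration covered by Theorem \ref{thm:linear_identifiability} applied to the two linear SFPs $(\Zt,\mathcal{F})$ and $(M\Zt,\widehat{\mathcal{F}})$, so invoking that theorem yields $\mathcal{F}=\widehat{\mathcal{F}}$ and identifies $M$ as a product of a permutation and a diagonal scaling; absorbing the scalars $c_\ell$ into the identification of the two families gives the desired conclusion $\hatZt=PS\Zt$.

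The main obstacle is the rigidity step in the second stage: converting the continuum of analytic identities in $\ell\in\mathcal{G}$ into a pointwise algebraic statement about $J_\phi(z)$, and then eliminating its $z$-dependence. This step leans essentially on real analyticity (to propagate vanishing on an open set to all of $\mathcal{G}$) and on linear independence of the spatial factors and of their pairwise products (to extract coefficient-wise constraints), and it requires careful handling of the coupled $(\ell,z)$ dependence that links the affine structure of $h_\ell$ to the affine structure of $\phi$.
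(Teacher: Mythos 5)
Your overall architecture matches the paper's: denoise via the characteristic-function argument, reduce to the pointwise functional equation relating $\Fx^\top \Zt$ to a composite diffeomorphism $h_\ell$ applied to $\hatFx^\top\hatZt$, show the induced latent-to-latent map has a permutation--scaling Jacobian, and finish with linear independence. The problem is in your second stage, which is the heart of the argument. After differentiating the functional equation you eliminate $h_\ell'$ by comparing two coordinates, which leaves an identity between \emph{products} of the form $F_{\psi_{i'}}(\ell)F_{\widehat\psi_k}(\ell)$ and $F_{\psi_i}(\ell)F_{\widehat\psi_k}(\ell)$, and you then invoke ``linear independence of the pairwise products that appear.'' That is not a hypothesis of the theorem, and it does not follow from linear independence of the family itself: $\{1,x,x^2\}$ is linearly independent and real analytic, yet $x\cdot x = 1\cdot x^2$, so pairwise products of members of a linearly independent analytic family can be linearly dependent. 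The paper's proof is organized precisely to avoid this. It evaluates the functional equation at a $D$-tuple $\boldsymbol\ell\in\Phi_{\mathcal F}\cap\Phi_{\widehat{\mathcal F}}$ (Lemma~\ref{lemma:overlapping_li_zero}), takes the \emph{log-determinant} of the resulting Jacobian so that the $\ell$-dependence separates into the sum $\sum_k\log|h'_{\ell_k}(\cdot)|$, and then perturbs a single evaluation point inside the open set $\Phi_{\mathcal F}\cap\Phi_{\widehat{\mathcal F}}$. This shows that $\Gamma_i(\hatZt)=\tfrac{h''_\ell}{h'_\ell}F_{\widehat\psi_i}(\ell)$ is constant in $\ell$ on a ball, so the resulting relation $F_{\widehat\psi_i}(\ell)\Gamma_j = F_{\widehat\psi_j}(\ell)\Gamma_i$ has \emph{constant} coefficients, and only linear independence of the $F_{\widehat\psi}$'s themselves (propagated to open subsets by the identity theorem, Lemma~\ref{lemma:li_open_sets}) is needed to force $h''_\ell\equiv 0$.

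A second, smaller issue: your claim that the $z$-dependence of $J_\phi$ drops out by ``comparing the equation at two realizations with identical projection $\Fx^\top z$'' does not work as stated. Two latent vectors whose projections agree for every $\ell\in\mathcal G$ must coincide, since the linearly independent family $\{\Fx\}$ spans $\mathbb R^D$; and agreement at a single $\ell$ gives no control on $J_\phi$. In the paper, global affineness of the latent-to-latent map is never established as a separate step: once $h''_\ell\equiv 0$, one differentiates the functional equation once more and reads off the permutation--scaling structure of the Jacobian directly from linear independence. Your third stage (reduction to Theorem~\ref{thm:linear_identifiability}) would be reasonable if stage two were repaired, though you would still need to justify that the slopes $c_\ell$ of the affine maps $h_\ell$ can be taken constant in $\ell$ before the linear theorem applies.
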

The detailed mathematical statements and proofs for these results are provided in Appendix \ref{app:identifiability}.


\begin{figure*}[t]
    \centering
    \begin{overpic}[width=0.9\textwidth]{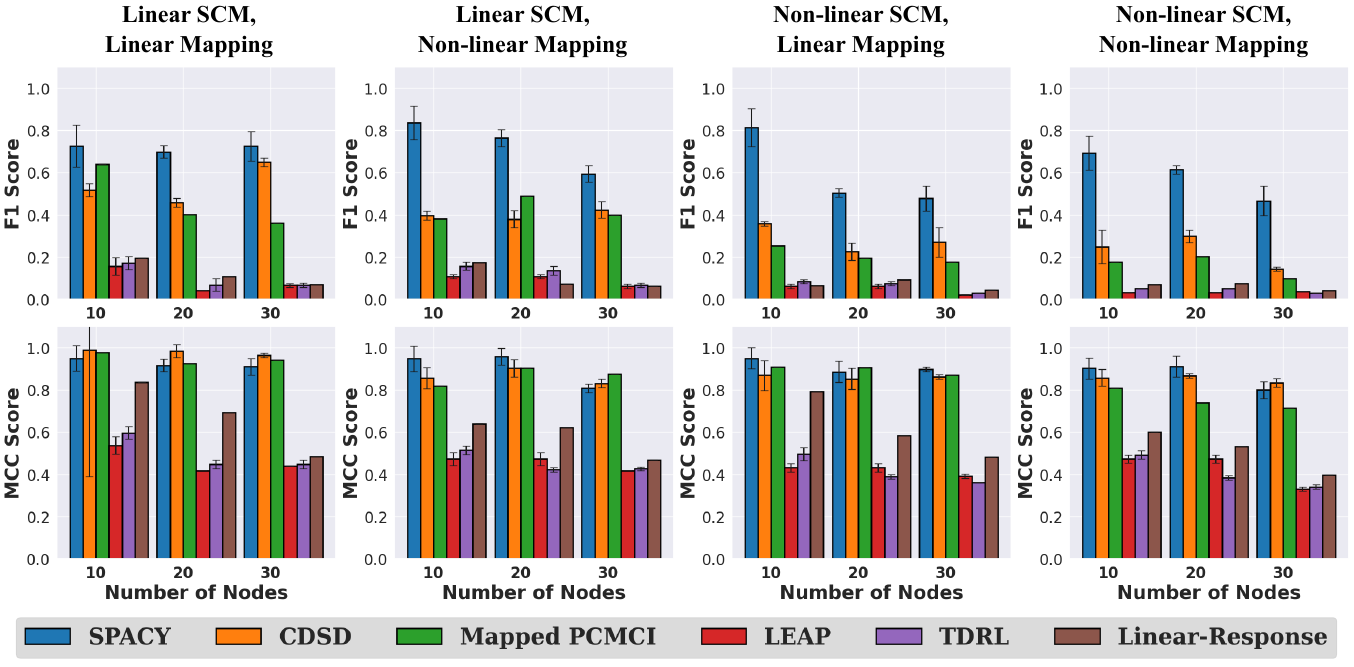}
    \end{overpic}
    
    \caption{Results on different configurations of the synthetic datasets. We report the F1 and MCC scores for each method across different latent dimensions $D$. Average over 5 runs reported}
    \label{fig:syn_eval}
\end{figure*}

\textbf{Applicability to finite grids.} While our identifiability theory assumes a continuous spatial domain, the key insights apply to finite grids if the discretization is dense, that is, the number of grid points is sufficiently high. If the number of grid points $L >> D$, the system is overdetermined, and the redundancy can be used to uniquely identify the latent factors. Measure-zero pathologies \change{which prevent identifiability in the continuous case}, remain probabilistically unlikely in finite grids. Our experiments confirm these insights, showing accurate recovery of latent factors in practice.

\change{
\textbf{Recovery of the causal graph.} Theorems \ref{thm:linear_identifiability} and \ref{thm:nonlinear_identifiability} establish the identifiability of the latent variables from observational data, up to permutation and scaling. Once these latent processes are recovered, we can apply causal discovery algorithms with identifiability guarantees—such as Rhino, which we use in this work—to infer the causal graph among the latent variables, provided that the algorithm’s identifiability conditions are met. This is possible because the causal relationships are encoded in the conditional independence relationships of the latent time series. For completeness, we list the assumptions of Rhino in Appendix \ref{sec:theory_assumptions_for_rhino}. 
}

\section{Experiments}

We assess SPACY's ability to capture causal relationships across various spatiotemporal settings using both synthetic datasets with known ground truth and simulated climate datasets. Our results demonstrate that SPACY consistently uncovers accurate causal relationships while generating interpretable outputs. 

\textbf{Baselines.} We compare SPACY with state-of-the-art baselines. We include the two-step algorithms Mapped PCMCI (Varimax-PCA + PCMCI$^+$ with Partial Correlation test) \citep{xavi2022teleconnection, runge2020pcmci+} and the Linear Response method \citep{Falasca_2024}. We also evaluate against the causal representation learning approaches, LEAP \citep{yao2021learning}, TDRL \citep{yao2022temporally}, and CDSD \citep{boussard2023towards,brouillard2024causal}

\subsection{Synthetic Data}

\textbf{Setup. } Since real-world datasets lack ground truth causal graphs, we generate synthetic datasets with known causal relationships to benchmark SPACY's causal discovery performance. These are generated from randomly constructed ground-truth graphs and follow the forward model described in Figure \ref{fig:spacy_pgm_main}. We experiment with several configurations of synthetic data. The latent time series are generated using either (1) a linear structural causal model (SCM) with randomly initialized weights and additive Gaussian noise, or (2) a nonlinear SCM, where the structural equations are modeled by randomly initialized MLPs, combined with additive history-dependent conditional-spline noise. The mapping function $g_\ell$ is set as (1) linear, where the identity function is used, or (2) nonlinear, where an MLP is used. \change{The spatial factors are constructed using RBF kernels with randomly initialized center and scale parameters, and the Euclidean distance is used as the underlying metric.} For each configuration, we generate $N=100$ samples, each with time length $T=100$ and a grid of size $100 \times 100$ ($L=10^4$). Datasets are generated with $D=10, 20$ and $30$ nodes in each setting. For more details on dataset generation, refer to Appendix \ref{app:dataset_generation}. 

We assess the performance of SPACY and the baselines using two metrics: the orientation F1 score of the inferred causal graph, and the mean correlation coefficient (MCC) between the learned and ground-truth latents. More details on evaluation are presented in Appendix \ref{subsection:eval_detail}.

\textbf{Results.} The results of the synthetic experiments are shown in Figure \ref{fig:syn_eval}. SPACY consistently outperforms all other methods across all settings of $D$ in terms of F1 score. On the linear SCM datasets, SPACY remains slightly ahead of \change{CDSD and} Mapped PCMCI, while LEAP, TDRL, and Linear-Response exhibit weaker performance. However, SPACY significantly outperforms the baselines in the nonlinear settings. A pronounced performance drop is observed for LEAP, TDRL, and Linear-Response, whose F1 scores decline sharply as $D$ increases. SPACY's performance scales more effectively with increasing $D$, further widening the gap in performance.

The recovery of the latent variables, measured by the MCC score, follows a similar pattern. SPACY achieves similar or better MCC scores compared to \change{CDSD and} Mapped PCMCI, while LEAP, TDRL, and Linear-Response consistently show lower MCC scores across all configurations. Figure \ref{fig:syn_visuals} provides a visual illustration of the recovered spatial factors.



\textbf{Scalability.} We also measure the scalability of SPACY with increasing grid-size. For this experiment, we used the dataset with linear SCM and nonlinear spatial mapping $g_\ell$, and varied the grid size from $L=30\times 30$ to $L=250\times 250$. Figure \ref{fig:syn_scalable} demonstrates the scalability and performance of SPACY compared to the baseline methods as the grid size $L$ increases. The runtime plot indicates that, while all methods experience an increase in runtime with increasing grid size, SPACY strikes a good balance, exhibiting moderate growth in computational time while maintaining strong causal discovery performance. Although Mapped-PCMCI is the most efficient in terms of runtime, it underperforms in causal discovery. LEAP, TDRL, \change{and CDSD} show similar or higher computational costs than SPACY but fail to match its performance. Linear-Response, in particular, scales poorly in terms of runtime with increasing grid size. Additional results and ablation studies are in Appendix \ref{sec:additional_results}.

\begin{figure}[h]
    \centering
    \includegraphics[width=1\columnwidth]{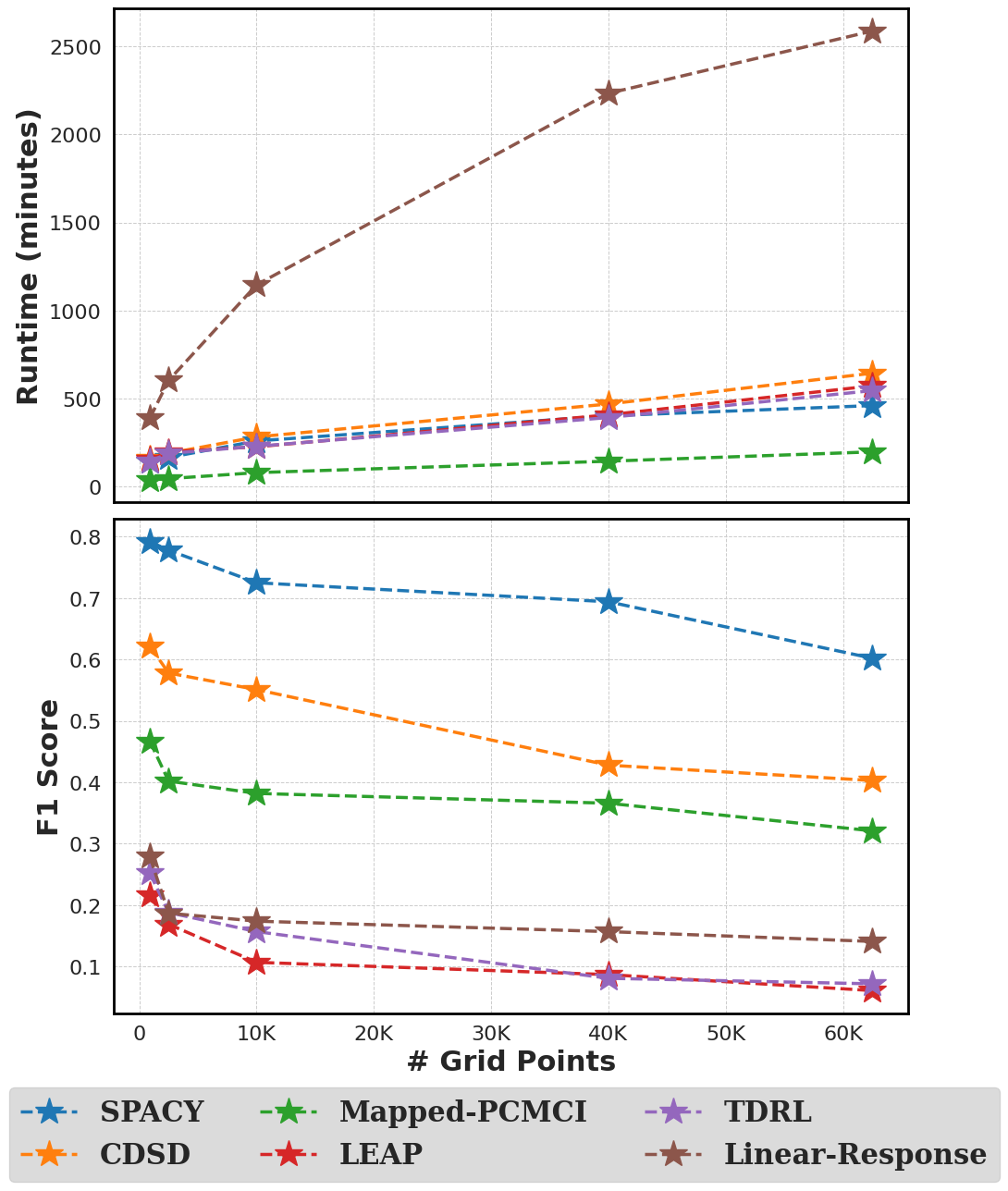}
    \caption{Runtime (in minutes) (top) and F1 score (bottom) across different grid sizes. Average over 5 runs reported.}
    \label{fig:syn_scalable}
    \vspace{-3mm}
\end{figure}


\begin{figure*}[t]
    \centering
    \includegraphics[width=0.7\textwidth]{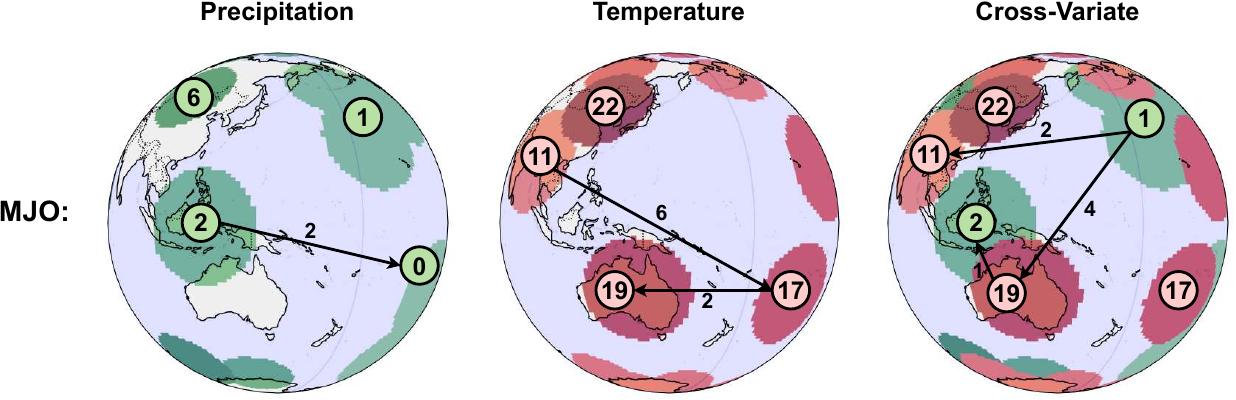}
    \includegraphics[width=0.7\textwidth]{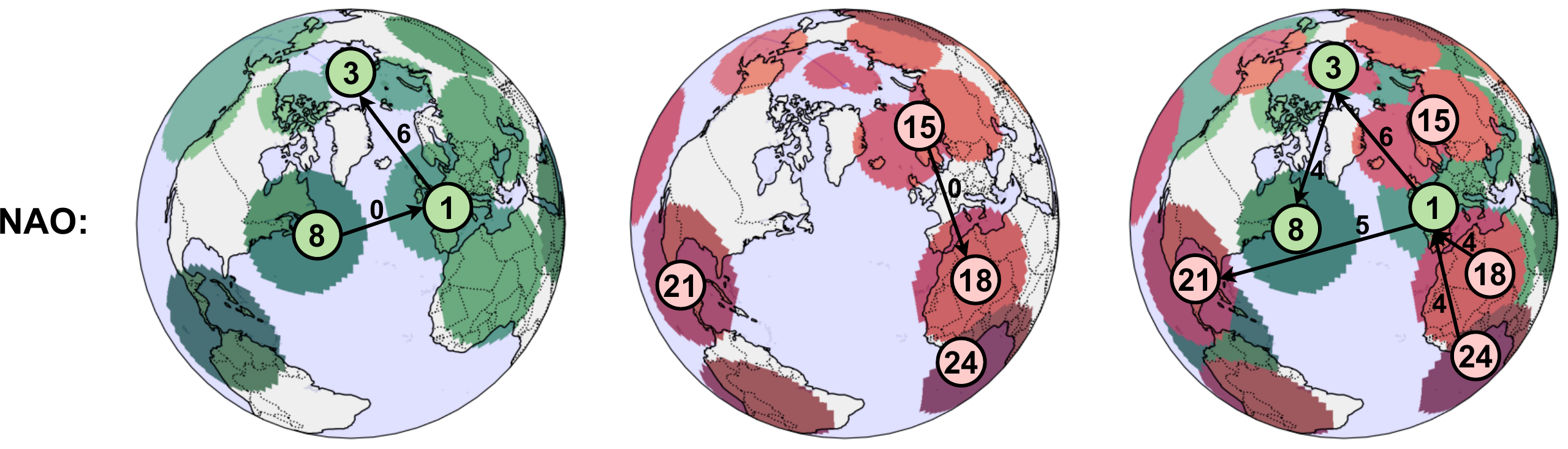}
    \includegraphics[width=0.7\textwidth]{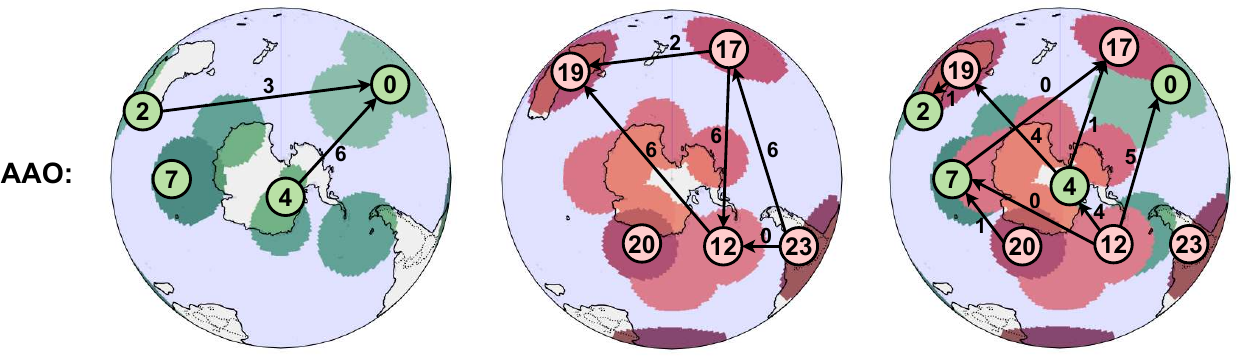}
    \caption{\textbf{Visualization of discovered causal relationships in climate datasets.} Subgraph among regions associated with the (top) Madden-Julian Oscillation, (middle) Northern Atlantic Oscillation, (bottom) Antarctic Oscillation. Causal links are shown for (left) Precipitation, (middle) Temperature, and (right) cross-variate interactions. Numbers on the edges indicate time-lag, while the numbers on the nodes indicate the node indices. }
    \vspace{-3mm}
    \label{fig:all_osc_subgraphs}
\end{figure*}

\subsection{Climate Dataset}


The Global Climate Dataset \citep{baker2019linear} is a mixed real-simulated dataset containly monthly global temperature and precipitation from 1999 to 2001. For more details about the dataset and preprocessing steps, refer to Appendix \ref{section:global_temp_descrip}. \change{In our experiments, we use the RBF kernel with the Haversine distance as the metric, as detailed in Appendix~\ref{sec:distance_rep}}


\textbf{Results.} In the absence of a ground truth causal graph, we qualitatively evaluate the spatial factors and causal graph inferred by SPACY. We highlight several spatial modes identified by SPACY that correspond to critical regions that significantly influence global climate patterns, including coastlines of major land masses (e.g., East Asia, Northern Europe) and key ocean areas (e.g., Central Pacific, South Atlantic). We refer the interested reader to Figure \ref{fig:G_pred_global} in Appendix \ref{section:F_visualization} for the full visualization of 
the spatial factors and causal graphs inferred by SPACY from this dataset.

Figure \ref{fig:all_osc_subgraphs} highlights three subgraphs of the inferred graph, corresponding to the Madden-Julian Oscillation (MJO) \citep{MJODetectionofa4050DayOscillationintheZonalWindintheTropicalPacific,MJODescriptionofGlobalScaleCirculationCellsintheTropicswitha4050DayPeriod}, Northern Atlantic Oscillation (NAO) \citep{NAOsignificanceandimpact,NAOdecadal,NAOSensitivityofTeleconnectionPatternstotheSignofTheirPrimaryActionCenter}, and Antarctic Oscillation (AAO) \citep{AAODavid2002Antarctic,AAOKingtse2000Antarctic}. SPACY discovers causal links consistent with known teleconnection mechanisms: short-lagged connections (1–2 months) between mid-Pacific (nodes 0 and 17) and Southeast Asian precipitation/temperature nodes (nodes 2 and 11) align with the MJO’s eastward-propagating convection and 30–60 day cycle. Similarly, simultaneous precipitation links between Northeast America (nodes 8) and Western Europe (node 1), alongside dipole-like connections in Europe/North Africa (nodes 15 and 18), mirror the NAO’s temperature and precipitation dipole. The AAO subgraph captures Antarctic-centered teleconnections, including causal ties among Australia (nodes 2 and 19), Antarctica (nodes 4, 7 and 12), and South America (node 23), reflecting the AAO’s hemispheric influence on Southern Ocean dynamics. In all these cases, SPACY infers nodes that are spatially confined with clear boundaries. In contrast, Mapped PCMCI (Figure \ref{fig:varimax_global}) produces broadly distributed components that are challenging to interpret.

\change{It is important to note that while the isotropic RBF kernel represents a modeling simplification, key atmospheric processes—such as the Walker circulation, monsoonal systems, and ENSO teleconnections—are inherently localized due to heterogeneous boundary conditions (e.g., land-sea contrast, topography) and regional forcings. Crucially, SPACY successfully resolves these localized relationships despite its kernel assumptions, achieving greater physical interpretability than traditional methods. Approaches relying on principal components, for instance, often obscure spatial coherence through domain-wide averaging, whereas SPACY preserves geographically anchored causal links tied to identifiable mechanisms. This localization advantage allows the framework to avoid diffuse, unphysical spatial factors common in conventional teleconnection analyses.}

\vspace{-2mm}
\section{Conclusion}

In this work, we tackled the challenge of inferring causal relationships from high-dimensional spatiotemporal data. We introduced SPACY, an end-to-end framework based on variational inference, designed to learn latent causal representations and the underlying structural causal model. SPACY addressed the issue of high dimensionality by identifying causal structures in a low-dimensional latent space. It aggregated spatially correlated grid points through kernels, mapping observed time series into these latent representations. We also established a novel identifiability result for the model at infinite spatial resolution, leveraging tools from real analysis. On synthetic datasets, SPACY outperformed baselines and successfully recovered causal links in challenging settings where existing approaches failed. Applied to real-world climate data, SPACY uncovered established patterns from climate science, identifying teleconnections linked to phenomena such as the Madden-Julian Oscillation, the North Atlantic Oscillation, and the Antarctic Oscillation.


\section*{Acknowledgement}

This work was supported in part by the U.S. Army Research Office under Army-ECASE award W911NF-07-R-0003-03, the U.S. Department Of Energy, Office of Science, IARPA HAYSTAC Program, NSF Grants SCALE MoDL-2134209, CCF-2112665 (TILOS), \#2205093, \#2146343, and \#2134274, as well as CDC-RFA-FT-23-0069 from the CDC’s Center for Forecasting and Outbreak Analytics. Kun Wang acknowledges support from the HDSI Undergraduate Research Scholarship Program.

\section*{Impact Statement}

This paper presents work whose goal is to advance the field of 
Machine Learning. There are many potential societal consequences 
of our work, none which we feel must be specifically highlighted here.

\bibliography{reference}
\bibliographystyle{icml2025}

\newpage

\appendix
\onecolumn

\setlength{\abovedisplayskip}{2pt}
\setlength{\belowdisplayskip}{2pt}
\setlength{\abovedisplayshortskip}{2pt}
\setlength{\belowdisplayshortskip}{2pt}
\setlength{\belowcaptionskip}{2pt}
\setlength{\abovecaptionskip}{1pt}

\section*{{Appendix to}}
\section*{{\Large \change{``Discovering Latent Causal Graph from Spatiotemporal Data"}}}

\section{Theory}

\subsection{ELBO Derivation}


\begin{figure}[h!]
  \begin{minipage}[b]{0.4\textwidth}
    \centering
    \includegraphics[width=0.8\textwidth]{assets/spacy_pgm.pdf}

  \end{minipage}
  \hfill
  \begin{minipage}[b]{0.55\textwidth}
    \begin{align*}
    \mathbf{Z}_d^{(t)} &= f_{d}\left(\text{Pa}^d_G(<t), \text{Pa}^d_G(t)\right) + \eta_d^{(t)}\\
    \boldsymbol{\rho}_d &\sim U[0, 1]^K, \boldsymbol{\gamma}_d \sim \mathcal{N}\left(0, I \right)\\
    \mathbf{F}_d &= \left[\text{RBF}_d(x_\ell; \boldsymbol{\rho}_d, \boldsymbol{\gamma}_d)\right]_{\ell=1}^L, \quad x_\ell \in \mathcal{G} \\
    \mathbf{X}_\ell &= g_\ell \left( \left[ \mathbf{FZ} \right]_{\ell} \right) + \varepsilon_\ell \\
    \varepsilon_\ell &\sim \mathcal{N}(0, \sigma^2_\ell I)
    \end{align*}
  \end{minipage}
  \caption{Probabilistic graphical model for SPACY and the generative equations. Shaded circles are observed and hollow circles are latent. }
  \label{fig:spacy_pgm_1}
\end{figure}

\label{section:elbo_derivation}
\setcounter{proposition}{0}

\begin{proposition}
    The data generation model described in Figure \ref{fig:spacy_pgm_main} admits the following evidence lower bound (ELBO):
    

    \begin{small}
    \begin{equation}
    \begin{aligned}
        \log p_{\theta}\left( \mathbf{X}^{(1:T), 1:N} \right) \geq & \sum_{n=1}^N \Bigg\{ \mathbb{E}_{q_{\phi}(\mathbf{Z}^{(1:T), n}|\mathbf{X}^{(1:T), n}) q_{\phi}(\mathbf{G}) q_{\phi}(\mathbf{F})} \Bigg[ {\log p_{\theta}\left( \mathbf{X}^{(1:T), n} | \mathbf{Z}^{(1:T), n}, \mathbf{F} \right)} \nonumber \\
        & + \left[ \log p_{\theta}\left(\mathbf{Z}^{(1:T), n} | \mathbf{G} \right) - \log q_{\phi}(\mathbf{Z}^{(1:T), n} | \mathbf{X}^{(1:T), n}) \right] \Bigg] \Bigg\} + \mathbb{E}_{q_\phi(\mathbf{G})} {\left[ \log p(\mathbf{G}) - \log q_{\phi}(\mathbf{G}) \right]} \nonumber \\
        & + \mathbb{E}_{q_\phi(\mathbf{F})} {\left[ \log p(\mathbf{F}) - \log q_{\phi}(\mathbf{F}) \right]} 
        = \text{ELBO}(\theta,\phi) 
    \end{aligned}
    \end{equation}
    \end{small}
\end{proposition}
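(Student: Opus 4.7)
The plan is to follow the standard ELBO derivation adapted to the factorization structure encoded in Figure~\ref{fig:spacy_pgm_main}. First, I would use the i.i.d. assumption across the $N$ samples to write
\[
\log p_\theta\bigl(\mathbf{X}^{(1:T),1:N}\bigr) = \sum_{n=1}^{N} \log p_\theta\bigl(\mathbf{X}^{(1:T),n}\bigr),
\]
noting that the latent graph $\mathbf{G}$ and spatial factors $\mathbf{F}$ are shared across samples while $\mathbf{Z}^{(1:T),n}$ is sample-specific. For a single sample I would then introduce the variational distribution and apply Jensen's inequality in the usual way:
\[
\log p_\theta(\mathbf{X}) = \log \int q_\phi(\mathbf{Z},\mathbf{G},\mathbf{F}\mid\mathbf{X}) \frac{p_\theta(\mathbf{X},\mathbf{Z},\mathbf{G},\mathbf{F})}{q_\phi(\mathbf{Z},\mathbf{G},\mathbf{F}\mid\mathbf{X})}\, d\mathbf{Z}\,d\mathbf{G}\,d\mathbf{F} \geq \mathbb{E}_{q_\phi}\!\left[\log \frac{p_\theta(\mathbf{X},\mathbf{Z},\mathbf{G},\mathbf{F})}{q_\phi(\mathbf{Z},\mathbf{G},\mathbf{F}\mid\mathbf{X})}\right].
\]

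Next I would impose the mean-field-style factorization suggested by the PGM, namely $q_\phi(\mathbf{Z},\mathbf{G},\mathbf{F}\mid\mathbf{X}) = q_\phi(\mathbf{Z}\mid\mathbf{X})\, q_\phi(\mathbf{G})\, q_\phi(\mathbf{F})$, matching the form used in the paper. On the generative side, reading off the conditional independences from Figure~\ref{fig:spacy_pgm_main}, the joint factorizes as
\[
p_\theta(\mathbf{X},\mathbf{Z},\mathbf{G},\mathbf{F}) = p_\theta(\mathbf{X}\mid\mathbf{Z},\mathbf{F})\, p_\theta(\mathbf{Z}\mid\mathbf{G})\, p(\mathbf{G})\, p(\mathbf{F}),
\]
since $\mathbf{X}$ depends on $(\mathbf{Z},\mathbf{F})$ through the decoder in~\eqref{eqn:decoder}, $\mathbf{Z}$ depends on $\mathbf{G}$ through the latent SCM in~\eqref{eqn:rhino_eqn}, and $\mathbf{G},\mathbf{F}$ have independent priors. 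Substituting both factorizations into the Jensen bound and using linearity of expectation, the log-ratio cleanly splits into four pieces: a reconstruction term $\mathbb{E}_q[\log p_\theta(\mathbf{X}\mid\mathbf{Z},\mathbf{F})]$, a latent-prior term $\mathbb{E}_q[\log p_\theta(\mathbf{Z}\mid\mathbf{G}) - \log q_\phi(\mathbf{Z}\mid\mathbf{X})]$, and two KL-style terms for $\mathbf{G}$ and $\mathbf{F}$ that, because their priors and variational distributions do not depend on $n$, can be pulled out of the sum over $n$ and written either as expectations of log-ratios or as $-\mathrm{KL}(q_\phi(\mathbf{G})\,\|\,p(\mathbf{G}))$ and $-\mathrm{KL}(q_\phi(\mathbf{F})\,\|\,p(\mathbf{F}))$.

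Finally, summing over $n$ recovers the stated inequality exactly in the displayed form. The only subtle bookkeeping is making sure the reconstruction and $\mathbf{Z}$-related expectations are taken over the joint $q_\phi(\mathbf{Z}^{(1:T),n}\mid\mathbf{X}^{(1:T),n})\, q_\phi(\mathbf{G})\, q_\phi(\mathbf{F})$ since those integrands depend on $\mathbf{G}$ and $\mathbf{F}$, while the $\mathbf{G}$ and $\mathbf{F}$ KL terms do not depend on $n$ and therefore appear outside the sum; this is really the only step that requires care. There is no genuine obstacle here—the result is a direct application of Jensen's inequality given the PGM factorization—so the proof is essentially a bookkeeping exercise rather than a theorem requiring a new idea.
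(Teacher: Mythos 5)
Your overall strategy (mean-field variational family, factorize the joint according to the PGM, apply Jensen) is the same as the paper's, but your very first step is wrong and the error propagates into the bookkeeping you flag as "the only step that requires care." You begin by writing $\log p_\theta\bigl(\mathbf{X}^{(1:T),1:N}\bigr) = \sum_{n=1}^N \log p_\theta\bigl(\mathbf{X}^{(1:T),n}\bigr)$, but this factorization does not hold for this model: $\mathbf{G}$ and $\mathbf{F}$ are \emph{global} latent variables shared by all $N$ samples, so the marginal is
\begin{equation*}
p_\theta\bigl(\mathbf{X}^{(1:T),1:N}\bigr) = \int p(\mathbf{G})\,p(\mathbf{F}) \prod_{n=1}^N p_\theta\bigl(\mathbf{X}^{(1:T),n}\mid \mathbf{G},\mathbf{F}\bigr)\,d\mathbf{G}\,d\mathbf{F},
\end{equation*}
and the expectation of a product is not the product of expectations. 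The samples are conditionally independent given $(\mathbf{G},\mathbf{F})$, not marginally independent, which is exactly why you observe that $\mathbf{G}$ and $\mathbf{F}$ are "shared across samples"---that observation contradicts your factorization rather than supporting it.

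The downstream symptom is the KL terms. If you genuinely derive a per-sample ELBO and sum over $n$, each of $-\mathrm{KL}\bigl(q_\phi(\mathbf{G})\,\|\,p(\mathbf{G})\bigr)$ and $-\mathrm{KL}\bigl(q_\phi(\mathbf{F})\,\|\,p(\mathbf{F})\bigr)$ appears $N$ times, not once; "pulling them out of the sum" because they do not depend on $n$ changes the value by a factor of $N$ and is not a legitimate move. (The resulting looser expression happens to still lower-bound the joint likelihood, but your derivation does not show this---it only bounds $\sum_n \log p_\theta(\mathbf{X}^{(1:T),n})$, a different quantity---and in any case it is not the bound stated in the proposition.) The paper avoids all of this by applying Jensen's inequality \emph{once} to the joint likelihood of all $N$ samples with the single variational distribution $q_\phi(\mathbf{Z}^{(1:T),1:N}\mid\mathbf{X}^{(1:T),1:N})\,q_\phi(\mathbf{G})\,q_\phi(\mathbf{F})$, and only then uses the conditional independences $p_\theta(\mathbf{X}^{(1:T),1:N}\mid\mathbf{Z}^{(1:T),1:N},\mathbf{F}) = \prod_n p_\theta(\mathbf{X}^{(1:T),n}\mid\mathbf{Z}^{(1:T),n},\mathbf{F})$ and $p_\theta(\mathbf{Z}^{(1:T),1:N}\mid\mathbf{G}) = \prod_n p_\theta(\mathbf{Z}^{(1:T),n}\mid\mathbf{G})$ to turn the reconstruction and latent-prior terms into a sum over $n$, leaving the $\mathbf{G}$ and $\mathbf{F}$ terms outside the sum with multiplicity one. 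You should restructure your argument in that order.
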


\begin{proof}

We begin with the log-likelihood of the observed data:
\begin{align*}
    \log p_{\theta} \left( \mathbf{X}^{(1:T), 1:N} \right)
    &= \log \int p_{\theta}\left( \mathbf{X}^{(1:T), 1:N}, \mathbf{Z}^{(1:T), 1:N}, \mathbf{G}, \mathbf{F} \right) \, d\mathbf{Z} \, d\mathbf{G} \, d\mathbf{F}
\end{align*}

We multiply and divide by the variational distribution $q_{\phi} \left( \mathbf{Z}^{(1:T), 1:N} | \mathbf{X}^{(1:T), 1:N} \right) q_{\phi} \left(\mathbf{G}\right) q_{\phi} \left( \mathbf{F}\right)$ to create an evidence lower bound (ELBO) using Jensen's inequality:
\begin{align}
    &\log p_{\theta} \left( \mathbf{X}^{(1:T), 1:N} \right) \nonumber \\ &= \log \int \frac{q_{\phi} \left( \mathbf{Z}^{(1:T), 1:N} | \mathbf{X}^{(1:T), 1:N} \right) q_{\phi} \left(\mathbf{G}\right) q_{\phi} \left( \mathbf{F}\right)}{q_{\phi} \left( \mathbf{Z}^{(1:T), 1:N} | \mathbf{X}^{(1:T), 1:N} \right) q_{\phi} \left(\mathbf{G}\right) q_{\phi} \left( \mathbf{F}\right)}  p_{\theta} \left( \mathbf{X}^{(1:T), 1:N}, \mathbf{Z}^{(1:T), 1:N}, \mathbf{G}, \mathbf{F} \right) \, d\mathbf{Z} \, d\mathbf{G} \, d\mathbf{F} \nonumber \\
    & \geq \mathbb{E}_{q_{\phi} \left( \mathbf{Z}^{(1:T), 1:N} | \mathbf{X}^{(1:T), 1:N} \right) q_{\phi} \left(\mathbf{G}\right) q_{\phi} \left( \mathbf{F}\right)} \left[ \log \frac{ p_{\theta} \left( \mathbf{X}^{(1:T), 1:N}, \mathbf{Z}^{(1:T), 1:N}, \mathbf{G}, \mathbf{F} \right) }{ q_{\phi} \left( \mathbf{Z}^{(1:T), 1:N} | \mathbf{X}^{(1:T), 1:N} \right) q_{\phi} \left(\mathbf{G}\right) q_{\phi} \left( \mathbf{F}\right) } \right]. \label{eqn:elbo_interm}
\end{align}

By the assumptions of the data generative process, 
\begin{equation*}
    p_{\theta} \left( \mathbf{X}^{(1:T), 1:N}, \mathbf{Z}^{(1:T), 1:N}, \mathbf{G}, \mathbf{F} \right)=  p_{\theta} \left( \mathbf{X}^{(1:T), 1:N} | \mathbf{Z}^{(1:T), 1:N}, \mathbf{F}\right) p_{\theta} \left(\mathbf{Z}^{(1:T), 1:N} | \mathbf{G} \right) p \left(\mathbf{F}\right) p\left( \mathbf{G}\right) 
\end{equation*}
Further, note that $\mathbf{X}^{(1:T), 1:N}$ are conditionally independent given $\mathbf{F}, \mathbf{Z}^{(1:T), 1:N}$. Also, $\mathbf{X}^{(1:T), n}$ is conditionally independent of $\mathbf{Z}^{(1:T), m}$ given $\mathbf{Z}^{(1:T), n}, \mathbf{F}$ for $m \neq n$. This implies that:
\begin{equation*}
    p_{\theta} \left( \mathbf{X}^{(1:T), 1:N} | \mathbf{Z}^{(1:T), 1:N}, \mathbf{F}\right) = \prod_{n=1}^N p_\theta \left( \mathbf{X}^{(1:T), n} | \mathbf{Z}^{(1:T), n}, \mathbf{F} \right).
\end{equation*}

Similarly, $\mathbf{Z}^{(1:T), 1:N}$ are conditionally independent given $\mathbf{G}$, which implies
\begin{equation*}
    p_{\theta} \left( \mathbf{Z}^{(1:T), 1:N} | \mathbf{G} \right) = \prod_{n=1}^N p_\theta \left( \mathbf{Z}^{(1:T), n} |  \mathbf{G} \right).
\end{equation*}

Substituting these terms back into \eqref{eqn:elbo_interm} and grouping terms according to the variables $\mathbf{Z}, \mathbf{G}, \mathbf{F}$ yields the ELBO.
\begin{align*}
    \log p_{\theta} \left( \mathbf{X}^{(1:T), 1:N} \right)
    \geq & \sum_{n=1}^N \Bigg\{ \mathbb{E}_{q_{\phi}(\mathbf{Z}^{(1:T), n}|\mathbf{X}^{(1:T), n}) q_{\phi}(\mathbf{G}) q_{\phi}(\mathbf{F})} \Bigg[ \log p_{\theta}\left( \mathbf{X}^{(1:T), n} | \mathbf{Z}^{(1:T), n}, \mathbf{G}, \mathbf{F} \right) \\
    & + \left( \log p_{\theta}\left( \mathbf{Z}^{(1:T), n} | \mathbf{G} \right) - \log q_{\phi}\left( \mathbf{Z}^{(1:T), n} | \mathbf{X}^{(1:T), n} \right) \right) \Bigg] \Bigg\} \\
    & + \mathbb{E}_{q_{\phi}(\mathbf{G})} \left[ \log p(\mathbf{G}) - \log q_{\phi}(\mathbf{G}) \right] \\
    & + \mathbb{E}_{q_{\phi}(\mathbf{F})} \left[ \log p(\mathbf{F}) - \log q_{\phi}(\mathbf{F}) \right] \equiv \text{ELBO}(\theta, \phi).
\end{align*}

\end{proof}

\subsection{Identifiability} \label{app:identifiability}

In this work, we extend the notion of identifiability in latent variable models to the spatiotemporal setting considered in this paper. Informally, a latent variable model is said to be identifiable if the underlying latent variables can be uniquely recovered from observations up to permissible ambiguities such as permutation or scaling. 

We begin by formalizing the notion of a spatiotemporal process over a continuous spatial domain, which can be thought about as a gridded time series in a grid with infinite resolution. This abstraction enables us to reason about these systems using tools from real analysis. Although our theory assumes a continuous domain, the insights extend to discrete grids with dense discretization, that is, $L >> D$.

\setcounter{definition}{-1}
\begin{definition}[Linearly Independent Family of Functions]
    Let $\mathcal{F}$ be a family of real-valued, parametric functions $\mathcal{F} = \left\{ f_\psi \mid \mathbb{R}^{K} \rightarrow \mathbb{R} \right\}$. $\mathcal{F}$ is said to be a linearly independent family if, for any finite set $\left\{ \psi_1, ..., \psi_n \right\}$, we have 
    \begin{equation}
        \sum_{k=1}^n \alpha_k f_{\psi_k} = 0 \implies \alpha_k = 0 \quad \forall k \in [n]. 
    \end{equation}
\end{definition}

\begin{definition}[Spatial Factor Process]
    \label{def:spatial_factor_process}
    Let $\mathcal{G} = (0,1)^K$ be a $K-$dimensional spatial domain, and $\mathcal{F} = \left\{F_{\psi_1}, ..., F_{\psi_D} \right\}$ be a finite linearly independent family of functions defined on $\mathcal{G}$. Denote $\mathscr{G} = \big\{ g_{\ell} \colon \mathbb{R} \to \mathbb{R} \ \big| \ \ell \in \mathcal{G} \big\}$ as a family of functions defined for each point $\ell$ on the grid $\mathcal{G}$. Let  $\{\Zt\}_{t=1}^T, \Zt \in \mathbb{R}^D$ denote the latent causal process and  $p_{\varepsilon_{\ell}^{(t)}}$ be a zero-mean noise distribution. For each location $\ell \in \mathcal{G}$  and time $t \in [T]$, we assume the observation $\mathbf{X}$ follows the \textit{Spatial Factor Process} $\displaystyle \text{SFP}\left(\left\{\Zt\right\}_{t=1}^T, \mathcal{F}, \mathscr{G}, p_{\varepsilon_{\ell}^{(t)}} \right)$ defined below: 
\begin{equation} \Xt (\ell) = g_{\ell} \left (\Fx^\top \Zt \right) + \varepsilon_{\ell}^{(t)}, \label{eqn:x_eqn_2}
    \end{equation}
    where
    \begin{equation*}
        \Fx = \begin{bmatrix}
            F_{\psi_1}(\ell) \\
            \vdots \\
            F_{\psi_D}(\ell)
        \end{bmatrix}.
    \end{equation*}
\end{definition}

A \textit{Spatial Factor Process} (SFP) generalizes gridded time series to a continuous spatial domain $\mathcal{G} = (0,1)^K$, where observations are modeled at \textit{all} points in the grid. The dynamics are driven by a latent process $\{\Zt\}$, while spatial structure is captured through factors $\Fx = [F_{\psi_1}(\ell), \dots, F_{\psi_D}(\ell)]^\top$, composed of linearly independent functions evaluated at each $\ell \in \mathcal{G}$. Location-specific nonlinearities are modeled using mappings $g_{\ell}(\cdot)$, which transform the latent factors $\Zt$ into observations at each point.

The identifiability question of SFP is: given the observational distribution $\displaystyle p \left(\Xt|\Zt; \mathbf{F}\right)$ generated from a latent causal process $\displaystyle \{ \Zt \}_{t=1}^T$, can we uniquely recover the latents \citep{yao2021learning, yao2022temporally, moran2022identifiable}? In other words, if we learn a generative model $\displaystyle p\left( \hatXt | \hatZt;  \widehat{\mathbf{F}} \right)$ such that $\displaystyle p(\mathbf{X}|\Zt; {\mathbf{F}}) = p(\hatXt|\hatZt; \widehat{\mathbf{F}} )$, can we infer that $\Zt = \hatZt$? In practice, we seek recovery of the latents up to trivial transformations like permutation or scaling.  We formalize this notion with the following definition.

\begin{definition}[Identifiability of SFPs]
  Let $\mathbf{X}$ denote the true generative SFP, specified by $\displaystyle \left(\left\{\Zt\right\}_{t=1}^T, \left\{ F_{\psi_i} \right\}_{i=1}^D , \{ g_{\ell} \mid \ell \in \mathcal{G}\}, p_{\varepsilon_{\ell}^{(t)}} \right)$ (as described in Definition \ref{def:spatial_factor_process}) with observational distribution $\displaystyle p(\Xt | \Zt; \mathbf{F})$, where 
  \begin{equation}
      \Xt(\ell) = g_{\ell} \left( \Fx^\top \Zt \right) + \varepsilon_{\ell}^{(t)}, \label{eqn:xt1}
  \end{equation} 
  with $\varepsilon_{\ell}^{(t)} \sim p_{\varepsilon_{\ell}^{(t)}}$ for some zero-mean noise distribution $p_{\varepsilon_{\ell}}$. Suppose we have a learned SFP $\displaystyle \widehat{\mathbf{X}}$, specified by $\displaystyle \left(\left\{\hatZt \right\}_{t=1}^T, \left\{ F_{\widehat{\psi_i}} \right\}_{i=1}^D, \{ \widehat{g}_{\ell} \mid \ell \in \mathcal{G}\}, \change{p_{{\varepsilon}_{\ell}^{(t)}}} \right)$
  with observational distribution $\displaystyle p( \hatXt | \hatZt; \widehat{\mathbf{F}})$, where
  \begin{equation}
  \displaystyle \hatXt(\ell) = \widehat{g}_{\ell} \left( \hatFx^\top \hatZt \right) + \widehat{\varepsilon}_{\ell}^{(t)}, \label{eqn:hatxt1}   
  \end{equation}
  with $\widehat{\varepsilon}_{\ell}^{(t)} \sim \change{p_{{\varepsilon}_{\ell}^{(t)}}}$. The latent process $\{\Zt\}$ is said to be \textit{identifiable} upto permutation and scaling, if:
  \begin{align*}
    p(\Xt(\ell) | \Zt; \Fx) &= p( \hatXt(\ell) | \hatZt; \hatFx), \quad \forall \ell \in \mathcal{G}, t \in [T] \\
    \implies \widehat{\mathbf{Z}}_t = PS \Zt, \text{ and }& \left\{ F_{\psi_i}\right\}_{i=1}^D = \left\{ F_{\widehat{\psi_i}}\right\}_{i=1}^D,
  \end{align*}
  for some permutation matrix $P$ and scaling matrix $S$.
\end{definition}

\change{

We first show a useful result that allows us to ``denoise" the observation space and enables the point-wise equality of the transformed latent time series. 
\setcounter{lemma}{0}
\begin{lemma}[Denoising Lemma]
\label{lemma:denoising_lemma}
    Suppose $\mathbf{X} = \displaystyle \left(\left\{\Zt\right\}_{t=1}^T, \left\{ F_{\psi_i} \right\}_{i=1}^D , \{ g_{\ell} \mid \ell \in \mathcal{G}\}, p_{\varepsilon_{\ell}^{(t)}} \right)$ and $\widehat{\mathbf{X}} = \displaystyle \left(\left\{\hatZt \right\}_{t=1}^T, \left\{ F_{\widehat{\psi_i}} \right\}_{i=1}^D, \{ \widehat{g}_{\ell} \mid \ell \in \mathcal{G}\}, {p_{{\varepsilon}_{\ell}^{(t)}}} \right)$ are two SFPs with observational distributions $\displaystyle p(\Xt | \Zt; \mathbf{F})$ and $\displaystyle p( \hatXt | \hatZt; \widehat{\mathbf{F}})$ respectively, such that
  \begin{equation*}
      \Xt(\ell) = g_{\ell} \left( \Fx^\top \Zt \right) + \varepsilon_{\ell}^{(t)},
  \end{equation*} 
  and 
    \begin{equation*}
  \displaystyle \hatXt(\ell) = \widehat{g}_{\ell} \left( \hatFx^\top \hatZt \right) + \widehat{\varepsilon}_{\ell}^{(t)},
  \end{equation*}
  where $\varepsilon_\ell^{(t)}, \widehat{\varepsilon_\ell}^{(t)} \sim p_{\varepsilon_\ell^{(t)}}$.
    Assume that for all $\ell \in \mathcal{G}$ and $t \in [T]$, the set $\displaystyle \left\{ {x} \in \mathbb{R} \mid \varphi_{\varepsilon_\ell^{(t)}} ({x}) = 0\right\}$ has measure zero where $\varphi_{\varepsilon_\ell^{(t)}}$ represents the characteristic function of the density $p_{\varepsilon_\ell^{(t)}}$.
    If
    \begin{equation}
         p(\Xt(\ell) = x | \Zt; \Fx) = p( \hatXt(\ell) = x| \hatZt; \hatFx) \quad \forall \ell \in \mathcal{G}, t \in [T]. \label{eqn:denoising_condition}
    \end{equation}
    
    Then we have that
    \begin{equation*}
        {g}_{\ell} \left( \Fx^\top \Zt \right)  = \widehat{g}_{\ell} \left( \hatFx^\top \hatZt \right) \quad \forall \ell \in \mathcal{G}, t \in [T].
    \end{equation*}
\end{lemma}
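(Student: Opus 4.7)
The plan is to exploit the convolutional structure of the observation model, which makes Fourier (characteristic function) analysis the natural tool. Observe that equation~\ref{eqn:denoising_condition} is an equality of conditional densities on $\mathbb{R}$ of the form
\begin{equation*}
    p_{\varepsilon_\ell^{(t)}}\bigl(x - \mu_\ell^{(t)}\bigr) = p_{\varepsilon_\ell^{(t)}}\bigl(x - \widehat{\mu}_\ell^{(t)}\bigr), \quad \forall x \in \mathbb{R},
\end{equation*}
where I have abbreviated $\mu_\ell^{(t)} := g_\ell(\mathbf{F}_\ell^\top \mathbf{Z}^{(t)})$ and $\widehat{\mu}_\ell^{(t)} := \widehat{g}_\ell(\widehat{\mathbf{F}}_\ell^\top \widehat{\mathbf{Z}}^{(t)})$. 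The goal is to show $\mu_\ell^{(t)} = \widehat{\mu}_\ell^{(t)}$ for every $\ell$ and $t$.

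First, I would take Fourier transforms of both sides in $x$. Since translation in the time/space domain becomes multiplication by a complex exponential in the Fourier domain, the two sides become
\begin{equation*}
    e^{is\mu_\ell^{(t)}}\,\varphi_{\varepsilon_\ell^{(t)}}(s) \;=\; e^{is\widehat{\mu}_\ell^{(t)}}\,\varphi_{\varepsilon_\ell^{(t)}}(s), \quad \forall s \in \mathbb{R}.
\end{equation*}
On the set $S_\ell^{(t)} := \{ s \in \mathbb{R} \mid \varphi_{\varepsilon_\ell^{(t)}}(s) \neq 0 \}$, we may cancel the characteristic function to obtain $e^{is(\mu_\ell^{(t)} - \widehat{\mu}_\ell^{(t)})} = 1$ for all $s \in S_\ell^{(t)}$. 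By hypothesis the complement of $S_\ell^{(t)}$ has Lebesgue measure zero, so $S_\ell^{(t)}$ itself has positive (in fact infinite) Lebesgue measure.

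Finally, I would conclude by a measure-theoretic contradiction: if $\mu_\ell^{(t)} \neq \widehat{\mu}_\ell^{(t)}$, then the set of $s$ for which $e^{is(\mu_\ell^{(t)} - \widehat{\mu}_\ell^{(t)})} = 1$ is the discrete lattice $\{ 2\pi k / (\mu_\ell^{(t)} - \widehat{\mu}_\ell^{(t)}) : k \in \mathbb{Z} \}$, which has Lebesgue measure zero. This contradicts the fact that the identity holds on the positive-measure set $S_\ell^{(t)}$, forcing $\mu_\ell^{(t)} = \widehat{\mu}_\ell^{(t)}$, which is precisely the claim.

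I do not anticipate a substantive obstacle here; the assumption that the zero set of $\varphi_{\varepsilon_\ell^{(t)}}$ has measure zero is precisely the standard hypothesis needed to make deconvolution injective (analogous to the assumption in Kahn, Kotlyarski, and similar identifiability-from-noisy-observations results), and it is exactly tailored to the argument above. The only care required is to note that equality of conditional densities (which are $L^1$ functions) translates into pointwise equality of their characteristic functions for every $s$, so the cancellation on $S_\ell^{(t)}$ is legitimate without needing to invoke any regularity of $\mu$ or $\widehat{\mu}$ as functions of $\ell, t$.
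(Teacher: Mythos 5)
Your proposal is correct and follows essentially the same route as the paper's proof: both express the conditional density as a translate (equivalently, a Dirac-delta convolution) of the noise density, pass to characteristic functions, cancel $\varphi_{\varepsilon_\ell^{(t)}}$ on its full-measure support, and conclude equality of the means. Your final step is in fact slightly more explicit than the paper's, since you spell out that $\{s : e^{is(\mu-\widehat{\mu})}=1\}$ is a measure-zero lattice when $\mu\neq\widehat{\mu}$, whereas the paper asserts the implication directly.
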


\begin{proof}
Our argument is similar to  Step I of Appendix B.2.2 in \citet{khemakhem2020variational}.

Note that we can write
\begin{equation}
    p(\Xt(\ell) = x | \Zt; \Fx) = p_{\varepsilon_\ell^{(t)}} \left( x - \bar{x} \right) = \int_\mathbb{R} \delta_{\bar{x}} \left( z \right) p_{\varepsilon_\ell^{(t)}}(x-z) dz = \delta_{\bar{x}} * p_{\varepsilon_\ell^{(t)}} (x),
    \label{eqn:denoise_lemma_1}
\end{equation}
where $\bar{x} = {g}_{\ell} \left( \Fx^\top \Zt \right)$, $\delta_{\bar{x}}$ denotes the Dirac-delta distribution centered at $\bar{x}$ and $*$ denotes the convolution operator.

Similarly,

\begin{equation}
    p(\hatXt(\ell) = x | \hatZt; \hatFx) = p_{\varepsilon_\ell^{(t)}} \left( x - \tilde{x} \right) = \delta_{\tilde{x}} * p_{\varepsilon_\ell^{(t)}} (x),
    \label{eqn:denoise_lemma_2}
\end{equation}
where $\tilde{x} = \widehat{g}_{\ell} \left( \hatFx^\top \hatZt \right)$.

From \eqref{eqn:denoising_condition} and the equations \ref{eqn:denoise_lemma_1} and \ref{eqn:denoise_lemma_2}, we obtain

\begin{equation*}
    \delta_{\bar{x}} * p_{\varepsilon_\ell^{(t)}} (x) = \delta_{\tilde{x}} * p_{\varepsilon_\ell^{(t)}} (x).
\end{equation*}
Taking the Fourier Transform on both sides of the equation, we obtain,
\begin{equation*}
    e^{is\bar{x}} \varphi_{\varepsilon_\ell^{(t)}}(s) = e^{is\tilde{x}} \varphi_{\varepsilon_\ell^{(t)}}(s).
\end{equation*}
Since $\varphi_{\varepsilon_\ell^{(t)}}\neq 0$ almost everywhere, we have that $\displaystyle e^{is\bar{x}} = e^{is \tilde{x}}$ for almost all values of $s$. This implies that
\begin{equation*}
    \bar{x} = \tilde{x}, \text{ i.e., } {g}_{\ell} \left( \Fx^\top \Zt \right) = \widehat{g}_{\ell} \left( \hatFx^\top \hatZt \right) \forall \ell \in \mathcal{G}, t \in [T].
\end{equation*}
\end{proof}
}

\subsubsection{Linear Identifiability}
Our first result shows that for the case with no nonlinearity, that is $g_{\ell}(y) = y$ for all $\ell \in \mathcal{G}$, the latents and spatial factors are identifiable.

\setcounter{theorem}{0}

\begin{theorem}[Identifiability of Linear SFPs]
    Suppose we are given two SFPs $\mathbf{X} = \text{SFP}\left(\mathbf{Z}, \mathcal{F}, \left\{ g_{\ell} \mid \ell \in \mathcal{G} \right\}, p_{\varepsilon_{\ell}^{(t)}} \right)$ and $\widehat{\mathbf{X}} = \text{SFP}\left(\widehat{\mathbf{Z}}, \widehat{\mathcal{F}}, \left\{\widehat{g}_{\ell} \mid \ell \in \mathcal{G} \right\}, \change{p_{{\varepsilon}_{\ell}^{(t)}}} \right)$ specified by Equations \ref{eqn:xt1} and \ref{eqn:hatxt1} respectively, such that both $\mathcal{F}$ and $\widehat{\mathcal{F}}$ belong to the same (potentially infinite) family of linearly independent functions. Further, suppose that the following conditions are satisfied: 
    \begin{enumerate}
        \item \textbf{Linearity:} For all $\ell \in \mathcal{G}$, $\displaystyle g_{\ell} = \widehat{g}_{\ell} = \text{Id}$, where $\text{Id}$ is the identity function.
        \item \textbf{Non-degenerate latent processes:} For all $d \in [D]$,  $\exists \,  t_0 \in [T]$ such that $\mathbf{Z}^{(t_0)}_d \neq 0$, that is, none of the time series is trivially zero. A similar condition holds for $\widehat{\mathbf{Z}}$.
        \item \change{\textbf{Characteristic function of the noise:} For all $\ell \in \mathcal{G}$ and $t \in [T]$, the set $\displaystyle \left\{ {x} \in \mathbb{R} \mid \varphi_{\varepsilon_\ell^{(t)}} ({x}) = 0\right\}$ has measure zero where $\varphi_{\varepsilon_\ell^{(t)}}$ represents the characteristic function of the density $p_{\varepsilon_\ell^{(t)}}$}.
        
    \end{enumerate}
     If $p(\Xt(\ell) | \Zt ; \Fx) = p \left(\hatXt(\ell)| \hatZt; \hatFx \right)$ for every $\ell \in \mathcal{G}$ and $t \in [T]$, then $\mathbf{Z} = P\tilde{\mathbf{Z}}$ and $\mathcal{F} = \widehat{\mathcal{F}}$ for some permutation matrix $P$. \label{thm:linear_sfp_identifiability}
\end{theorem}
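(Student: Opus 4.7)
The strategy is to convert the equality of observational distributions into a deterministic functional equation on $\mathcal{G}$, then exploit linear independence of the common spatial factor family together with non-degeneracy of the latents. Since $g_\ell = \widehat{g}_\ell = \mathrm{Id}$ and the noise characteristic functions are almost-everywhere nonzero, Lemma~1 (the Denoising Lemma) immediately yields the pointwise identity
\begin{equation*}
\Fx^\top \Zt \;=\; \hatFx^\top \hatZt \qquad \forall\,\ell\in\mathcal{G},\; t\in[T].
\end{equation*}
This reduces identifiability to a purely algebraic statement about linear combinations of spatial factor functions vanishing identically on $\mathcal{G}$.

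For each fixed $t$, rewrite the identity as
\begin{equation*}
\sum_{d=1}^D \mathbf{Z}^{(t)}_d\, F_{\psi_d}(\ell) \;-\; \sum_{d=1}^D \widehat{\mathbf{Z}}^{(t)}_d\, F_{\widehat{\psi}_d}(\ell) \;=\; 0 \qquad \forall\,\ell\in\mathcal{G}.
\end{equation*}
To avoid double-counting factors shared between $\mathcal{F}$ and $\widehat{\mathcal{F}}$, I would partition $\mathcal{F}\cup\widehat{\mathcal{F}}$ into three disjoint subsets (functions appearing only in $\mathcal{F}$, only in $\widehat{\mathcal{F}}$, or in both), regroup the coefficient attached to each distinct function in the union, and then invoke the linear independence hypothesis to conclude that every regrouped coefficient vanishes as a number, for every $t$.

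Non-degeneracy then forces $\mathcal{F}=\widehat{\mathcal{F}}$: if some $F_{\psi_d}$ lay outside $\widehat{\mathcal{F}}$, its regrouped coefficient would be exactly $\mathbf{Z}^{(t)}_d$, which would have to vanish for every $t$, contradicting the hypothesis that $\mathbf{Z}^{(t_0)}_d\neq 0$ for some $t_0$; a symmetric argument handles $\widehat{\mathcal{F}}\setminus\mathcal{F}$. The set equality yields a permutation $\sigma$ with $F_{\psi_d}=F_{\widehat{\psi}_{\sigma(d)}}$, and the vanishing coefficient attached to each shared function then reads $\mathbf{Z}^{(t)}_d = \widehat{\mathbf{Z}}^{(t)}_{\sigma(d)}$, i.e.\ $\mathbf{Z} = P\widehat{\mathbf{Z}}$ for the permutation matrix $P$ encoding $\sigma$. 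The main obstacle is the bookkeeping in the regrouping step: applied naively to two length-$D$ lists, linear independence would treat duplicates as independent, and without non-degeneracy one could keep ``orphan'' spatial factors paired with identically-zero latent components. Partitioning the factor union together with the non-degeneracy hypothesis closes both loopholes. No scaling ambiguity arises in the linear case because the identification is as elements of the family, rather than up to a scalar multiple.
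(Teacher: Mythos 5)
Your proposal is correct and follows essentially the same route as the paper's proof: apply the Denoising Lemma to reduce to the pointwise identity $\mathbf{F}_\ell^\top\mathbf{Z}^{(t)}=\widehat{\mathbf{F}}_\ell^\top\widehat{\mathbf{Z}}^{(t)}$, partition the factors into those appearing only in $\mathcal{F}$, only in $\widehat{\mathcal{F}}$, or in both (the paper's sets $I$, $J$ and matching map $\mathcal{V}$), and use linear independence plus non-degeneracy to rule out orphan factors and read off the permutation. The bookkeeping concerns you flag are exactly the ones the paper's index-set construction is designed to handle.
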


\begin{proof}
    By Lemma \ref{lemma:denoising_lemma},
    \begin{align}
        &p(\Xt(\ell) | \Zt ; \Fx) = p \left(\hatXt(\ell)| \hatZt; \hatFx \right) \nonumber \\
         \implies &\Fx^\top \Zt = \hatFx^\top \hatZt \quad \forall \ell \in \mathcal{G}, \quad t \in [T] \nonumber \\
        \implies  &\sum_{j=1}^D F_{\psi_j}(\ell) \Zt_j =  \sum_{j=1}^D {F}_{\widehat{\psi}_j}(\ell) \hatZt_j \quad \forall \ell \in \mathcal{G}, \quad t \in [T] \nonumber \\
        \implies  &\sum_{j=1}^D F_{\psi_j}(\ell) \Zt_j  -  \sum_{j=1}^D {F}_{\widehat{\psi}_j}(\ell) \hatZt_{j} = 0 \quad \forall \ell \in \mathcal{G}, \quad t \in [T] \label{eqn:li_equation}
    \end{align}
    
    Suppose $\displaystyle \left\{ \psi_1, \ldots, \psi_D \right\} \cap \left\{ \widehat{\psi}_1, \ldots, \widehat{\psi}_D \right\} = \varnothing$. Then, due to the fact that both $\mathcal{F}$ and $\widehat{\mathcal{F}}$ are subsets from the same family of linearly independent functions, this would imply that $\displaystyle \Zt_j = \hatZt_j = 0 \quad \forall j \in [D], t \in [T]$, which is a contradiction since we assume that none of the time series are all 0. This implies that $\displaystyle \left\{ \psi_1, \ldots, \psi_D \right\} \cap \left\{ \widehat{\psi}_1, \ldots, \widehat{\psi}_D \right\}  \neq \varnothing$.
    Assume $\displaystyle V = \left\{ (i,j) : \psi_i = \tilde{\psi_j} \right\}$ and define $\displaystyle I = \left\{ i : \exists j \, \text{ such that } (i, j) \in V\right\}$, $\displaystyle J = \left\{ j : \exists \, i \text{ such that } (i, j) \in V\right\}$. Define the function $\displaystyle \mathcal{V} : I \rightarrow J, \quad \mathcal{V}(i) = j \text{ such that } (i, j) \in V$.
    Then \eqref{eqn:li_equation} can be written as:
    \begin{align*}
         \sum_{\substack{j=1 \\ j \notin I}}^D F_{\psi_j}(\ell) \Zt_j  -  \sum_{\substack{j=1 \\ j \notin J}}^D {F}_{\widehat{\psi}_j}(\ell) \hatZt_j + \sum_{\substack{j=1 \\ j \in I}}^D {F}_{{\psi_j}}(\ell) \left(\Zt_j - \hatZt_{\mathcal{V}(j)} \right) = 0 \quad \forall \ell \in \mathcal{G}, t \in [T].
    \end{align*}
    If $I^\complement \neq \varnothing$, then $\displaystyle \Zt_j =0 \, \, \forall  j \in I^\complement$ due to the linear independence of $\displaystyle F_{\psi_j}$, which contradicts our assumption of non-zero time series. Therefore, we must have that $I^\complement = \varnothing$, which implies that  $\displaystyle \{ \psi_1, \ldots, \psi_D\} = \{ \tilde{\psi}_1, \ldots, \tilde{\psi}_D\}$, and $\Zt_j = \Zt_{\mathcal{V}(j)} \, \, \forall j \in [D], t \in [T]$.
\end{proof}

\subsubsection{General Identifiability}
We now turn our attention to the general setting where $g_\ell$ can be  nonlinear. Before proving the identifiability result, we state several useful lemmas  from real analysis. We first recall some useful properties of real analytic functions.

\begin{definition}[Real Analytic Functions] 
Let $U$ be an open set in $\mathbb{R}^K$. A function $f : U \rightarrow \mathbb{R}$ is real analytic (or simply analytic) if at each point $x \in U$, the function $f$ has a convergent power series representation that converges (absolutely) to $f(x)$ in some neighborhood of $x$.
\end{definition}

In other words, real analytic functions can be written using a power series representation for every point in the domain. Some examples include polynomial functions and the family of RBF kernels. Next, we recall the following, well-known identity theorem for real analytic functions.

\begin{lemma}[Identity Theorem for Real Analytic Functions \citep{2022Tasty}] \label{lemma:identity_thm_analytic}
    Suppose $f: U \rightarrow \mathbb{R}$ is a real analytic function defined on a connected domain $U \subset \mathbb{R}^K$. If $f=0$ on a nonempty, open subset $V \subset U$, then $f\equiv 0$ on $U$.
\end{lemma}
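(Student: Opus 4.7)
The plan is to prove the Identity Theorem by a standard topological argument that leverages the connectedness of $U$. Specifically, I will define the set $A = \{ x \in U : \partial^\alpha f(x) = 0 \text{ for every multi-index } \alpha \}$, where $\partial^\alpha f$ denotes the mixed partial derivative indexed by $\alpha$, and show that $A$ is simultaneously nonempty, open, and closed in $U$. Connectedness will then force $A = U$, after which the local power series representation at any point of $U$ immediately yields $f \equiv 0$.

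First, I would verify that $A$ is nonempty. Pick any $x_0 \in V$. Since $f$ is real analytic at $x_0$, it equals its convergent Taylor series on some open neighborhood $W \subseteq V$ of $x_0$. Because $f \equiv 0$ on $W$, all coefficients of this Taylor expansion must vanish, which forces every partial derivative of $f$ at $x_0$ to be zero, so $x_0 \in A$.

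Next, $A$ is closed: real analytic functions are $C^\infty$, so each $\partial^\alpha f$ is continuous on $U$, and each level set $\{x \in U : \partial^\alpha f(x) = 0\}$ is closed in $U$; then $A$ is the intersection of countably many closed sets and therefore closed. The most delicate step is showing that $A$ is open. For any $y \in A$, analyticity provides a convergent power series expansion of $f$ centered at $y$ on some open polydisk $W' \ni y$ with $W' \subseteq U$. Because every Taylor coefficient at $y$ vanishes by the definition of $A$, the series is identically zero, so $f \equiv 0$ on $W'$. Termwise differentiation of the zero series (valid inside the region of convergence) then shows $\partial^\alpha f \equiv 0$ throughout $W'$ for every $\alpha$, giving $W' \subseteq A$.

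The main obstacle will be handling the multivariate Taylor machinery cleanly: one must ensure that the representation of $f$ by its Taylor series at $y$ genuinely holds on an open neighborhood and that termwise differentiation is justified there. Both follow from the absolute convergence of the series on polydisks, which is built into the definition of real analyticity. Once $A$ is shown to be nonempty, open, and closed, connectedness of $U$ forces $A = U$, and the local analytic expansion at any point of $U$ then gives $f \equiv 0$ on all of $U$.
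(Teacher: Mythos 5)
Your proof is correct. Note that the paper does not actually prove this lemma --- it is stated as a recalled, classical result with a citation to \citet{2022Tasty} --- so there is no in-paper argument to compare against; what you give is the standard and complete connectedness proof: the set $A$ of points where all partial derivatives vanish is nonempty (it contains $V$), closed (intersection of zero sets of the continuous functions $\partial^\alpha f$), and open (vanishing Taylor coefficients at $y\in A$ kill the local power series on a whole polydisk), hence equals $U$. One small simplification you could make: for both the nonemptiness and the openness steps, once you know $f\equiv 0$ on an open set, every $\partial^\alpha f$ vanishes there directly because derivatives depend only on local behavior, so the appeal to uniqueness of Taylor coefficients and to termwise differentiation is not needed; the only place analyticity is genuinely used is to conclude that vanishing of all derivatives at $y$ forces $f\equiv 0$ near $y$.
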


We also recall another important result about real analytic functions which we will use in our proof.
\begin{lemma}[Zero sets of analytic functions have zero measure \citep{mityagin2015zero}] \label{lemma:zero_measure_analytic}
    Let $f : U \rightarrow \mathbb{R}$ be a real analytic function on a connected open domain $U \subset \mathbb{R}^K$. If $f$ is not identically zero, then its zero set
    \begin{equation*}
        \Lambda_f = \{ x \in U \mid f(x) = 0\}
    \end{equation*}
    has a zero Lebesgue measure $\mu(\Lambda_f) = 0$.
\end{lemma}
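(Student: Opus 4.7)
The plan is to prove this by induction on the dimension $K$, using the identity theorem (Lemma \ref{lemma:identity_thm_analytic}) as the core tool and Fubini's theorem to handle the inductive step. Since $\Lambda_f$ is $F_\sigma$ (a countable union of closed sets, as $f$ is continuous and $U$ is second countable), it is Lebesgue measurable, so speaking of $\mu(\Lambda_f)$ makes sense.

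For the base case $K=1$, I would show that the zeros of $f$ are isolated in $U$. Given any zero $x_0 \in \Lambda_f$, analyticity yields a convergent power series $f(x) = \sum_{n \geq 0} a_n (x - x_0)^n$ on some neighborhood of $x_0$. If every coefficient $a_n$ vanished, $f$ would be identically zero on an open subset of $U$, and Lemma \ref{lemma:identity_thm_analytic} would force $f \equiv 0$ on the connected set $U$, contradicting the hypothesis. Hence there is a smallest $m$ with $a_m \neq 0$, and $f(x) = (x - x_0)^m h(x)$ with $h$ analytic and $h(x_0) \neq 0$; continuity of $h$ makes $x_0$ an isolated zero. Since $U$ is second countable, an isolated set is countable, hence has Lebesgue measure zero.

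For the inductive step, assume the result holds in dimension $K-1$. Since $U$ is a countable union of open boxes $B = B' \times I$ with $B' \subset \mathbb{R}^{K-1}$ and $I \subset \mathbb{R}$, it suffices to prove $\mu(\Lambda_f \cap B) = 0$ on each such box. Write points of $B$ as $(y,z)$ and define $A = \{ y \in B' : f(y, \cdot) \equiv 0 \text{ on } I\}$. For $y \notin A$, the one-dimensional base case applied to the analytic function $z \mapsto f(y,z)$ shows $\{z \in I : f(y,z) = 0\}$ has measure zero. The key step is showing $A$ itself has $(K-1)$-dimensional measure zero. For this, fix any $z_0 \in I$ and observe that if $y \in A$, then $\partial_z^n f(y, z_0) = 0$ for every $n \geq 0$. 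Each map $\varphi_n(y) = \partial_z^n f(y, z_0)$ is real analytic on $B'$. If every $\varphi_n$ were identically zero on $B'$, the Taylor expansion in $z$ around $z_0$ would give $f \equiv 0$ on a neighborhood of $B' \times \{z_0\}$, hence $f \equiv 0$ on $U$ by Lemma \ref{lemma:identity_thm_analytic}, a contradiction. Thus some $\varphi_{n_0}$ is not identically zero on $B'$, and the inductive hypothesis gives $\mu_{K-1}(\{\varphi_{n_0} = 0\}) = 0$, so $\mu_{K-1}(A) = 0$. Applying Fubini's theorem,
\begin{equation*}
\mu(\Lambda_f \cap B) = \int_{B'} \mu_1\bigl(\{ z \in I : f(y,z) = 0\}\bigr)\, dy = \int_{A} \mu_1(I)\, dy + \int_{B' \setminus A} 0\, dy = 0,
\end{equation*}
and summing over the countable cover yields $\mu(\Lambda_f) = 0$.

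The main obstacle is the step that handles the set $A$ of ``bad'' slices: a priori $A$ could be large, and a naive Fubini argument would fail. The trick is to trade the one-dimensional vanishing of $f$ in $z$ for the simultaneous vanishing of countably many analytic functions $\varphi_n$ in the transverse variables $y$, and then invoke the identity theorem globally to guarantee that at least one $\varphi_n$ is nontrivial so the inductive hypothesis applies. A secondary technicality is that $U$ need not be a product domain, which is why I pass to a countable cover by open boxes before running Fubini.
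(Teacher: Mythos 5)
Your proof is correct. The paper does not prove this lemma itself---it is imported directly from the cited reference \citep{mityagin2015zero}---and your induction-on-dimension argument (isolated zeros in the one-dimensional base case, then Fubini on a countable cover by boxes, with the set of ``bad'' slices $A$ controlled by noting that it lies in the zero set of some non-trivial transverse analytic function $\varphi_{n_0}(y)=\partial_z^{n_0} f(y,z_0)$) is essentially the standard proof given in that reference, so there is nothing to compare beyond noting that your self-contained argument fills in what the paper only cites.
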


Using these results, we prove some useful results about linearly independent families of real analytic functions. 

\begin{lemma} \label{lemma:overlapping_li_zero}
    Suppose $F = \{ f_1, \ldots, f_D\}$ and $G = \{g_1, \ldots, g_D \}$ are two sets of linearly independent, real analytic functions defined on $\mathcal{G} = (0, 1)^K$. Define the matrices

    \begin{equation}
        \mathcal{M}_F(\ell_1, \ldots, \ell_D) = \begin{bmatrix}
            f_1(\ell_1) & \ldots & f_1(\ell_D) \\
            \vdots & & \vdots\\
            f_D(\ell_1) & \ldots & f_D(\ell_D) 
        \end{bmatrix} \qquad \qquad \mathcal{M}_G(\ell_1, \ldots, \ell_D) = \begin{bmatrix}
            g_1(\ell_1) & \ldots & g_1(\ell_D) \\
            \vdots & & \vdots\\
            g_D(\ell_1) & \ldots & g_D(\ell_D)
        \end{bmatrix}. \label{eqn:matrices_defn}
    \end{equation}
    Let 
    \begin{equation}
     \Phi_{F} = \{ \mathbf{\boldsymbol \ell} = (\ell_1, \ldots, \ell_D) \mid {\ell}_i \in \mathcal{G}, \mathcal{M}_F(\mathbf{\boldsymbol \ell}) \text{ is full rank}\} \label{eqn:phi_definition}   
    \end{equation}
     denote the set of $D-$tuples in $\mathbb{R}^{K}$ for which the matrix $\mathcal{M}_F$ is full rank. Then 
     \begin{enumerate}
         \item $\mu(\Phi_{F} \cap \Phi_{G}) = 1$. In particular, $\Phi_{F}\cap \Phi_{G} \neq \varnothing$.
         \item $\Phi_F \cap \Phi_G$ is an open set. 
     \end{enumerate} 
\end{lemma}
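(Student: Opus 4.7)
The plan is to reduce both claims to properties of the single real analytic function $\Delta_F(\boldsymbol{\ell}) := \det \mathcal{M}_F(\ell_1, \ldots, \ell_D)$ on the product domain $\mathcal{G}^D = (0,1)^{KD}$, and similarly for $\Delta_G$. Since each $f_i$ is real analytic on $\mathcal{G}$, the entries of $\mathcal{M}_F$ are real analytic in $\boldsymbol{\ell}$, and hence $\Delta_F$ is real analytic on the connected open set $\mathcal{G}^D$. The observation that drives everything is: $\Phi_F = \{\boldsymbol{\ell} \in \mathcal{G}^D : \Delta_F(\boldsymbol{\ell}) \neq 0\}$, i.e., $\Phi_F$ is the complement of the zero set of $\Delta_F$.

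For part 1, the main task is to show that $\Delta_F$ is not identically zero on $\mathcal{G}^D$; then Lemma \ref{lemma:zero_measure_analytic} will immediately give $\mu(\mathcal{G}^D \setminus \Phi_F) = 0$, and similarly for $G$, so $\mu(\Phi_F \cap \Phi_G) = 1$ and in particular the intersection is non-empty. To prove $\Delta_F \not\equiv 0$, I will proceed by induction on $D$, constructing evaluation points $\ell_1, \ldots, \ell_D$ at which the matrix is full rank. The base case $D=1$ is trivial: since $\{f_1\}$ is linearly independent, $f_1$ is not identically zero, so some $\ell_1$ gives $f_1(\ell_1) \neq 0$. For the inductive step, assume I have $\ell_1, \ldots, \ell_{D-1}$ making the upper-left $(D{-}1)\times(D{-}1)$ block full rank. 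Consider the function
\begin{equation*}
h(\ell) := \det\bigl[\mathcal{M}_F(\ell_1, \ldots, \ell_{D-1}, \ell)\bigr].
\end{equation*}
Expanding along the last column, $h(\ell) = \sum_{i=1}^{D} (-1)^{i+D} M_i \, f_i(\ell)$, where $M_D$ is the nonzero $(D{-}1)\times(D{-}1)$ determinant from the inductive hypothesis. Hence $h$ is a nontrivial linear combination of $f_1, \ldots, f_D$, and by the linear independence of $F$, $h \not\equiv 0$. Since $h$ is real analytic on the connected domain $\mathcal{G}$, Lemma \ref{lemma:zero_measure_analytic} gives that its zero set has measure zero, so some $\ell_D \in \mathcal{G}$ satisfies $h(\ell_D) \neq 0$, completing the induction. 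This shows $\Delta_F(\ell_1, \ldots, \ell_D) \neq 0$ at some point, so $\Delta_F \not\equiv 0$.

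For part 2, openness of $\Phi_F \cap \Phi_G$ is immediate: $\Delta_F$ and $\Delta_G$ are continuous (being real analytic), so $\Phi_F = \Delta_F^{-1}(\mathbb{R}\setminus\{0\})$ and $\Phi_G = \Delta_G^{-1}(\mathbb{R}\setminus\{0\})$ are both open in $\mathcal{G}^D$, and their intersection is open.

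The main obstacle is the induction step for part 1, specifically ensuring that the linear combination $h$ is truly nontrivial; this is where the assumption that $F$ is a linearly independent family is essential (if $F$ were only a spanning family, the minors $M_i$ could all vanish in a compatible way). The rest is a routine application of the analytic machinery already provided by Lemmas \ref{lemma:identity_thm_analytic} and \ref{lemma:zero_measure_analytic}. Note that Lemma \ref{lemma:identity_thm_analytic} is not strictly needed since the measure-zero lemma suffices for our purposes, but it provides an alternative route: if $\Delta_F$ vanished on a nonempty open subset of $\mathcal{G}^D$ one could contradict non-vanishing at any single point.
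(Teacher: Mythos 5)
Your proposal is correct and follows essentially the same route as the paper: both reduce each claim to the fact that $\det\mathcal{M}_F$ is a real analytic function on $\mathcal{G}^D$ that is not identically zero, apply the measure-zero lemma plus inclusion--exclusion for part 1, and use continuity of the determinant for part 2. The only difference is that you supply an explicit cofactor-expansion induction for the key fact that linear independence yields evaluation points with nonzero determinant, whereas the paper delegates that step to an external reference in a footnote; your version is therefore more self-contained but not a different argument.
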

\begin{proof}

1. The complement of the set $\Phi_{F}$ is \begin{equation*}
        \Phi_F^\complement = \left\{ \mathbf{\boldsymbol \ell} = (\ell_1, \ldots, \ell_D) \mid {\ell}_i \in \mathcal{G}, \det\left(\mathcal{M}_F\right)(\boldsymbol{\ell})=0 \right\}.
    \end{equation*}
    Since $\det\left(\mathcal{M}_F\right)(\boldsymbol {\ell})$ is a polynomial in real analytic functions, it is also a real analytic function in $\mathbb{R}^{DK}$. 

    Note that since $F$ is linearly independent, $\Phi_F$ is non-empty. \footnote{See for example: \\ \url{https://math.stackexchange.com/questions/3516189/prove-existence-of-evaluation-points-such-that-the-matrix-has-nonzero-determinan}.} Thus, $\det\left(\mathcal{M}_F\right)(\boldsymbol{\ell})$ is not identically zero. By Lemma \ref{lemma:zero_measure_analytic}, $\mu \left(\Phi_F^\complement\right) = 0$. Similarly, $\mu \left(\Phi_G^\complement\right) = 0$, and $\displaystyle \mu \left( \Phi_F^\complement \cap \Phi_G^\complement \right) \leq \mu \left( \Phi_F^\complement\right) = 0 \implies \mu \left( \Phi_F^\complement \cap \Phi_G^\complement \right) = 0$.
Thus, we obtain that
    \begin{align*}
        \mu \left( \Phi_F \cap \Phi_G\right) &= \mu \left( \Phi_F\right) + \mu \left( \Phi_G\right) - \mu\left( \Phi_F \cup \Phi_G \right)\\
        &= \mu(\mathcal{G})-\mu \left( \Phi_F^\complement\right) + \mu(\mathcal{G}) - \mu \left( \Phi_G^\complement\right) - \left(\mu(\mathcal{G}) - \mu\left( \Phi_F^\complement \cap \Phi_G^\complement \right) \right)\\
        &= \mu(\mathcal{G}) - \mu \left( \Phi_F^\complement\right) - \mu \left( \Phi_G^\complement\right) + \mu\left( \Phi_F^\complement \cap \Phi_G^\complement \right) \\
        &= 1.
    \end{align*}

2. 
Since the function $\det\left(\mathcal{M}_F\right)(\mathbf{\boldsymbol \ell})$ is real analytic, it is also continuous. Since the zero set of a continuous function is closed, $\Phi_F^\complement$ is closed, which implies that $\Phi_F$ is open. Similarly, $\Phi_G$ is open. Since the finite intersection of open sets is open, $\Phi_F \cap \Phi_G$ is open.
\end{proof}

\begin{lemma} \label{lemma:li_open_sets}
    Suppose $F = \{ f_1, \ldots, f_D\}$ is a set of linearly independent real analytic functions defined on $\mathcal{G} = (0, 1)^K$. Then $F$ is linearly independent in every non-empty open set $U \subset \mathcal{G}$.
\end{lemma}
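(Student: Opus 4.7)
The plan is to prove the contrapositive by direct contradiction using the identity theorem for real analytic functions (Lemma \ref{lemma:identity_thm_analytic}), which is already stated in the paper.

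First, I would suppose for the sake of contradiction that there exists a non-empty open set $U \subset \mathcal{G}$ on which $F$ fails to be linearly independent. Then by definition there are scalars $c_1, \ldots, c_D$, not all zero, such that the function
\begin{equation*}
    h(\ell) = \sum_{i=1}^D c_i f_i(\ell)
\end{equation*}
vanishes identically on $U$. Since each $f_i$ is real analytic on $\mathcal{G}$ and real analyticity is preserved under finite linear combinations, $h$ is itself real analytic on $\mathcal{G}$.

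Next, I would invoke Lemma \ref{lemma:identity_thm_analytic}. Note that $\mathcal{G} = (0,1)^K$ is connected as a product of connected intervals, so the hypothesis of the identity theorem is met: $h$ is a real analytic function on a connected domain that vanishes on a non-empty open subset $U$. Therefore $h \equiv 0$ on all of $\mathcal{G}$. This means $\sum_{i=1}^D c_i f_i(\ell) = 0$ for every $\ell \in \mathcal{G}$ with coefficients not all zero, directly contradicting the assumption that $F = \{f_1, \ldots, f_D\}$ is linearly independent on $\mathcal{G}$. Hence no such dependence can exist on $U$, proving the lemma.

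The proof is essentially immediate once the identity theorem is in hand, so I do not anticipate any substantive obstacle. The only conceptual point worth noting is the (trivial) verification that $\mathcal{G}$ is connected so that the identity theorem applies globally; this is why the argument extends a local relation to the whole domain. No additional assumptions beyond those already present in the lemma statement are needed.
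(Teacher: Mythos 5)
Your proof is correct and follows essentially the same route as the paper's: assume a nontrivial linear dependence on the open set $U$, observe that the combination $\sum_i c_i f_i$ is real analytic, and apply the identity theorem (Lemma \ref{lemma:identity_thm_analytic}) on the connected domain $\mathcal{G}$ to extend the vanishing globally, contradicting linear independence. Your explicit remark that $\mathcal{G}=(0,1)^K$ is connected is a small, harmless addition that the paper leaves implicit.
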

\begin{proof}
Suppose there exists a nonempty open set $U \subset \mathcal{G}$ in which $F$ is linearly dependent. This implies $\exists c_1, \ldots c_D \in \mathbb{R}$, not all zero, such that
\begin{equation*}
    \sum_{i=1}^D c_i f_i(\ell) = 0 \qquad \forall \ell \in U.
\end{equation*}
Then, by Lemma \ref{lemma:identity_thm_analytic}, the real analytic function $\sum_{i=1}^D c_i f_i(\ell)$ is identically zero everywhere in $\mathcal{G}$, which is a contradiction to the linear independence of $F$. 
\end{proof} 

We are now ready to prove the identifiability of the model under general, diffeomorphic non-linearities in the mapping between the latent and observational space.

\begin{theorem}[Identifiability of General SFPs]
    Suppose we are given two SFPs $\mathbf{X} = \text{SFP}\left(\mathbf{Z}, \mathcal{F}, \left\{ g_{\ell} \mid \ell \in \mathcal{G} \right\}, p_{\varepsilon_{\ell}^{(t)}} \right)$ and $\widehat{\mathbf{X}} = \text{SFP}\left(\widehat{\mathbf{Z}}, \widehat{\mathcal{F}}, \left\{\widehat{g}_{\ell} \mid \ell \in \mathcal{G} \right\}, \change{p_{{\varepsilon}_{\ell}^{(t)}}} \right)$ specified by Equations \ref{eqn:xt1} and \ref{eqn:hatxt1} respectively, such that both $\mathcal{F}$ and $\widehat{\mathcal{F}}$ belong to the same (potentially infinite) family of linearly independent functions. Further, suppose that the following conditions are satisfied: 
    \begin{enumerate}
        \item \textbf{Diffeomorphisms:} For all $\ell \in \mathcal{G}$, $\displaystyle g_{\ell}, \widehat{g}_{\ell}$ are diffeomorphisms, that is, $g_{\ell}, \widehat{g}_{\ell}$ are invertible and continuously differentiable, and their inverses are also continuously differentiable.
        \item \textbf{Real analytic functions:} All the functions $F_{\psi_i} \in \mathcal{F}, F_{\widehat{\psi}_i} \in \widehat{\mathcal{F}}$ are real analytic. 
        \item \textbf{Non-degenerate latent processes:} For all $d \in [D]$,  $\exists \,  t_0 \in [T]$ such that $\mathbf{Z}^{(t_0)}_d \neq 0$, that is, none of the time series is trivially zero. A similar condition holds for $\widehat{\mathbf{Z}}$.
        \item \change{\textbf{Characteristic function of the noise:} For all $\ell \in \mathcal{G}$ and $t \in [T]$, the set $\displaystyle \left\{ {x} \in \mathbb{R} \mid \varphi_{\varepsilon_\ell^{(t)}} ({x}) = 0\right\}$ has measure zero where $\varphi_{\varepsilon_\ell^{(t)}}$ represents the characteristic function of the density $p_{\varepsilon_\ell^{(t)}}$}.
    \end{enumerate}
     If $p(\Xt(\ell) | \Zt ; \Fx) = p \left(\hatXt(\ell)| \hatZt; \hatFx \right)$ for every $\ell \in \mathcal{G}$ and $t \in [T]$, then $\mathbf{Z} = PS\tilde{\mathbf{Z}}$ and $\mathcal{F} = \widehat{\mathcal{F}}$ for some permutation matrix $P$ and scaling matrix $S$. \label{thm:sfp_identifiability}
\end{theorem}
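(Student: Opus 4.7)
The plan is to reduce the nonlinear identifiability problem to the already-proven linear case (Theorem~\ref{thm:linear_sfp_identifiability}) by exploiting the real-analytic structure of the spatial factors together with the infinite spatial resolution.

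First I would apply the denoising argument from Lemma~\ref{lemma:denoising_lemma} to the assumed distributional equality to obtain the pointwise identity
\begin{equation*}
g_\ell\bigl(\Fx^\top \Zt\bigr) \;=\; \widehat{g}_\ell\bigl(\hatFx^\top \hatZt\bigr), \qquad \forall \ell \in \mathcal{G},\, t \in [T],
\end{equation*}
and define the diffeomorphism $h_\ell := g_\ell^{-1} \circ \widehat{g}_\ell$, yielding $\Fx^\top \Zt = h_\ell\bigl(\hatFx^\top \hatZt\bigr)$. Next, I would invoke Lemma~\ref{lemma:overlapping_li_zero} to select $D$ locations $\ell_1,\dots,\ell_D \in \Phi_{\mathcal{F}} \cap \Phi_{\widehat{\mathcal{F}}}$ where both $\mathcal{M}_{\mathcal{F}}$ and $\mathcal{M}_{\widehat{\mathcal{F}}}$ are simultaneously invertible. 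Stacking the $D$ instances of the identity at these locations gives a diffeomorphism $\Psi : \mathbb{R}^D \to \mathbb{R}^D$ with $\Zt = \Psi(\hatZt)$, explicitly
\begin{equation*}
\Psi(\widehat{\mathbf{z}}) \;=\; \bigl(\mathcal{M}_{\mathcal{F}}^\top\bigr)^{-1} \bigl[\, h_{\ell_i}\bigl(\widehat{\mathbf{F}}_{\ell_i}^{\!\top} \widehat{\mathbf{z}}\bigr)\, \bigr]_{i=1}^D .
\end{equation*}

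The key step, and main obstacle, is to upgrade $\Psi$ from a general diffeomorphism to a linear map. Substituting $\Psi$ back into the base identity, at every $\ell \in \mathcal{G}$ we have $\Fx^\top \Psi(\widehat{\mathbf{z}}) = h_\ell\bigl(\hatFx^\top \widehat{\mathbf{z}}\bigr)$. Since the right hand side depends on $\widehat{\mathbf{z}}$ only through the scalar $\hatFx^\top \widehat{\mathbf{z}}$, differentiating in $\widehat{\mathbf{z}}$ forces
\begin{equation*}
\Fx^\top J_\Psi(\widehat{\mathbf{z}}) \;=\; c(\ell, \widehat{\mathbf{z}})\, \hatFx^\top,
\end{equation*}
for some scalar-valued function $c$. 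Evaluating this row constraint at the chosen $\ell_1,\dots,\ell_D$ yields $J_\Psi(\widehat{\mathbf{z}}) = \bigl(\mathcal{M}_{\mathcal{F}}^\top\bigr)^{-1} \operatorname{diag}(c_i(\widehat{\mathbf{z}})) \mathcal{M}_{\widehat{\mathcal{F}}}^\top$, and plugging this back into the row constraint at an arbitrary additional $\ell \in \mathcal{G}$ produces an overdetermined system that ties the functions $c_i(\widehat{\mathbf{z}})$ together. I would then combine (i) the Schwarz symmetry of the second partial derivatives of $\Psi$ (which expresses compatibility of $J_\Psi$ as an actual Jacobian), with (ii) the identity theorem for real-analytic functions on $\mathcal{G}$ (Lemma~\ref{lemma:identity_thm_analytic}), applied to the family $\{F_{\psi_i}, F_{\widehat{\psi}_i}\}$, to force every $c_i(\widehat{\mathbf{z}})$ to be independent of $\widehat{\mathbf{z}}$. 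This gives $J_\Psi$ constant, hence $\Psi$ affine, and the non-degeneracy assumption~(3) fixes the affine shift so $\Psi$ is linear.

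Finally, linearity of $\Psi$ converts each $h_\ell$ into a linear map $y \mapsto \alpha(\ell)\, y$, so the original generative identity reduces to $\Fx^\top \Zt = \alpha(\ell)\hatFx^\top \hatZt$. I would then absorb the per-location scalar $\alpha(\ell)$ into a diagonal scaling $S$ acting on $\hatZt$ and invoke Theorem~\ref{thm:linear_sfp_identifiability} on the resulting linear SFPs to conclude that $\mathcal{F} = \widehat{\mathcal{F}}$ and that the relation between latents is given by a permutation $P$, yielding $\mathbf{Z} = PS\widehat{\mathbf{Z}}$. I expect the subtle point, and the main obstacle, to be the purely algebraic argument in the middle paragraph: showing that the $D^2$ scalar functions $c_i(\widehat{\mathbf{z}})$ collapse to constants, since this is the step that simultaneously uses analyticity in $\ell$, linear independence of $\mathcal{F}$ and $\widehat{\mathcal{F}}$, and the integrability of $J_\Psi$.
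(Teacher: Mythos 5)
Your overall architecture matches the paper's: denoise via Lemma~\ref{lemma:denoising_lemma}, form the invertible map $\Zt = \Theta_{\boldsymbol{\ell}}(\hatZt)$ by evaluating at $D$ points of $\Phi_{\mathcal{F}}\cap\Phi_{\widehat{\mathcal{F}}}$ (Lemma~\ref{lemma:overlapping_li_zero}), show its Jacobian is constant, and finish with linear independence on open sets (Lemma~\ref{lemma:li_open_sets}). Where you genuinely differ is the mechanism for proving constancy of the Jacobian. The paper perturbs one evaluation point $\ell_1 \mapsto \ell_1 + \delta\mathbf{u}$ inside the open set $\Phi_{\mathcal{F}}\cap\Phi_{\widehat{\mathcal{F}}}$, equates $\partial_{\hatZt_i}\log\left|\det J_{\Theta_{\boldsymbol{\ell}}}\right|$ before and after the perturbation, deduces that $\frac{h''_{\ell}}{h'_{\ell}}F_{\widehat{\psi}_i}(\ell)$ is locally constant in $\ell$, and then uses pairwise linear independence to force $h''_{\ell}\equiv 0$. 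You instead keep every $\ell\in\mathcal{G}$ in play through the row constraint $\Fx^\top J_\Psi(\widehat{\mathbf{z}}) = h'_{\ell}\left(\hatFx^\top\widehat{\mathbf{z}}\right)\hatFx^\top$ and argue that the diagonal factors $c_i(\widehat{\mathbf{z}})$ are constant. That route is workable: expanding both sides in the basis $\{\widehat{\mathbf{F}}_{\ell_j}\}$ separates the variables $\ell$ and $\widehat{\mathbf{z}}$, showing all ratios $c_i/c_j$ are constant; and since each $c_j$ depends on $\widehat{\mathbf{z}}$ only through $\widehat{\mathbf{F}}_{\ell_j}^\top\widehat{\mathbf{z}}$ with linearly independent functionals for distinct $j$ (or, as you propose, by Schwarz symmetry of the second derivatives of $\Psi$ together with invertibility of $J_\Psi$), each $c_j$ is individually constant. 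This is arguably more direct than the log-determinant computation.

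The step I would push back on is your final reduction. The scalar $\alpha(\ell)=h'_{\ell}$ is indexed by \emph{location}, not by latent coordinate, so it cannot be absorbed into a diagonal scaling $S$ acting on $\hatZt$; moreover Theorem~\ref{thm:linear_sfp_identifiability} requires $g_{\ell}=\widehat{g}_{\ell}=\mathrm{Id}$ on both sides, whereas your reduced system still carries the location-dependent multiplier $\alpha(\ell)$ on one side, so its hypotheses are not met as stated. The repair does not need Theorem~\ref{thm:linear_sfp_identifiability}: once $J_\Psi=A$ is constant, differentiating $\Fx^\top\Zt = h_{\ell}\left(\hatFx^\top\hatZt\right)$ in $\hatZt_i$ gives $\sum_k F_{\psi_k}(\ell)A_{ki} = c_0\, F_{\widehat{\psi}_i}(\ell)$ with $c_0$ a constant, and linear independence of the combined family on an open ball forces $F_{\widehat{\psi}_i}$ to coincide with some $F_{\psi_k}$ and the $i$-th column of $A$ to have exactly one nonzero entry --- which is exactly how the paper concludes. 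So the plan is sound and largely parallels the paper, but the last paragraph should be replaced by this direct linear-independence argument rather than an appeal to the linear theorem.
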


\begin{proof}
    By Lemma \ref{lemma:denoising_lemma}, we obtain that,
    \begin{align}
        &p(\Xt(\ell) | \Zt ; \Fx) = p \left(\hatXt(\ell)| \hatZt; \hatFx \right) \nonumber \\
         \implies & g_{\ell}\left(\Fx^\top \Zt\right) = \widehat{g}_{\ell} \left( \hatFx^\top \hatZt \right) \quad \forall \ell \in \mathcal{G}, \quad t \in [T] \nonumber \\
         \implies & \Fx^\top \Zt = g_{\ell}^{-1} \circ \widehat{g}_{\ell} \left( \hatFx^\top \hatZt \right) = h_{\ell} \left( \hatFx^\top \hatZt \right) \quad \forall \ell \in \mathcal{G}, \quad t \in [T] \label{eqn:sfp_proof_intermediate}
    \end{align}
    where $\displaystyle h_{\ell} = g_{\ell}^{-1} \circ \widehat{g}_{\ell}$. Defining $\Phi_\mathcal{F}$ and $\Phi_{\widehat{\mathcal{F}}}$ as in \eqref{eqn:matrices_defn}, using Lemma \ref{lemma:overlapping_li_zero}, we get that $\displaystyle \Phi_\mathcal{F} \cap \Phi_{\widehat{\mathcal{F}}} \neq \varnothing$. Pick an arbitrary point $\mathbf{\boldsymbol \ell} = \left(\ell_1, \ldots, \ell_D \right) \in  \Phi_\mathcal{F} \cap \Phi_{\widehat{\mathcal{F}}}$. Then, the matrices 

    \begin{equation*}
        \mathcal{M}_{\mathcal{F}}(\mathbf{\boldsymbol \ell}) = \begin{bmatrix}
            F_{\psi_1}(\ell_1) & \ldots & F_{\psi_1}(\ell_D) \\
            \vdots & & \vdots\\
            F_{\psi_D}(\ell_1) & \ldots & F_{\psi_D}(\ell_D) 
        \end{bmatrix} \qquad \qquad \mathcal{M}_{\widehat{\mathcal{F}}}(\mathbf{\boldsymbol \ell}) = \begin{bmatrix}
            {F}_{\widehat{\psi}_1}(\ell_1) & \ldots & {F}_{\widehat{\psi}_1}(\ell_D) \\
            \vdots & & \vdots\\
            {F}_{\widehat{\psi}_D}(\ell_1) & \ldots & {F}_{\widehat{\psi}_D}(\ell_D)
        \end{bmatrix}
    \end{equation*}
    are invertible. Evaluating \eqref{eqn:sfp_proof_intermediate} at the $D$ points $\ell_1, \ldots, \ell_D$, we obtain:
    \begin{align}
        \mathcal{M}_{\mathcal{F}}(\mathbf{\boldsymbol \ell})^\top\Zt &= \mathscr{H}_{\mathbf{\boldsymbol \ell}} \left(\mathcal{M}_{\widehat{\mathcal{F}}}(\mathbf{\boldsymbol \ell})^\top \hatZt \right) \nonumber \\
        \implies \Zt &= \left(\mathcal{M}_{\mathcal{F}}(\mathbf{\boldsymbol \ell})^{\top}\right)^{-1}\mathscr{H}_{\mathbf{\boldsymbol \ell}} \left(\mathcal{M}_{\widehat{\mathcal{F}}}(\mathbf{\boldsymbol \ell})^\top \hatZt \right) := \Theta_\mathbf{\boldsymbol \ell} \left(\hatZt\right), \label{eqn:theta_def} 
    \end{align}
    where $\displaystyle \mathscr{H}_{\mathbf{\boldsymbol \ell}}(y) = \begin{bmatrix}
        h_{\ell_1}(y_1), \ldots, h_{\ell_D}(y_D)
    \end{bmatrix}^\top$ for $y \in \mathbb{R}^D$. Since $\mathscr{H}_\mathbf{\boldsymbol \ell}$ is a component-wise invertible function, and $\mathcal{M}_{\mathcal{F}}(\mathbf{\boldsymbol \ell}), \mathcal{M}_{\widehat{\mathcal{F}}}(\mathbf{\boldsymbol \ell})$ are invertible matrices, the map $\Theta_\mathbf{\boldsymbol \ell}$ is an invertible map between $\Zt$ and $\hatZt$. Furthermore, the above argument can be repeated for any arbitrary $\mathbf{\boldsymbol \ell}' \in \Phi_\mathcal{F} \cap \Phi_{\widehat{\mathcal{F}}}$ to obtain $\Zt = \Theta_{\mathbf{\boldsymbol \ell}'} \left(\hatZt \right)$. Thus, we have that 
    \begin{equation}
        \mathbf{Z}^{(t)} = \Theta_{\mathbf{\boldsymbol \ell}}\left( \hatZt \right) = \Theta_{\mathbf{\boldsymbol \ell}'} \left( \hatZt \right) \text{ for } \mathbf{\boldsymbol \ell}, \mathbf{\boldsymbol \ell}' \in \displaystyle \Phi_\mathcal{F} \cap \Phi_{\widehat{\mathcal{F}}}. \label{eqn:equality_of_maps}
    \end{equation}
     Now, consider the Jacobian $\displaystyle \frac{\partial \Zt}{\partial \hatZt} = J_{\Theta_\mathbf{\boldsymbol \ell}} \left( \hatZt \right)$ of the transformation $\Theta_\mathbf{\boldsymbol \ell}$. Our goal is to prove that $J_{\Theta_\ell}$ is a permutation scaling matrix, that is, for each $i \in [D]$, $\displaystyle \frac{\partial \Zt_i}{\partial \hatZt_j}$ is non-zero for exactly one value of $j \in [D]$. By the chain rule,
    \begin{align}
        J_{\Theta_\mathbf{\boldsymbol \ell}}\left(\hatZt\right) = \left(\mathcal{M}_{\mathcal{F}}(\mathbf{\boldsymbol \ell})^{\top}\right)^{-1}\mathscr{H}_{\mathbf{\boldsymbol \ell}}' \left( \mathcal{M}_{\widehat{\mathcal{F}}}(\mathbf{\boldsymbol \ell})^\top \hatZt \right) \mathcal{M}_{\widehat{\mathcal{F}}}(\mathbf{\boldsymbol \ell})^\top \label{eqn:jacobian_chain_rule}
    \end{align}
    where $\displaystyle \mathscr{H}_{\mathbf{\boldsymbol \ell}}' \left( y \right) =   \begin{bmatrix}
    h_{\ell_1}'(y_1) & & \\
    & \ddots & \\
    & & h_{\ell_D}'(y_d)
  \end{bmatrix}$ for $y \in \mathbb{R}^D$. 
  
  Taking the log-determinant of \eqref{eqn:jacobian_chain_rule}, we obtain
  \begin{align}
      \log \left| \det J_{\Theta_\mathbf{\boldsymbol \ell}} \left( \hatZt \right) \right| = \log \left| \det \left(\mathcal{M}_{\mathcal{F}}(\mathbf{\boldsymbol \ell})^\top\right)^{-1} \right| + \log \left| \det \left(\mathcal{M}_{\widehat{\mathcal{F}}}(\mathbf{\boldsymbol \ell})^\top\right) \right| + \sum_{k=1}^D \log \left| h'_{{{\ell}_k}} \left( \widehat{\mathbf{F}}_{{{\ell}_k}}\hatZt\right) \right| \label{eqn:logdet}
  \end{align}

 Differentiating both sides of \eqref{eqn:logdet} with respect to $\hatZt_i$, we obtain:
  \begin{align}
      \frac{\partial}{\partial \hatZt_i} \log \left| \det J_{\Theta_\mathbf{\boldsymbol \ell}}\left(\hatZt\right) \right| = \sum_{k=1}^D \frac{h''_{{{\ell}_k}}\left( \widehat{\mathbf{F}}_{{{\ell}_k}}^\top \hatZt \right)}{h'_{{{\ell}_k}}\left( \widehat{\mathbf{F}}_{{{\ell}_k}}^\top \hatZt \right)} F_{\widehat{\psi}_i}({{\ell}_k}). \label{eqn:logdet_diff}
  \end{align}
  By the second part of Lemma \ref{lemma:overlapping_li_zero}, $\Phi_\mathcal{F} \cap \Phi_{\widehat{\mathcal{F}}}$ is open. Thus, $\exists r > 0$ such that $B_{DK} \left(\mathbf{\boldsymbol \ell}, r \right) \subset \Phi_\mathcal{F} \cap \Phi_{\widehat{\mathcal{F}}}$, where $B_{DK}(\mathbf{\boldsymbol \ell}, r)$ denotes the open ball in $\mathbb{R}^{DK}$ centered around $\mathbf{\boldsymbol \ell}$ with radius $r > 0$. Choose an arbitrary unit vector $\mathbf{u} \in \mathbb{R}^K$ and consider the point $\mathbf{\boldsymbol \ell}' = \left(\ell_1+\delta\mathbf{u}, \ell_2, \ldots, \ell_D  \right)$ such that $\delta < r$. Since $\displaystyle J_{\Theta_{\mathbf{\boldsymbol \ell}}} \left( \hatZt \right) = \frac{\partial \Zt}{\partial \hatZt}  = J_{\Theta_{\mathbf{\boldsymbol \ell}'}}\left( \hatZt \right)$ for $\boldsymbol{\ell}, \boldsymbol{\ell}' \in \Phi_\mathcal{F} \cap \Phi_{\widehat{\mathcal{F}}}$, $\displaystyle \log \left| \det J_{\Theta_\mathbf{\boldsymbol \ell}} \left( \hatZt \right) \right| = \log \left| \det J_{\Theta_{\mathbf{\boldsymbol \ell}'}} \left( \hatZt \right) \right|$. From \eqref{eqn:logdet_diff}, we obtain
  \begin{align}
    \frac{\partial}{\partial \hatZt_i} \log \left| \det J_{\Theta_\mathbf{\boldsymbol \ell}}\left(\hatZt\right) \right|  &= \sum_{k=1}^D \frac{h''_{{{\ell}_k}}\left( \widehat{\mathbf{F}}_{{{\ell}_k}}^\top \hatZt \right)}{h'_{{{\ell}_k}}\left( \widehat{\mathbf{F}}_{{{\ell}_k}}^\top \hatZt \right)} F_{\widehat{\psi}_i}({{\ell}_k}) \nonumber \\
     &= \frac{h''_{\ell_1+\delta \mathbf{u}}\left( \widehat{\mathbf{F}}_{\ell_1+\delta \mathbf{u}}^\top \hatZt \right)}{h'_{\ell_1+\delta \mathbf{u}}\left( \widehat{\mathbf{F}}_{\ell_1+\delta \mathbf{u}}^\top \hatZt \right)} F_{\widehat{\psi}_i}(\ell_1+\delta \mathbf{u})+  \sum_{k=2}^D \frac{h''_{{{\ell}_k}}\left( \widehat{\mathbf{F}}_{{{\ell}_k}}^\top \hatZt \right)}{h'_{{{\ell}_k}}\left( \widehat{\mathbf{F}}_{{{\ell}_k}}^\top \hatZt \right)} F_{\widehat{\psi}_i}({{\ell}_k})\nonumber \\
     &= \frac{\partial}{\partial \hatZt_i} \log \left| \det J_{\Theta_{\mathbf{\boldsymbol \ell}'}}\left(\hatZt\right) \right| \nonumber \\
    \implies \frac{h''_{\ell_1+\delta \mathbf{u}}\left( \widehat{\mathbf{F}}_{\ell_1+\delta \mathbf{u}}^\top \hatZt \right)}{h'_{\ell_1+\delta \mathbf{u}}\left( \widehat{\mathbf{F}}_{\ell_1+\delta \mathbf{u}}^\top \hatZt \right)} F_{\widehat{\psi}_i}(\ell_1+\delta \mathbf{u}) &= \frac{h''_{\ell_1}\left( \widehat{\mathbf{F}}_{\ell_1}^\top \hatZt \right)}{h'_{\ell_1}\left( \widehat{\mathbf{F}}_{\ell_1}^\top \hatZt \right)} F_{\widehat{\psi}_i}(\ell_1) 
  \end{align}
  Since $\delta < r$ and $\mathbf{u}$ are arbitrary, the function 
  \begin{align}
      \Gamma_i(\hatZt) = \frac{h''_{\ell}\left( \widehat{\mathbf{F}}_{\ell}^\top \hatZt \right)}{h'_{\ell}\left( \widehat{\mathbf{F}}_{\ell}^\top \hatZt \right)} F_{\widehat{\psi}_i}(\ell) \label{eqn:gammai}
  \end{align}
  is a constant with respect to $\ell$ for $\ell \in B_{K}(\ell_1, r)$. Differentiating \eqref{eqn:logdet} with respect to $\hatZt_j, j \neq i$ and repeating the above argument, we obtain that
    \begin{align}
      \Gamma_j(\hatZt) = \frac{h''_{\ell}\left( \widehat{\mathbf{F}}_{\ell}^\top \hatZt \right)}{h'_{\ell}\left( \widehat{\mathbf{F}}_{\ell}^\top \hatZt \right)} F_{\widehat{\psi}_j}(\ell) \label{eqn:gammaj}
  \end{align}
  is constant with respect to $\ell$ for $\ell \in B_K(\ell_1, r)$. From equations \ref{eqn:gammai} and \ref{eqn:gammaj},
    \begin{align}
      F_{\widehat{\psi}_i}(\ell)\Gamma_j(\hatZt) = F_{\widehat{\psi}_j}(\ell) \Gamma_i(\hatZt), \quad \ell \in B_K(\ell_1, r). 
  \end{align}
  Note that since $F_{\widehat{\psi}_i}$ and $F_{\widehat{\psi}_j}$ are linearly independent in $\mathcal{G}$, by Lemma \ref{lemma:li_open_sets}, they are also linearly independent for $\ell \in B_K(\ell_1, r)$, which implies that $\displaystyle \Gamma_i(\hatZt) = \Gamma_j(\hatZt) = 0$. Since these arguments can be repeated for any distinct indices $i, j \in [D]$, we infer that
  \begin{align}
  \Gamma_i(\hatZt) = \frac{h''_{\ell}\left( \widehat{\mathbf{F}}_{\ell}^\top \hatZt \right)}{h'_{\ell}\left( \widehat{\mathbf{F}}_{\ell}^\top \hatZt \right)} F_{\widehat{\psi}_i}(\ell) = 0 \qquad \forall i \in [D],  \ell \in B_K(\ell_1, r). \label{eqn:gamma_zero}    
  \end{align}
  
  Note that \eqref{eqn:gamma_zero} implies that $\displaystyle h''_{\ell}\left(\hatFx^\top \hatZt \right) \equiv 0 \quad  \forall \ell \in B_K(\ell_1, r), \hatZt \in \mathbb{R}^{D}$, otherwise, if for some $\displaystyle {\ell}_0 \text{ and } \hatZt$,  the value of $h''_{{\ell}_0}\left(\widehat{\mathbf{F}}_{{\ell}_0}^\top \hatZt \right) \neq 0$, then $F_{\widehat{\psi_i}}({\ell}_0)=0 \quad \forall i \in [D]$ which would contradict the invertibility of $\mathcal{M}_{\widehat{\mathcal{F}}}(\ell_0)$ for ${\ell}_0 \in B_K(\ell_1, r) \subset \Phi_\mathcal{F} \cap \Phi_{\widehat{\mathcal{F}}}$.

  Therefore,
  \begin{equation}
      h'_{\ell}\left(\hatFx^\top \hatZt \right) = c_0 \quad \forall \ell \in B_K(\ell_1, r) \label{eqn:constancy}
  \end{equation}
  for some constant $c_0 \in \mathbb{R}$.

  Now, we differentiate both sides of \eqref{eqn:sfp_proof_intermediate} with respect to $\hatZt_i$ to obtain:
  \begin{equation}
      \sum_{k=1}^D F_{\psi_k}(\ell) \frac{\partial \Zt_k}{\partial \hatZt_i} = h'_{\ell}\left(\hatFx^\top \hatZt \right) F_{\widehat{\psi}_i}(\ell).
  \end{equation}
  For $\ell \in B_K(\ell_1, r)$, the above equation becomes:
  
  \begin{equation}
      \sum_{k=1}^D F_{\psi_k}(\ell) \frac{\partial \Zt_k}{\partial \hatZt_i} = c_0 F_{\widehat{\psi}_i}(\ell).
  \end{equation}
  Once again, using Lemma \ref{lemma:li_open_sets},  $\mathcal{F} = \left\{ F_{\psi_1}, \ldots, F_{\psi_D}\right\}$ is linearly independent for $\ell \in B_K(\ell_1, r)$. If $F_{\widehat{\psi}_i} \neq F_{{\psi}_k}$ for some $k \in [D]$, then due to linear independence, $\displaystyle \frac{\partial \Zt_k}{\partial \hatZt_i} = 0 \quad \forall k \in [D]$. However, this leads to a contradiction, since $\Zt$ and $\hatZt$ are related by an invertible map, which means the corresponding Jacobian matrix is full-rank. 

  Therefore, $F_{\widehat{\psi}_i} = F_{{\psi}_k}$ for some $k \in [D]$, and $\displaystyle \frac{\partial \Zt_k}{\partial \hatZt_i}$ is non-zero for exactly one $k \in [D]$. Repeating the argument for all $i \in [D]$ yields the result.
\end{proof}

\change{
\subsection{Theoretical assumptions for Rhino}
\label{sec:theory_assumptions_for_rhino}
Since our method, SPACY, relies on Rhino for latent causal discovery, we list the theoretical assumptions used in \citet{gong2022rhino} for the sake of completeness.

The Rhino model is specified by the following equation:
\begin{equation*}
    \Zt_i = f_i \left( \text{Pa}_\mathbf{G}^i (<t), \text{Pa}_\mathbf{G}^i (t)\right) + g_i \left( \text{Pa}_\mathbf{G}^i (<t), \epsilon_i^{(t)} \right) 
\end{equation*}

where $f_i$ is a general differentiable non-linear function, and $g_i$ is a differentiable transform that models the history-dependent noise. The model is known to be identifiable when the following assumptions are satisfied:

\textbf{Assumption 1} (Causal Stationarity). \citep{runge2018causal} The time series $\mathbf{Z}$ with a graph $\mathbf{G}$ is called causally stationary over a time index set $\mathcal{T}$ if and only if for all links $\mathbf{Z}_i^{(t-\tau)} \rightarrow \Zt_j$ in the graph
\begin{equation*}
    \mathbf{Z}^{(t-\tau)}_i \not\!\perp\!\!\!\perp \Zt_j \mid \Zt \backslash \left\{ \mathbf{Z}^{(t-\tau)}_i \right\} \qquad \text{holds for all } t \in \mathcal{T}.
\end{equation*}
Informally, this assumption states that the causal graph does not change over time, i.e., the resulting time series is stationary.

\textbf{Assumption 2} (Causal Markov Property). \citep{peters2017elements} Given a DAG $\mathbf{G}$ and a probability distribution $p$, $p$ is said to satisfy the causal Markov property, if it factorizes according to $\mathbf{G}$, i.e. $\displaystyle p(\mathbf{Z}) = \prod_{i=1}^D p \left( \mathbf{Z}_i \mid \text{Pa}^i_\mathbf{G}(\mathbf{Z}_i) \right)$. In other words, each variable is independent of its non-descendent given its parents.

\textbf{Assumption 3} (Causal Minimality). Given a DAG $\mathbf{G}$ and a probability distribution $p$, $p$ is said to satisfy the causal minimality with respect to $\mathbf{G}$, if $p$ is Markovian with respect to $\mathbf{G}$ but not to any proper subgraph of $\mathbf{G}$. 

\textbf{Assumption 4} (Causal Sufficiency). A set of observed variables $V$ is said to be causally sufficient for a process $\Zt$ if, in the process, every common cause of two or more variables in $V$ is also in $V$, or is constant for all units in the population. In other words, causal sufficiency implies the absence of latent confounders in the data.

\textbf{Assumption 5} (Well-defined Density). The likelihood of $\Zt$ is absolutely continuous
with respect to a Lebesgue or counting measure and $\left| \log p\left( \mathbf{Z}^{(0:T)}; \mathbf{G} \right)\right| < \infty$ for all possible $\mathbf{G}$.

\textbf{Assumption 6} (Conditions on $f$ and $g$). The following conditions are satisfied: 
\begin{enumerate}
    \item All functions $f_i, g_i$ and induced distributions are third-order differentiable.
    \item $f_i$ is non-linear and not invertible w.r.t any nodes in $\text{Pa}_\mathbf{G}^i(t)$.
    \item The double derivative $\displaystyle \left( \log p_{g_i} \left( g_i \left( \epsilon_i^{(t)} \mid \text{Pa}_\mathbf{G}(<t) \right)\right) \right)''$ w.r.t $\epsilon_i^{(t)}$ is zero at most on some discrete points.
\end{enumerate}
}

\section{Implementation Details}
\label{app:implementation_details}

\subsection{Linear variant of SPACY}
 We also implement a linear variant of SPACY, called \textbf{SPACY-L}. This variant models linear relationships with independent noise. $f_d$ is defined as:
\begin{equation}
f_d\left(\text{Pa}^d_\mathbf{G}(\leq t)\right) = \sum_{k = 0}^{\tau} \sum_{d'=1}^{D} (\mathbf{G} \circ W)^{k}_{d',d} \times \mathbf{Z}^{t-k}_{d'}, \label{eqn:f_def}
\end{equation}
where \( \circ \) denotes the Hadamard product, and \( W \in \mathbb{R}^{(\tau+1) \times D \times D} \) is a learned weight tensor. We assume that $\eta^t_d$ is isotropic Gaussian noise.

\subsection{Loss Terms}
We explain how we implement the various loss terms in \eqref{eqn:elbo}.

The first term $ \displaystyle {\log p_{\theta}\left( \mathbf{X}^{(1:T), n} | \mathbf{Z}^{(1:T), n}, \mathbf{F} \right)}$ in \eqref{eqn:elbo} represents the conditional likelihood of the observed data $\mathbf{X}^{(1:T), n}$ conditioned on $\mathbf{Z}^{(1:T), n}$ and $\mathbf{F}$. This is calculated as the mean squared error (MSE) between the recovered and original time series:
\begin{equation*}
    \displaystyle {\log p_{\theta}\left( \mathbf{X}^{(1:T), n} | \mathbf{Z}^{(1:T), n}, \mathbf{F} \right)} = \sum_{\ell=1}^L \left\lVert \mathbf{X}^{(1:T), n}_\ell - \widehat{\mathbf{X}}^{(1:T), n}_\ell\right\rVert^2
\end{equation*}
where $\displaystyle \widehat{\mathbf{X}}_\ell^{(t), n} = g_\ell \left( \left[ \mathbf{F}\mathbf{Z} \right]_\ell^{(t)} \right)$ is the reconstructed time-series from the spatial factor $\mathbf{F}$ and latent time series $\mathbf{Z}$ sampled from the variational distributions. 

The term $\displaystyle \log p_{\theta}\left(\mathbf{Z}^{(1:T), n} | \mathbf{G} \right)$ denotes the conditional likelihood of the latent time-series given the sampled graph $\mathbf{G}$. 

For SPACY-L, this is implemented as follows:

\begin{align}
    &\log p_\theta \left(\mathbf{Z}^{(1:T), n} \middle|  \mathbf{G} \right) \\ &\quad= \sum_{t=L}^T \sum_{d=1}^D \log p_\theta \left(\mathbf{Z}^{(t), n}_d \middle|  \text{Pa}_{\mathbf{G}}^d(\leq t)\right) \nonumber \\ &\quad= \sum_{t=L}^T \sum_{d=1}^D  \left(\mathbf{Z}^{(t),n}_d - \sum_{k=0}^{\tau} \sum_{j=1}^D \left(\mathbf{G} \circ {W} \right)_{j, d}^k \times
      \mathbf{Z}^{(t-k), n}_j \right)^2 \nonumber.
\end{align}

In (the nonlinear variant of) SPACY, we use the conditional spline flow model employed in \citet{durkan2019neural, gong2022rhino}. The conditional spline flow model handles more flexible noise distributions, and can also model history-dependent noise. The structural equations are modeled as follows:

\begin{equation*}
    \mathbf{Z}_d^{(t)} = f_{d}\left(\text{Pa}^d_\mathbf{G}(<t), \text{Pa}^d_\mathbf{G}(t)\right) + w_{d}\left(\text{Pa}^d_\mathbf{G}(<t)\right),
\end{equation*}
where $\displaystyle f_{d}\left(\text{Pa}^d_\mathbf{G}(<t), \text{Pa}^d_\mathbf{G}(t)\right)$ takes the form presented in \eqref{eqn:rhino_eqn}. The spline flow model uses a hypernetwork that predicts parameters for the conditional spline flow model, with embeddings $\mathscr{F}$, and hypernetworks $\xi_\eta$ and $\lambda_\eta$. The only difference is that the output dimension of $\xi_\eta$ is different, being equal to the number of spline parameters. 

The noise variables $\eta_d^{(t)}$ are described using a conditional spline flow model, 
\begin{equation}
    p_{w_d}(w_d(\eta^{(t)}_d) \mid  \text{Pa}_{\mathbf{G}}^d(<t) ) = p_\eta(\eta_d^{(t)}) \left|  \frac{\partial (w_d)^{-1}}{\partial \eta^{(t)}_d}\right|,
\end{equation}

with $\eta_d^{(t)}$ modeled as independent Gaussian noise.  

The marginal likelihood becomes:
\begin{align}
    \log p_\theta \left(\mathbf{Z}^{(1:T), n} \middle|  \mathbf{G} \right) &= \sum_{t=\tau}^T \sum_{d=1}^D \log p_\theta \left( \mathbf{Z}^{(t), n}_{d} \middle|  \text{Pa}_{\mathbf{G}}^d(<t), \text{Pa}_{\mathbf{G}}^d(t) \right) \nonumber \\ &= \sum_{t=\tau}^T \sum_{d=1}^D \log p_{w_d}\left(u_d^{(t), n} \middle| \text{Pa}_{\mathbf{G}}^d(<t) \right)
\end{align}
where $\displaystyle u_{d}^{(t), n} = \mathbf{Z}_d^{(t), n} - f_{d} \left(\text{Pa}_{\mathbf{G}}^d(<t), \text{Pa}_{\mathbf{G}}^d(t) \right)$. 

The prior distribution $p(\mathbf{G})$ is modeled as follows:
\begin{equation}
    p(\mathbf{G}) \propto \exp \left(-\alpha \left\lVert {\mathbf{G}}^{(0:T)}\right\rVert^2 - \sigma h\left(\mathbf{G}^{(0)} \right) \right). \label{eqn:prior_term}
\end{equation}
The first term is a sparsity prior and $h\left(\mathbf{G}^{(0)}\right)$ is the acyclicity constraint from \citet{zheng2018dags}. 

The terms $\mathbb{E}_{q_{\phi}(\mathbf{Z}^{(1:T), n}|\mathbf{X}^{(1:T), n})} \left[  - \log q_{\phi} \left(\mathbf{Z}^{(1:T), n} | \mathbf{X}^{(1:T), n}\right) \right], \mathbb{E}_{q_\phi(\mathbf{G})} {\left[ - \log q_{\phi}(\mathbf{G}) \right]}$ and $\displaystyle \mathbb{E}_{q_\phi(\mathbf{F})} {\left[ - \log q_{\phi}(\mathbf{F}) \right]}$ represent the entropies of the variational distributions and are evaluated in closed form, since their parameters are modeled as samples from Gaussian and Bernoulli distributions.

Finally, the prior term $p(\mathbf{F})$ is evaluated based on the assumed generative distribution mentioned in \eqref{eqn:f_prior}.

\subsection{Spatial Factors}

As detailed in the main paper, we use spatial factors with RBF kernels to model the spatial variability in the data.  To capture more complex spatial structures, we model the scale by introducing two additional parameter matrices $\mathbf{A}$ and $\mathbf{B}$. The matrix $\mathbf{A} = \begin{bmatrix} a & b \\ c & d \end{bmatrix}$ and the vector $\mathbf{B} = \begin{bmatrix} e \\ g \end{bmatrix}$ together influence the covariance structure of the RBF. Specifically, the covariance matrix $\Sigma$ is constructed as:
\begin{equation}
    \Sigma = \mathbf{A} \mathbf{A}^T + \exp(\mathbf{B}),
\end{equation}


This covariance structure enables the RBF to capture anisotropic scaling in different directions. The matrix $\mathbf{A} \mathbf{A}^T$ provides a base covariance matrix, while the exponential transformation of $\mathbf{B}$ ensures that the resulting matrix is positive definite. As a result, the RBF kernel, which determines the spatial factor $\mathbf{F}$, is defined as:
\begin{equation}
    \mathbf{F}_{\ell d} = \exp\left( -\frac{1}{2} \| x_\ell - \boldsymbol{\rho}_d \|^2_{\Sigma^{-1}} \right),
\end{equation}

where $x_\ell$ denotes the spatial coordinates of the $\ell^\text{th}$ grid point, and $\| x_\ell - \boldsymbol{\rho}_d \|^2_{\Sigma^{-1}} = (x_\ell - \boldsymbol{\rho}_d)^T \Sigma^{-1} (x_\ell - \boldsymbol{\rho}_d)$ represents a Mahalanobis distance, allowing the RBF to have a more sophisticated shape that depends on the learned covariance $\Sigma$.



\change{\subsubsection{Choice of Metric in the Spatial Kernels} \label{sec:distance_rep}  
Spatiotemporal data can originate from various sources, take different forms, and represent a range of phenomena. As a result, the choice of distance metric depends on the application: for local phenomena, Euclidean distance may suffice, while global analyses may require metrics that account for the Earth's curvature. To accommodate this diversity, SPACY allows for flexible modeling by incorporating different metrics with the spatial kernels in the spatial factors. 

In this work, we use the SPACY with the RBF kernel equipped with two different metrics based on the target dataset.

\textbf{Euclidean Distance.}
For Cartesian coordinate systems underlying our synthetic datasets, we employ the standard Euclidean distance:
\begin{equation}
    d_{\text{Euc}}(x_\ell, x_{\ell'}) = \|{x}_\ell - {x}_{\ell'}\|_2 = \sqrt{\sum_{k=1}^K (x_{\ell,k} - x_{\ell',k})^2}
\end{equation}
where $K$ is the number of spatial dimensions.

\textbf{Haversine Distance.} For Earth-scale climate observations, we compute great-circle distances using the Haversine formula:
\begin{equation}
    d_{\text{Hav}}({x}_\ell, {x}_{\ell'}) = 2r \arcsin\left(\sqrt{\sin^2\left(\frac{\Delta\phi}{2}\right) + \cos\phi_i \cos\phi_j \sin^2\left(\frac{\Delta\lambda}{2}\right)}\right)
\end{equation}
where $(\phi_i,\lambda_i)$ are latitude/longitude coordinates, $\Delta\phi = \phi_j - \phi_i$, $\Delta\lambda = \lambda_j - \lambda_i$, and $r$ is Earth's radius. This preserves the inherent spherical geometry of the climate dataset.}

\subsection{Multivariate Extension}
\label{app:multi-variate}

We extend SPACY to address multivariate observational time series, where $\mathbf{X} \in \mathbb{R}^{V\times L \times T}$, $V \geq 1$ represents variates or domains. We make the following changes to the model architecture.
The observed timeseries $\left(\mathbf{X}^{(1:T)}_{1:L}\right)_{(1:V)} \in \mathbb{R}^{V\times L \times T}$ now includes observations from multiple variates, with $\mathbf{X}_{(v)}$ representing observed data specific to variate $v$.

\textbf{Forward Model. } For each variate $v$, we learn $D_v$ nodes, such that $D = \sum_{v=1}^V D_v$, where the $D_v, \, \, v\in V$ are input as hyperparameters. We learn separate spatial factors $\mathbf{F}_{(v)} \in \mathbb{R}^{L \times D_v}$ for each variate $v$. Let $\mathbf{Z}_{(v)} \in \mathbb{R}^{D_v \times T}$ denote the latent variables inferred from the $v^\text{th}$ variate. We perform causal discovery on the concatenated representations $\mathbf{Z} = \left[ \mathbf{Z}_{(1)}, \ldots, \mathbf{Z}_{(V)} \right] \in \mathbb{R}^{D \times T}$. In order to map from the latent representations to the multivariate observational space, we perform the tensor multiplication, defined below:
\begin{align}
\mathbf{X}_{\ell, (v)}^{(t)} = g_\ell^{(v)} \left( \left[ \mathbf{F}_{(v)}\mathbf{Z}_{(v)} \right]_\ell^{(t)} \right) + \varepsilon_{\ell, (v)}^{(t)}, \quad
\varepsilon_{\ell, (v)}^{(t)} \sim \mathcal{N}(0, \sigma^2_\ell I)\label{eqn:multiv_decoder}
\end{align}
where $g_\ell^{(v)}$ is the variate-specific nonlinearity. We implement $g_\ell^{(v)}$ as:
\begin{equation*}
    g_\ell^{(v)}(x) = \Xi \left([x, \mathscr{E}_{\ell, (v)}] \right)
\end{equation*}
with learned embedding $\mathscr{E} \in \mathbb{R}^{V \times L \times f}$, where $f$ is the embedding dimension.


\textbf{Variational Inference.} To model the variational distribution of the latent variables, we use a separate multilayer perceptron (MLP) for each variate. Specifically, $\zeta_\mu^{(v)}$ computes the mean, and $\zeta_{\sigma^2}^{(v)}$ computes the log-variance of the variational distribution for the $v^\text{th}$ variate.

\begin{align*}
    q_{\phi} \left(\mathbf{Z}^{(1:T)}_{(v)}|\mathbf{X}^{(1:T)}_{(v)}\right) = \mathcal{N} \left(\zeta_\mu^{(v)}(\mathbf{X}^{(t)}_{(v)}), \exp \left( \zeta_{\sigma^2}^{(v)}(\mathbf{X}^{(t)}_{(v)})\right) \right).
\end{align*}

\subsection{Training Details}

We use an 80/20 training and validation split to evaluate the validation likelihood during training. We use an augmented Lagrangian training procedure to enforce the acyclicity constraint in the model \citep{zheng2018dags}. We closely follow the procedure employed by \citet{geffner2022deep, gong2022rhino} for scheduling the learning rates (LRs) across different modules of our model.

\paragraph{Freezing Latent Causal Modules.} To stabilize the training and ensure accurate causal discovery, we freeze the parameters of the latent SCM and causal graph, and only train the spatial factors and encoder for the first 200 epochs. This allows the spatial factor parameters to be learned without interference from incorrect causal relationships in the latent space. Once these modules are unfrozen after 200 epochs, the complete forward model and variational distribution parameters are trained jointly for the rest of the training process.

\subsection{Evaluation Details}
\label{subsection:eval_detail}
We use the mean correlation coefficient (MCC) as a measure of alignment between the inferred and true latent variables, widely used in causal representation learning works \citep{yao2021learning,yao2022temporally}. Here, MCC is computed as the mean of the correlation coefficients between each pair of true and inferred latent variables to measure how well the inferred latent time series match the true underlying latent time series.

To evaluate the accuracy of inferred causal graphs and representations, we find a permutation to match the nodes of the inferred graph to the ground truth. Specifically, we apply the Hungarian algorithm to find the optimal permutation of nodes that aligns the inferred graph’s adjacency matrix with the ground truth, minimizing the discrepancies between them. This optimal permutation is then used to calculate both the F1 Score and the Mean Correlation Coefficient (MCC). 

\section{Experimental Details}

\subsection{Synthetic Datasets}
This section provides more details about how we set up and run experiments using SPACY on synthetic datasets.

\subsubsection{Dataset generation} 
\label{app:dataset_generation}

The spatial decoder, represented by the function $g_\ell$, is configured to be linear or nonlinear, depending on the experimental setting. For the linear setting, $g_\ell$ is set to the identity function, while for nonlinear scenarios, we use randomly initialized MLPs. We generate $N=100$ samples of data, with $T = 100$ time length each and represented on a grid of size \(100 \times 100\). We vary the number of nodes ($D=10, 20$ and $30$) in each setting.

For the ground-truth latent, we generate two separate sets of synthetic datasets: a linear dataset with independent Gaussian noise and a nonlinear dataset with history-dependent noise modeled using conditional splines \cite{durkan2019neural}. We generate a random graph (specifically, Erdős-Rényi (ER) graphs) and treat it as the ground-truth causal graph. \change{Specifically, we sample (directed) temporal adjacency matrices $\mathbf{G} \in \mathbb{R}^{(\tau+1) \times D \times D}$ with $4 \times D$ edges in the instantaneous adjacency matrix and $2 \times D$ connections in the lagged adjacency matrices. We regenerate the adjacency matrix until the instantaneous graph is a DAG.}

\paragraph{Linear SCM.} We model the data as:
\begin{equation}
    \mathbf{Z}_d^{(t)} =  \sum_{k = 0}^{\tau} \sum_{d'=1}^{D} (\mathbf{G} \circ W)^{k}_{d',d} \times \mathbf{Z}^{d}_{t-k} + \eta^t_d
\end{equation}
 with \(\eta^t_d \in \mathcal{N}(0,0.5).\) Each entry of the matrix ${W}$ is drawn from ${U}[0.1,0.5] \cup {U}[-0.5,-0.1]$

\paragraph{Non-linear SCM} We model the data as:
\begin{equation*}
    \mathbf{Z}_d^{(t)} = f_{d}\left(\text{Pa}^d_G(<t), \text{Pa}^d_G(t)\right) + \eta_d^{(t)} 
\end{equation*}

where $f_d$ are randomly initialized multi-layer perceptions (MLPs), and the random noise $\eta_d^{(t)}$ is generated using history-conditioned quadratic spline flow functions \citep{durkan2019neural}. \change{
 The MLPs for the functional relationships are fully-connected with two hidden layers, 64 units and ReLU activation. The history dependency is modelled as a product of a scale variable obtained by the transformation of the averaged lagged parental values through a random-sampled quadratic spline, and Gaussian noise variable. Each sample of the synthetic datasets is generated with a series length of 200 steps with a burn-in period of 100 steps.

}

\paragraph{Spatial Factors} 
\begin{figure}[t]
    \centering
    \includegraphics[width=1.0\columnwidth]{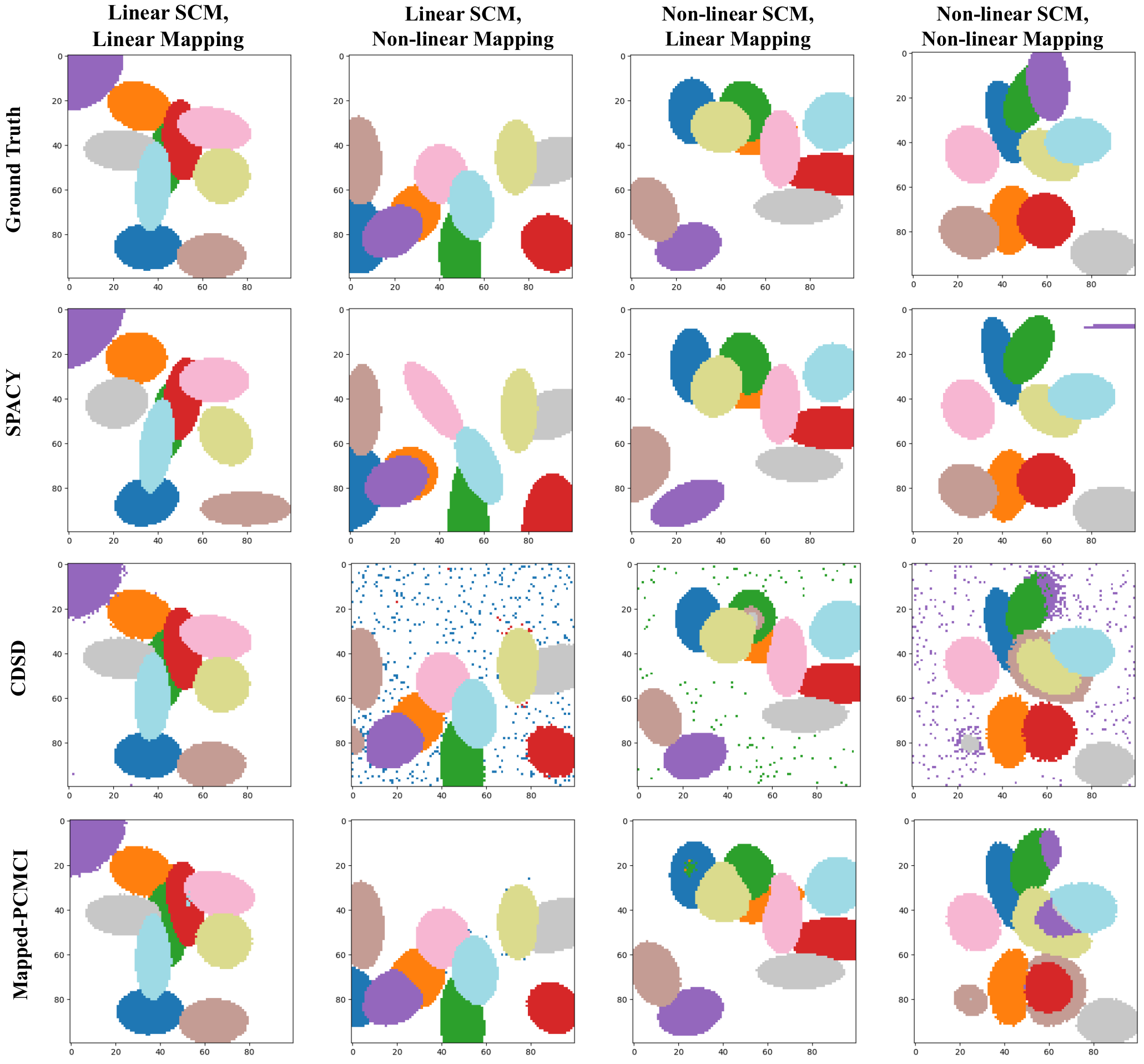}
    \caption{Visualization of the ground-truth and inferred spatial factors for different combinations of linear and nonlinear functions for SCMs and spatial mappings (top row: ground-truth, second row: inferred by SPACY, \change{third row: inferred by CDSD \citep{boussard2023towards,brouillard2024causal}}, bottom row: computed by Mapped-PCMCI \citep{xavi2022teleconnection}). We demonstrate the visualization when latent dimension $D = 10$}.
    \label{fig:syn_visuals}
\end{figure}
To generate the spatial factor matrices $\mathbf{F}$, we first sample the centers $\boldsymbol{\rho}_d$ of the RBF kernels uniformly over the grid while enforcing a minimum distance constraint to ensure separation between centers. Specifically, the minimum distance between any two centers is set to be $\frac{1}{10}$ of the grid dimension. The scales $\boldsymbol{\gamma}_d$ are sampled to define the extent of each RBF kernel, drawn uniformly from the range ${U}[3, 6]$. With these parameters, each entry of the spatial factor matrix $\mathbf{F}_d^\ell$ is determined by the RBF kernel as follows:
\begin{equation*}
\mathbf{F}_{\ell d} = \exp\left( -\frac{||x_\ell - \boldsymbol{\rho}_d||^2}{\exp(\boldsymbol{\gamma}_d)} \right),
\end{equation*}
where $x_\ell$ denotes the spatial coordinates of the $\ell^\text{th}$ grid point, $\boldsymbol{\rho}_d$ is the center, and $\boldsymbol{\gamma}_d$ is the scale of the $d^\text{th}$ latent variable.

\paragraph{Spatial Mapping} For the generation of $\mathbf{X}_\ell$, we pass the product of the spatial factors and the latent time series through a non-linearity $g_\ell$:
\begin{align}
\mathbf{X}_\ell = g_\ell \left( \left[ \mathbf{F}\mathbf{Z} \right]_\ell \right) + \varepsilon_\ell, \quad
\varepsilon_\ell \sim \mathcal{N}(0, \sigma^2_\ell I)
\end{align}
where $g_\ell$ is the spatial mapping. It is  implemented as a randomly initialized multi-layer perception (MLP) with the embedding of dimension 1 in the non-linear map setting, or as an identity function in the linear map setting. $ \varepsilon_\ell$ is the grid-wise Gaussian noise added. 

\paragraph{Multivariate}

\begin{figure}[t]
    \centering
    \includegraphics[width=0.6\columnwidth]{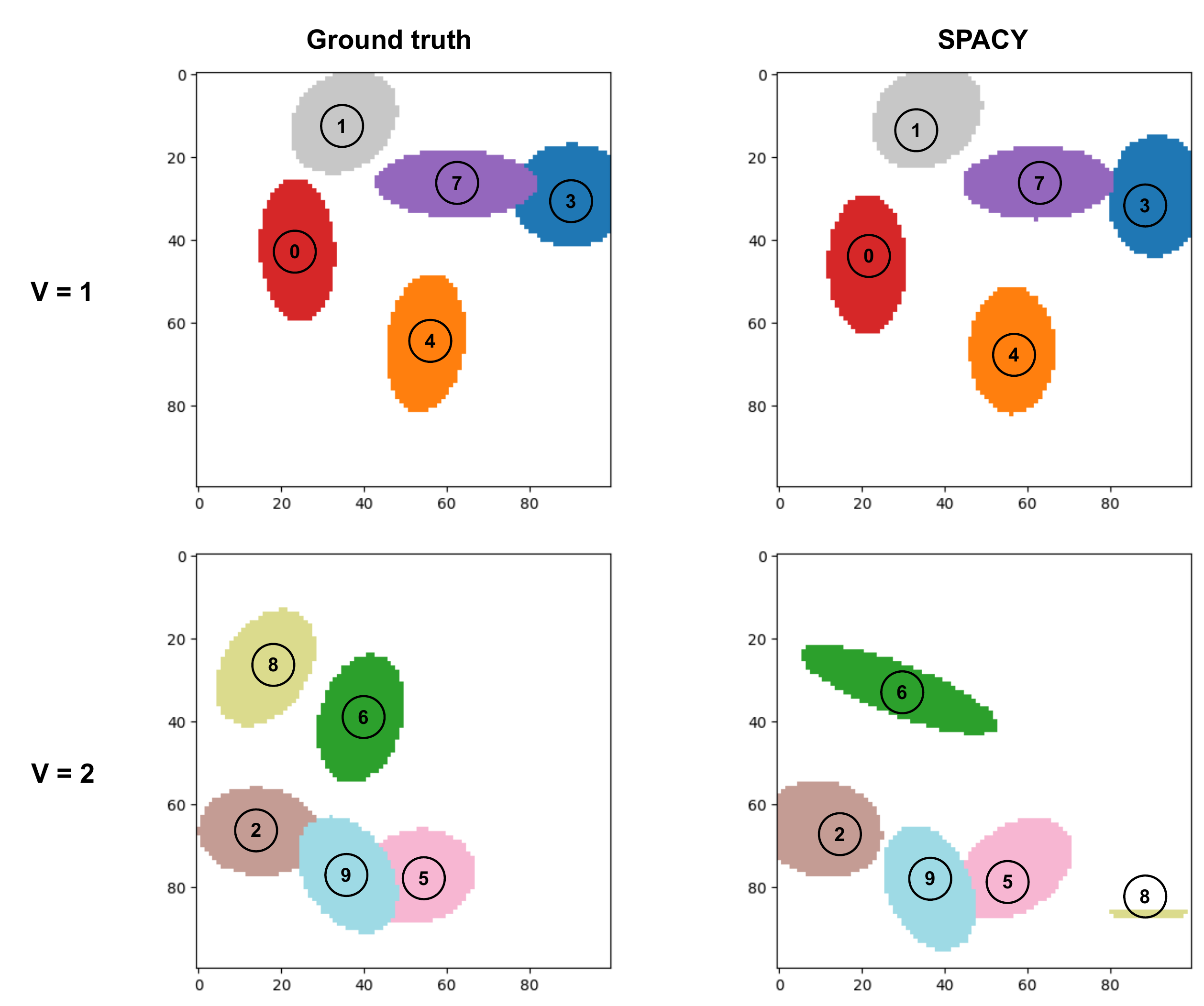}
    \caption{Example visualization of the ground-truth and inferred spatial factors for multi-variate synthetic dataset(top row: first variate, bottom row: second variate). We demonstrate the visualization where latent dimension D = 10, Linear SCM and Non-linear Mapping}.
    \label{fig:syn_multi_vis} 
    \vspace{-5mm}
\end{figure}
For the multi-variate experiments, we consider a setting with $V = 2$ variates to evaluate the performance of SPACY in capturing inter-variate interactions. The latent dimension is set to $D = 10$, with each variate-specific spatial factor contributing independently to the observed data. We generate the datasets with $D_1=D_2=5$ nodes, according to the forward model described in Appendix \ref{app:multi-variate}.

\subsection{Baselines} 
For all baselines, we used the default
hyperparameter values. We use the Mapped-PCMCI implementation from \citep{xavi2022teleconnection}\footnote{Mapped-PCMCI: \url{https://github.com/xtibau/savar}}. For Linear-Response we refer to the implementation from \citep{Falasca_2024}\footnote{Linear-Response: \url{https://github.com/FabriFalasca/Linear-Response-and-Causal-Inference}}. For LEAP and TDRL, we implemented the encoder and decoder using convolution neural networks as this choice best fits our data modality. For LEAP we used the CNN encoder and decoder architecture from the mass-spring system experiment.
\footnote{LEAP: \url{https://github.com/weirayao/leap}}. For TDRL we used the CNN encoder and decoder architecture from the modified cartpole environment experiment \footnote{TDRL: \url{https://github.com/weirayao/tdrl}}. For TensorPCA-PCMCI we implement a model based on the formulation from \citep{babii2023tensorprincipalcomponentanalysis} and PCMCI$^{+}$ \citep{runge2020pcmci+}.

For multi-variate experiments (Synthetic, Climate) with Mapped-PCMCI, we run Varimax-PCA on each of the variates separately to obtain $D_v$ principal components for each $v \in [V]$. We concatenate the principal components to produce $D = \sum_{v=1}^V D_v$ nodes and perform causal discovery with PCMCI$^+$.


\subsection{Hyperparameters}
In this section, we list the hyperparameters choices for SPACY in our experiments.
\begin{table}[H]
    \centering
    \begin{tabular}{|l|c|c|c|}
        \hline
        \textbf{Dataset} & \textbf{Synthetic} $(D=10, 20, 30)$ & \textbf{Synthetic-Multivariate} & \textbf{Global Temperature}  \\ \hline
        \textbf{Hyperparameter} & & &  \\ \hline
        Matrix LR & $10^{-3}$ & $10^{-3}$ & $10^{-3}$  \\ 
        SCM LR & $10^{-3}$ & $10^{-3}$ & $10^{-3}$  \\ 
        Spatial Encoder LR & $10^{-3}$ & $10^{-3}$ & $10^{-3}$ \\ 
        Spatial Factor LR & $10^{-2}$ & $10^{-2}$ & $10^{-2}$ \\ 
        Spatial Decoder LR & $10^{-3}$ & $10^{-3}$ & $10^{-3}$ \\ 
        Batch Size & 100 & 100 & 500 \\ 
        \# Outer auglag steps & 60 & 60 & 60 
        \\
        \# Max inner auglag steps & 6000 & 6000 & 6000 
        \\
        $\xi_f, \lambda_f$ embedding dim & 64 & 64 & 64 
        \\
        Sparsity factor $\alpha$ & 10 & 10 & 40 
        \\
        Spline type & Quadratic & Quadratic & Quadratic
        \\
        $g_\ell$ embedding dim & 32 & 32 & 32 
        \\ \hline
    \end{tabular}
    \caption{Table showing the hyperparameters used with SPACY.}
    \label{tab:hyperparameters}
\end{table}

For the Synthetic, Synthetic-Multivariate, and Global Climate datasets, the outer augmented Lagrangian (auglag) steps are set to 60, with a maximum of 6000 inner auglag steps. For Synthetic datasets we used batch size of 100 samples per training, and 500 for Global Climate datasets.  

We used the rational spline flow model described in \cite{durkan2019neural}. We use the quadratic rational spline flow model in all our experiments, with 8 bins. The MLPs $\xi_f$ and $\lambda_f$ have 2 hidden layers each and LeakyReLU activation functions. We also use layer normalization and skip connections. Table \ref{tab:hyperparameters} summarizes the hyperparameters used for training.
\subsection{Visualization details}

\label{section:F_visualization}

We visualize the spatial factors for both the synthetic and global temperature experiments by identifying regions where individual nodes exert significant spatial influence. For each node $d$, we threshold the spatial factors $\mathbf{F}_d$ by selecting values above a specified percentile (e.g., $95\%$), resulting in a binary mask that highlights areas of dominant activity. These masks are then aggregated to produce a comprehensive visualization of the spatial influence patterns across all nodes, revealing both their individual spatial footprints and areas of overlap.

For spatial factors with complex structure, we further simplify the representation by merging nearby nodes based on the proximity of their centers. Specifically, we compute pairwise distances between node centers and merge nodes whose distances fall below a percentile-based threshold (e.g., the lowest $5\%$), yielding a more interpretable depiction of the global spatial dynamics.




\section{Additional Results} \label{sec:additional_results}

\subsection{Qualitative Results}
\label{sec:syn_qual_results}
Figure \ref{fig:syn_visuals} shows a comparison between the ground truth, inferred spatial factors by SPACY, \change{spatial matrix $W$ inferred by CDSD}, and mode weights by Mapped PCMCI on synthetic datasets. Overall, we observe that the inferred spatial factors from SPACY align well with the ground truth nodes in terms of location and scales with minor deviations in shape. In the non-linear SCM dataset, the model shows some slight errors with at most 1 missing mode (for Non-linear SCM, Nonlinear Mapping).  This is also reflected by the quantitative results as performance falls slightly for this setting.

\change{The spatial matrix $W$ inferred by CDSD demonstrates overall strong performance, achieving competitive accuracy in linear settings. However, failures begin to emerge in non-linear configurations, where mode collapse occurs. This phenomenon leads to missing nodes and causes the spatial representations to scatter across the grid rather than localizing distinct modes.}

The spatial factors, or mode weights as referred to in \citet{xavi2022teleconnection}, recovered by Mapped PCMCI, perform competitively in linear mapping settings. However, similar to CDSD, its accuracy deteriorates when non-linear spatial transformations are applied, leading to artifacts and errors in node identification. In particular, for the Non-linear SCM and Nonlinear Mapping cases, Mapped PCMCI misidentifies several latent representations, with some modes being incorrectly placed inside others, highlighting its limitations in handling complex spatial structures.

\subsection{Multivariate Synthetic Dataset}


\begin{table}[!h]
\centering
\begin{tabular}{ccccc}
\toprule
\textbf{Setting} & \textbf{Method} & \textbf{F1} & \textbf{MCC}\\
\midrule
\multirow{3}{*}{\makecell{Linear SCM,\\Linear $g_\ell$}} 
 & M-PCMCI  & 0.334 & \textbf{0.969}\\
 & T-PCMCI  & 0.127 & 0.670\\
 & \textbf{SPACY} & \textbf{0.656}$_{\pm0.0}$ & 0.920$_{\pm0.1}$\\
\midrule
\multirow{3}{*}{\makecell{Linear SCM,\\Non-linear $g_\ell$}}
 & M-PCMCI & 0.387 & \textbf{0.924}\\
 & T-PCMCI & 0.115 & 0.660 \\
 & \textbf{SPACY} & \textbf{0.596}$_{\pm0.1}$ & 0.834$_{\pm0.0}$\\
 \midrule
\multirow{3}{*}{\makecell{Non-linear SCM,\\Linear $g_\ell$}}
 & M-PCMCI & 0.178 & 0.883\\
 & T-PCMCI & 0.032 & 0.644 \\
 & \textbf{SPACY} & \textbf{0.461}$_{\pm0.0}$ & \textbf{0.895}$_{\pm0.1}$\\
 \midrule
\multirow{3}{*}{\makecell{Non-linear SCM,\\Non-linear $g_\ell$}}
 & M-PCMCI & 0.170 & 0.807\\
 & T-PCMCI & 0.039 & 0.573 \\
 & \textbf{SPACY} & \textbf{0.509}$_{\pm0.1}$ & \textbf{0.825}$_{\pm0.1}$\\
\bottomrule
\end{tabular}
\caption{Results on different configurations of the multi-variate synthetic datasets. M-PCMCI and T-PCMCI are the respective baselines Mapped-PCMCI \citep{xavi2022teleconnection} and TensorPCA-PCMCI \citep{babii2023tensorprincipalcomponentanalysis}. We report the F1 and MCC scores for latent dimension $D = 10$. Average over 5 runs reported}
\label{tab:syn_multi_eval}
\end{table}
For multi-variate settings, we include the two-step baseline TensorPCA + PCMCI$^+$ \citep{babii2023tensorprincipalcomponentanalysis} in addition to Mapped PCMCI. The results of multi-variate synthetic experiments are shown in Table \ref{tab:syn_multi_eval}. SPACY maintains its performance advantage in the multi-variate setting, where it consistently outperforms the baselines in terms of causal discovery F1 score. Similar to the univariate case, Mapped-PCMCI performs well in terms of MCC but fails to recover the causal links accurately. SPACY outperforms Mapped-PCMCI in terms of both MCC and F1 score in the non-linear SCM settings. TPCA-PCMCI falls short in all settings. Figure \ref{fig:syn_multi_vis} provides a visual illustration of the recovered spatial factor for multi-variate setting for SPACY.

\subsection{Ablation Study} \label{sec:ablation_study}

\paragraph{Over-specifying $D$. } SPACY requires specifying the number of latent variables $D$ as a hyperparameter. In practice, the exact number of underlying factors is often unknown. We examine the effect of overspecifying $D$ by setting it to $D^*+10$, where $D^*$ represents the true number of nodes used to generate the data. \change{Additionally, we explored the robustness of the model over different levels of parameterizations with $D^*+n$, $n \in \{0,2,5,7,10\}$} We use the synthetic dataset with grid dimensions $100\times 100$, linear SCM and non-linear mapping.

Figure \ref{fig:ablation_overspec} illustrates the results of our experiment. When $D^* = 10$, despite over-specifying the number of nodes, the inferred spatial modes' general locations align well with the ground truth. The presence of additional modes does not significantly detract from the accuracy of detecting the primary spatial modes. This suggests that SPACY maintains robust learning of latent representations even when $D$ exceeds the true number of spatial factors. This observation also holds true when comparing the causal discovery performance using the F1 score. 

\change{Figure \ref{fig:ablation_over2} illustrates the results of our exploration on different levels of over-parameterizations. As the latent dimension increases, the performance does not degrade. This suggests that SPACY maintains robust learning of latent representations across different levels of over-parameterization, offering flexibility in the choice of the hyperparameter $D$ when there is uncertainty about the true latent dimensionality.}


\begin{figure*}[htbp]
    \centering
    \begin{minipage}{0.45\textwidth} 
        \centering
        \begin{subfigure}[b]{0.45\textwidth} 
        
            \centering
            \includegraphics[width=\linewidth]{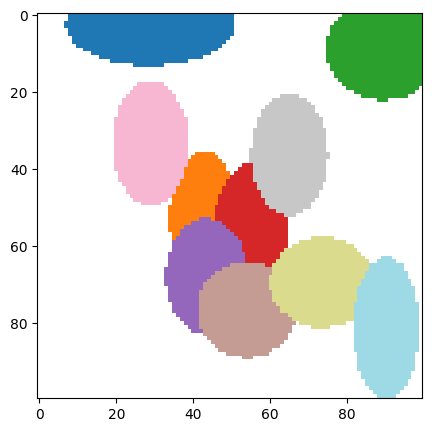}
            \caption{GT modes}
            \label{fig:ablation_gt}
        \end{subfigure}
        \hfill
        \begin{subfigure}[b]{0.45\textwidth} 
            \centering
            \includegraphics[width=\linewidth]{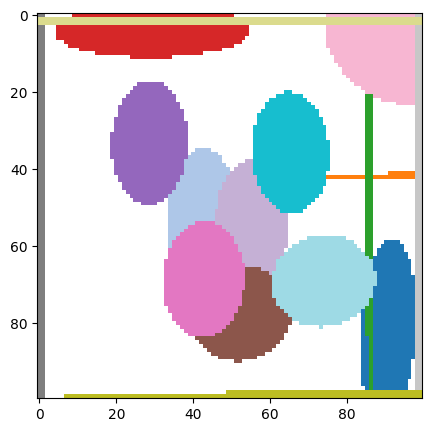}
            \caption{Inferred modes}
            \label{fig:ablation_ov}
        \end{subfigure}
    \end{minipage}
    \hfill
    \begin{minipage}{0.45\textwidth} 
        \centering
        \begin{tabular}{|c|c|c|}
            \hline
            $D^*$ & F1 score  & F1 score  \\
            &     $(D=D^*)$ & $(D=D^*+10)$ \\
            \hline
            10 & $0.623 \pm 0.06$ & $\mathbf{0.642 \pm 0.07}$ \\
            20 & $\mathbf{0.752 \pm 0.03}$ & $0.549 \pm 0.03$\\
            30 & $\mathbf{0.596 \pm 0.05}$ & $0.529 \pm 0.06$\\
            \hline
        \end{tabular}
        \caption*{(c) Causal discovery performance of SPACY-L} 

        \label{tab:ablation_metric}
    \end{minipage}
    \caption{Overview of the results for over-specification ablation study. (a) Visualization of the ground-truth location and scale of the spatial modes. (b) Visualization of the inferred location and scale when we over-specify the number of nodes. (c) Causal discovery performance after matching and eliminating nodes. Average over 5 seeds reported}
    \label{fig:ablation_overspec}
\end{figure*}

\begin{figure} 
    \centering
    \includegraphics[width=0.7\columnwidth]{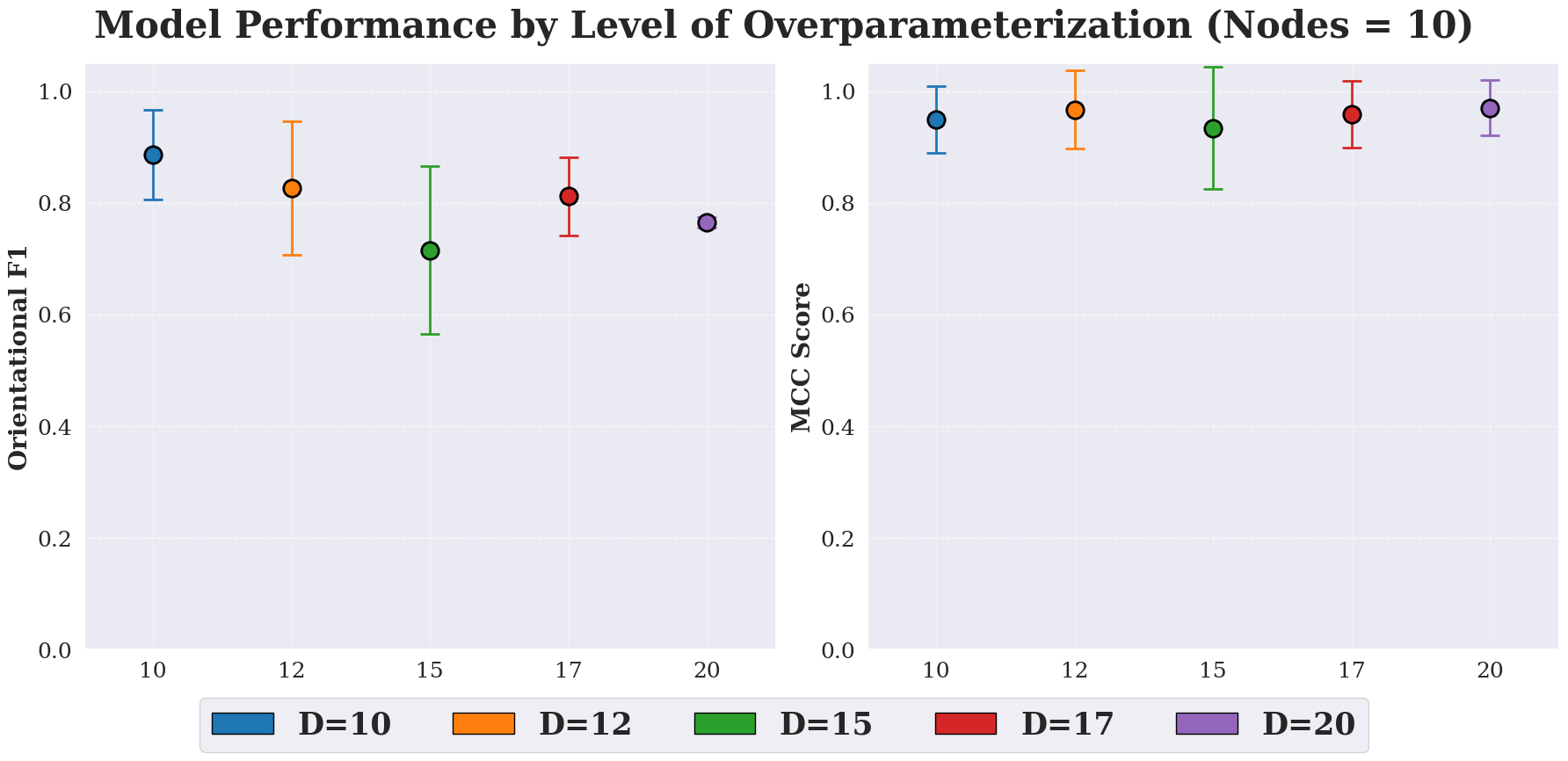}
    \caption{Different levels of over-specification ablation study. Average of 5 seeds reported}
    \label{fig:ablation_over2}
\end{figure}

\paragraph{Different Kernels}
\label{app:diff_kernels}

\begin{figure} 
    \centering
    \includegraphics[width=0.6\columnwidth]{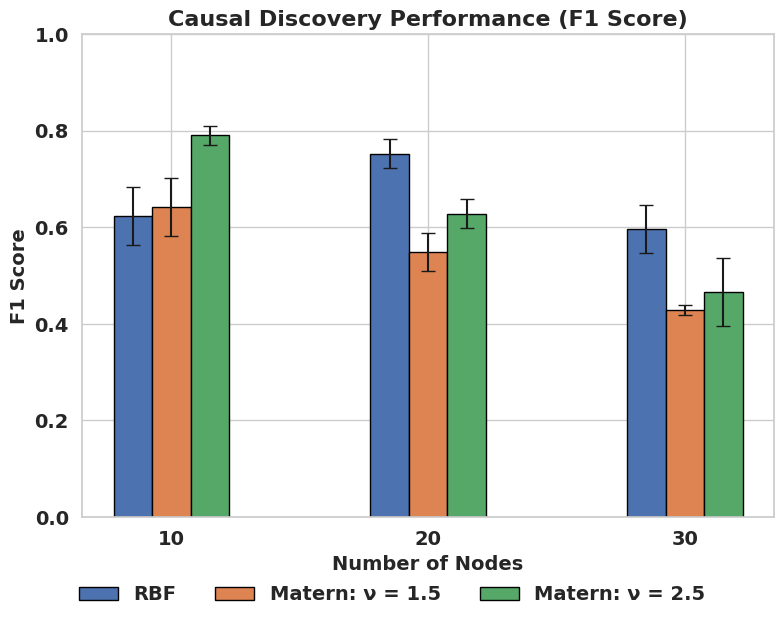}
    \caption{The causal discovery performance (F1 score) of SPACY using different kernel functions as spatial factors. Average of 5 seeds reported}
    \label{fig:matern_result}
\end{figure}

We experiment with different spatial kernel functions in order to test SPACY's robustness to the choice of kernel used in modeling the spatial factors. We use the synthetic dataset with linear SCM and nonlinear spatial mapping, where the ground truth spatial mapping is generated using RBF kernels.

The Matérn kernel is a generalization of the Radial Basis Function (RBF) kernel and is widely used in spatial statistics and machine learning due to its flexibility in modeling functions of varying smoothness. The Matérn kernel is defined as:

\[
k_{\text{Matérn}}(r) = \frac{2^{1-\nu}}{\Gamma(\nu)} \left( \sqrt{2\nu} \frac{r}{s} \right)^\nu K_\nu\left( \sqrt{2\nu} \frac{r}{s} \right),
\]

where:

\begin{itemize}
    \item $r = \| \mathbf{\boldsymbol \ell} - \mathbf{\boldsymbol \ell}' \|$ is the Euclidean distance between points $\mathbf{\boldsymbol \ell}$ and $\mathbf{\boldsymbol \ell}'$,
    \item $s$ is the length scale,
    \item $\nu > 0$ controls the smoothness of the function,
    \item $\Gamma(\cdot)$ is the gamma function,
    \item $K_\nu(\cdot)$ is the modified Bessel function of the second kind.
\end{itemize}

For specific values of $\nu$, the Matérn kernel simplifies to closed-form expressions:

\begin{itemize}
    \item \textbf{When $\nu = 1.5$:}
    \[
    k_{\text{Matérn}}^{1.5}(r) = \left(1 + \frac{\sqrt{3} r}{s}\right) \exp\left(-\frac{\sqrt{3} r}{s}\right).
    \]
    \item \textbf{When $\nu = 2.5$:}
    \[
    k_{\text{Matérn}}^{2.5}(r) = \left(1 + \frac{\sqrt{5} r}{s} + \frac{5 r^2}{3 s^2}\right) \exp\left(-\frac{\sqrt{5} r}{s}\right).
    \]
\end{itemize}

These formulations allow us to model functions with different degrees of smoothness, providing a more flexible approach compared to the RBF kernel.

We test SPACY with the Matérn kernel using two settings: $\nu = 1.5$ and $\nu = 2.5$. Figure \ref{fig:matern_result} presents the F1-Score and MCC for SPACY using the RBF kernel and both Matérn kernel settings. The results show that SPACY achieves similar performance with the Matérn kernels compared to the RBF kernel, indicating that the variational inference framework effectively generalizes across different kernel functions. The inferred spatial modes' general locations and scales align well with the ground truth across all kernel settings (illustrated in Figure \ref{fig:matern_visuals}). This consistency demonstrates that SPACY's spatial representations are robust to the choice of kernel function.

\begin{figure}[h]
    \centering
    \includegraphics[width=1.0\columnwidth]{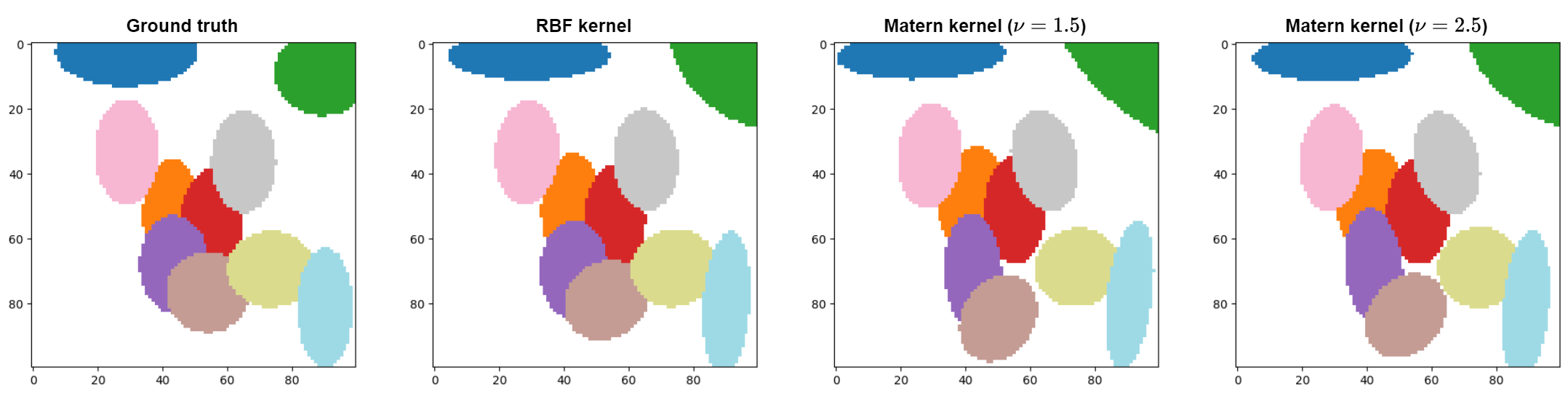}
    \caption{Overview of the visualization of the spatial factor when using different kernel functions. We compare inferred spatial factors using RBF, Matern Kernel $(\nu = 1.5)$, and Matern Kernel $(\nu = 2.5)$ with the ground truth spatial factors}
    \label{fig:matern_visuals}
\end{figure}

\change{\paragraph{How well do the RBF Kernels approximate anisotropy?} 
\label{app:approx_kernels}
While SPACY's theoretical framework supports fully anisotropic spatial kernels, we use RBF kernels in our experiments since they parsimoniously capture high-level features. In this experiment, we verify SPACY's robustness when modeling data generated from non-isotropic spatial factors using only isotropic RBF kernels. To create challenging test conditions, we first generate synthetic data with irregular anisotropic spatial factors defined as:
\begin{equation}
F_i((x,y)) = \exp\left(-\frac{(u(x, y) - x_0^{(i)})^2}{2(\sigma_x^{(i)})^2} - \frac{(v(x, y) - y_0^{(i)})^2}{2(\sigma_y^{(i)})^2}\right)
\end{equation}
where $x_0$ and $y_0$ denote fixed points corresponding to the `center' of the factors, and the coordinates are warped via sinusoidal transformations:
\begin{align*}
u(x, y) &= x + A^{(i)}\sin(\omega^{(i)}(x-x_0^{(i)}))\sin(\omega^{(i)}(y-y_0^{(i)})) \\
v(x, y) &= y + A^{(i)}\cos(\omega^{(i)}(x-x_0^{(i)}))\cos(\omega^{(i)}(y-y_0^{(i)}))
\end{align*}
Here, $A^{(i)}$ and $\omega^{(i)}$ control the amplitude and frequency of spatial distortion for the $i$-th latent factor. We test with sampled warp parameters ($A \in \text{Uniform}([2,4]), \omega \in \text{Uniform}([0.1,0.3])$) to simulate complex neural response fields. The results are presented in Figure \ref{fig:Isotropic_estimation}, which demonstrates that SPACY achieves accurate localization and scale estimation for latent variables, even under irregular and anisotropic spatial factors. The high F1-Score and MCC further validate SPACY’s robust performance when using RBF-based approximation on irregularly structured data, confirming its ability to reconstruct the underlying causal graph effectively.}
\begin{figure*}
    \centering
    \begin{overpic}[width=0.7\textwidth]{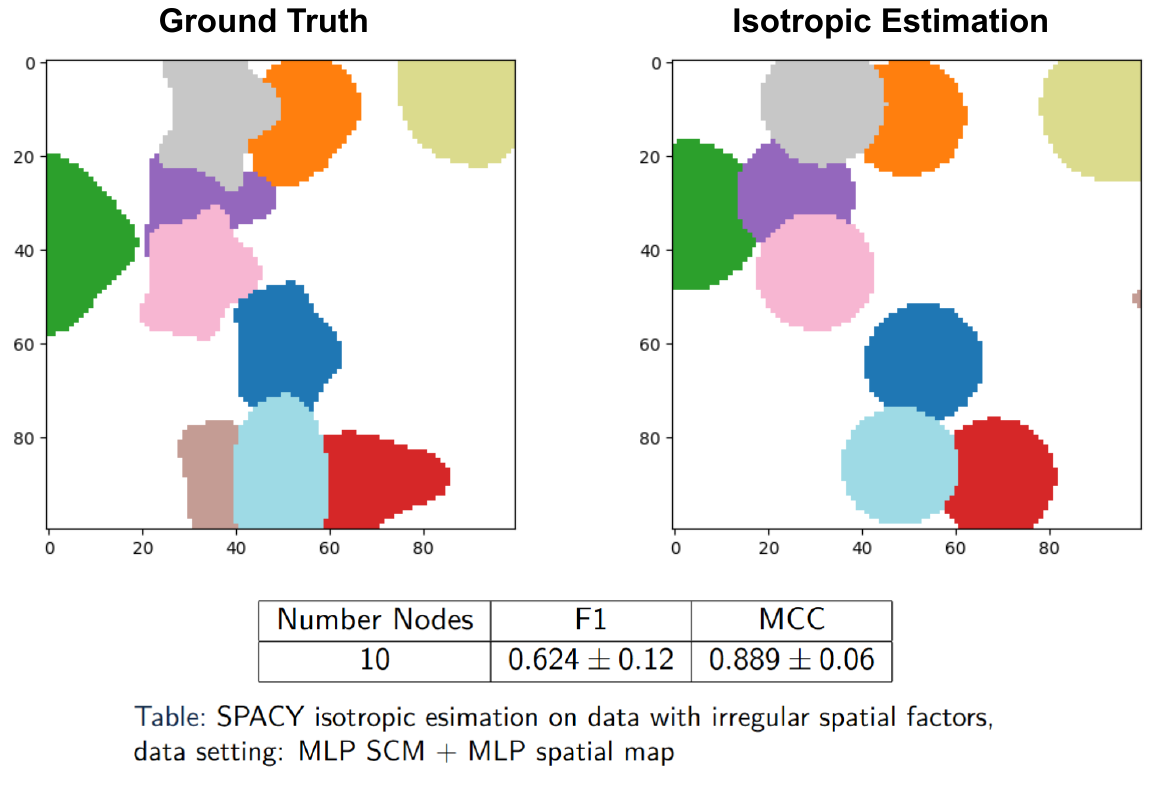}
    \end{overpic}
    \caption{Visualization and results of the ground-truth spatial factors with irregular kernels, and the isotropic estimation by SPACY. Average of 5 seeds reported }
    \label{fig:Isotropic_estimation}
\end{figure*}

\subsection{Global Climate}
\label{section:global_temp_descrip}


\begin{figure*}[t]
    \centering
    \begin{overpic}[width=\textwidth]{assets/G_pred_visualize.pdf}
    \end{overpic}
    \caption{Visualization of the learned spatial factors and causal graph from Global Climate Dataset, after merging based on proximity and graph links. The spatial factor is demonstrated across precipitation (left), temperature (middle), and combined (right).}
    \label{fig:G_pred_global}
\end{figure*}
The \textbf{Global Climate Dataset} is a comprehensive, mixed real-simulated dataset encompassing monthly global temperature and precipitation data spanning the years 1999 to 2001. It contains $7531$ simulated samples, each over a time period of 24 months, covering the entire globe at a fine spatial resolution. The grid size is 145 $\times$ 192, which corresponds to a spatial division of approximately 1.24$^\circ$ latitude and 1.875$^\circ$ longitude. This spatial resolution allows the dataset to provide detailed global coverage, capturing temperature and precipitation variations across diverse geographical regions. The resulting data dimensions are $7531 \times 2 \times 24 \times 145 \times 192$, representing the total number of samples, variates (temperature and precipitation), the temporal sequence, and the spatial grid, respectively.

To facilitate causal analysis of complex climate phenomena beyond seasonal patterns, we apply a de-seasonality procedure. This normalization process involves computing the monthly mean for each month across all years and then subtracting the mean from the data of the corresponding month (for example, normalizing all January data by the mean of all January values). 

For our analysis, we use the nonlinear variant of the SPACY method to uncover latent representations within the data. Specifically, we use 50 latent variables (denoted as $D = 50$) and a maximum lag of six months ($\tau = 6$). We set $D_1=D_2 = 25$ latent variables. Figure \ref{fig:G_pred_global} shows the inferred spatial modes across the different nodes, and the causal graph. We observe that the inferred modes are spatially confined, each with a distinct center and scale, which leads to improved interpretability. 


\begin{figure}[h]
    \centering
    \includegraphics[width=0.85\columnwidth]{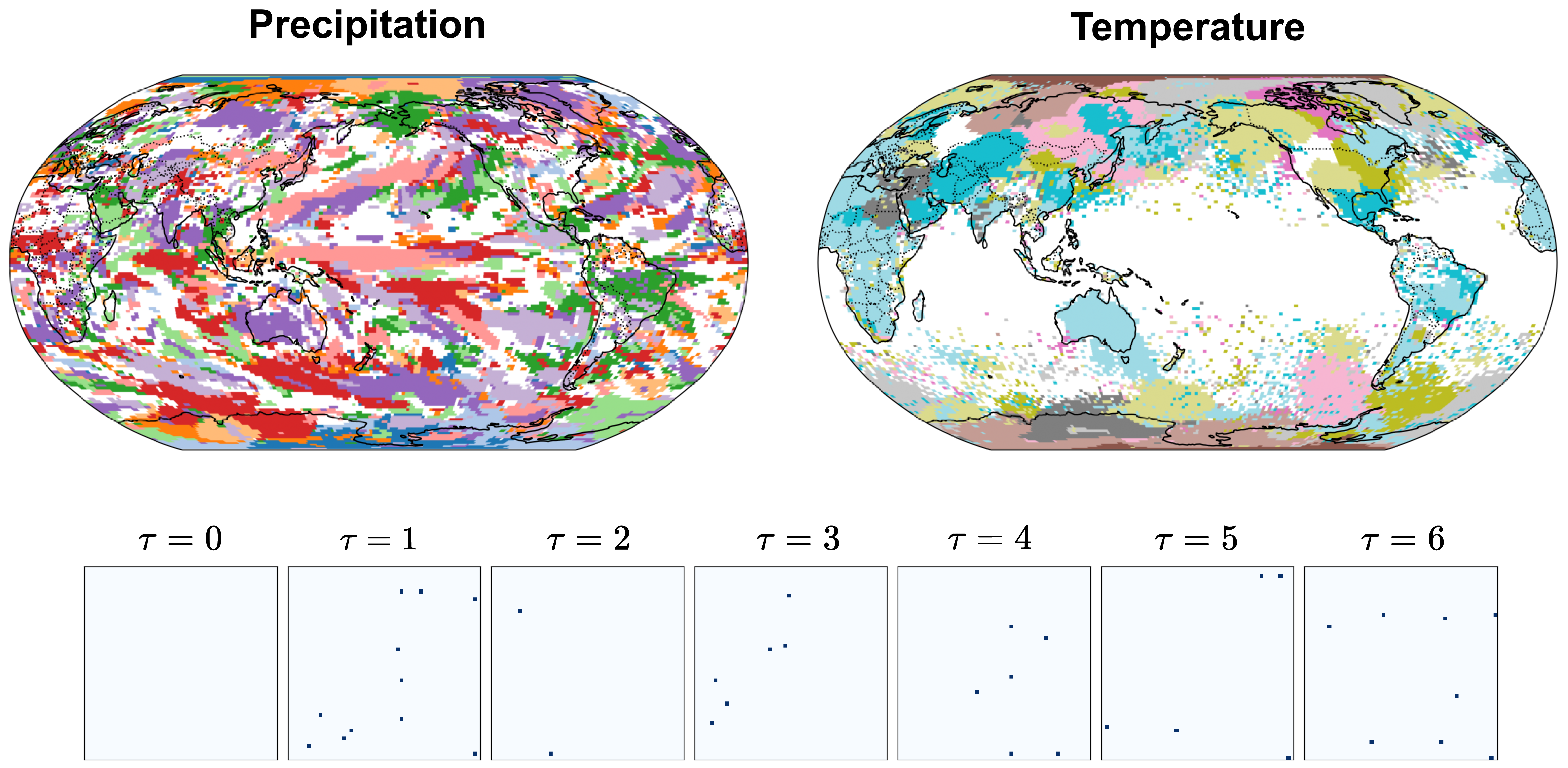}
    \caption{Visualization of the spatial factor inferred by Varimax-PCA and causal graph inferred by PCMCI+ from the climate dataset, following the procedure in Appendix \ref{section:F_visualization}}
    \label{fig:varimax_global}
\end{figure}

Figure \ref{fig:varimax_global} presents the visualization of the modes and causal graph inferred by Mapped-PCMCI. The recovered spatial factors capture spatial correlations within each variate, revealing diffuse locality patterns in some regions such as Australia, Africa, and East Asia. This pattern is most noticeable in the temperature variate, where localized structures emerge. However, most inferred modes appear dispersed across the map, especially in the precipitation variate, suggesting that the underlying spatial structure lacks clear partitioning into distinct, interpretable modes.

\begin{figure}[t]
    \centering
    \includegraphics[width=0.8\columnwidth]{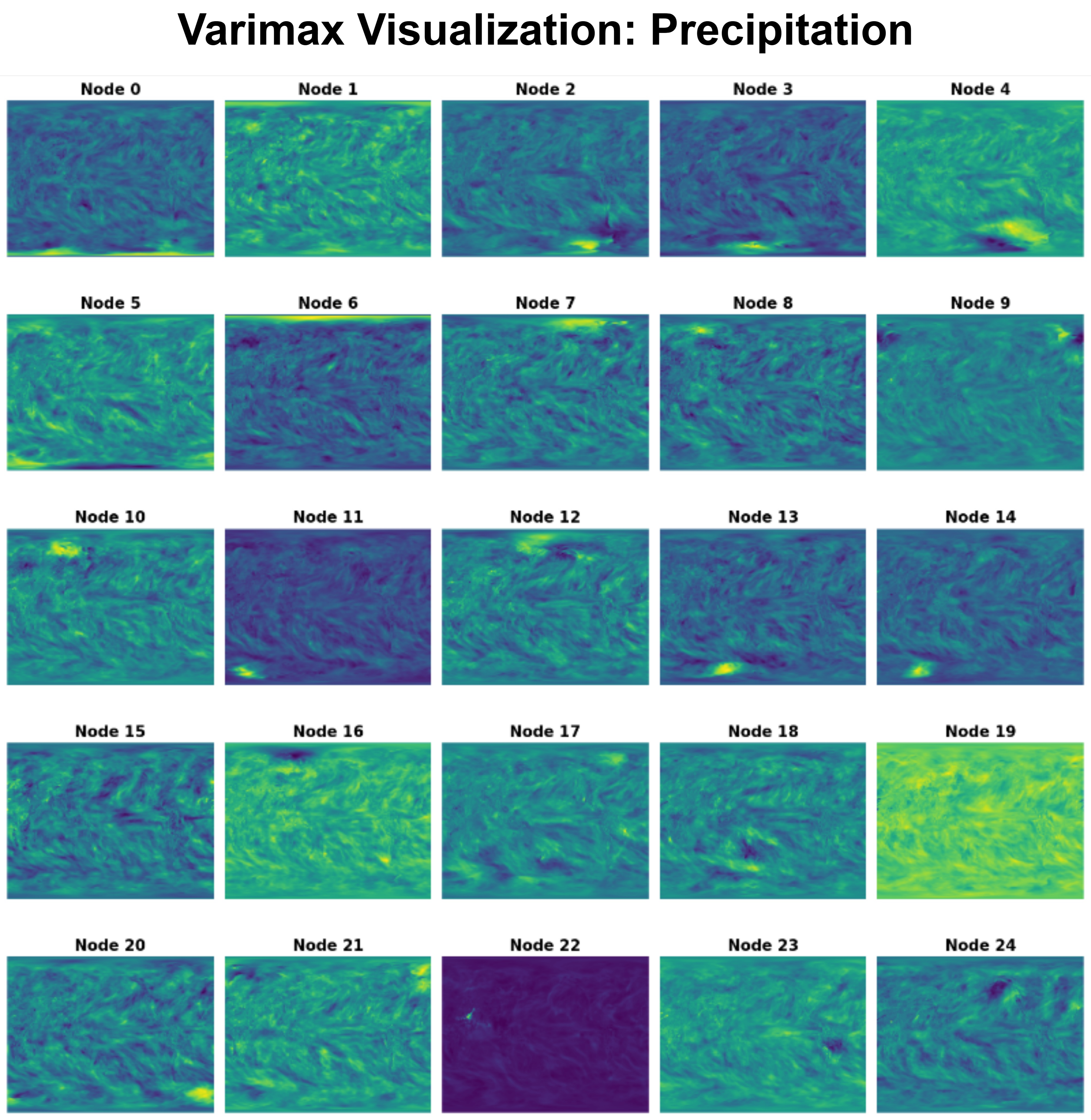}
    \caption{Visualization of the individual spatial nodes inferred by Varimax-PCA from the Climate Dataset. Precipitation variate displayed}
    \label{fig:sst_varimax_precip}
\end{figure}
\begin{figure}[t]
    \centering
    \includegraphics[width=0.8\columnwidth]{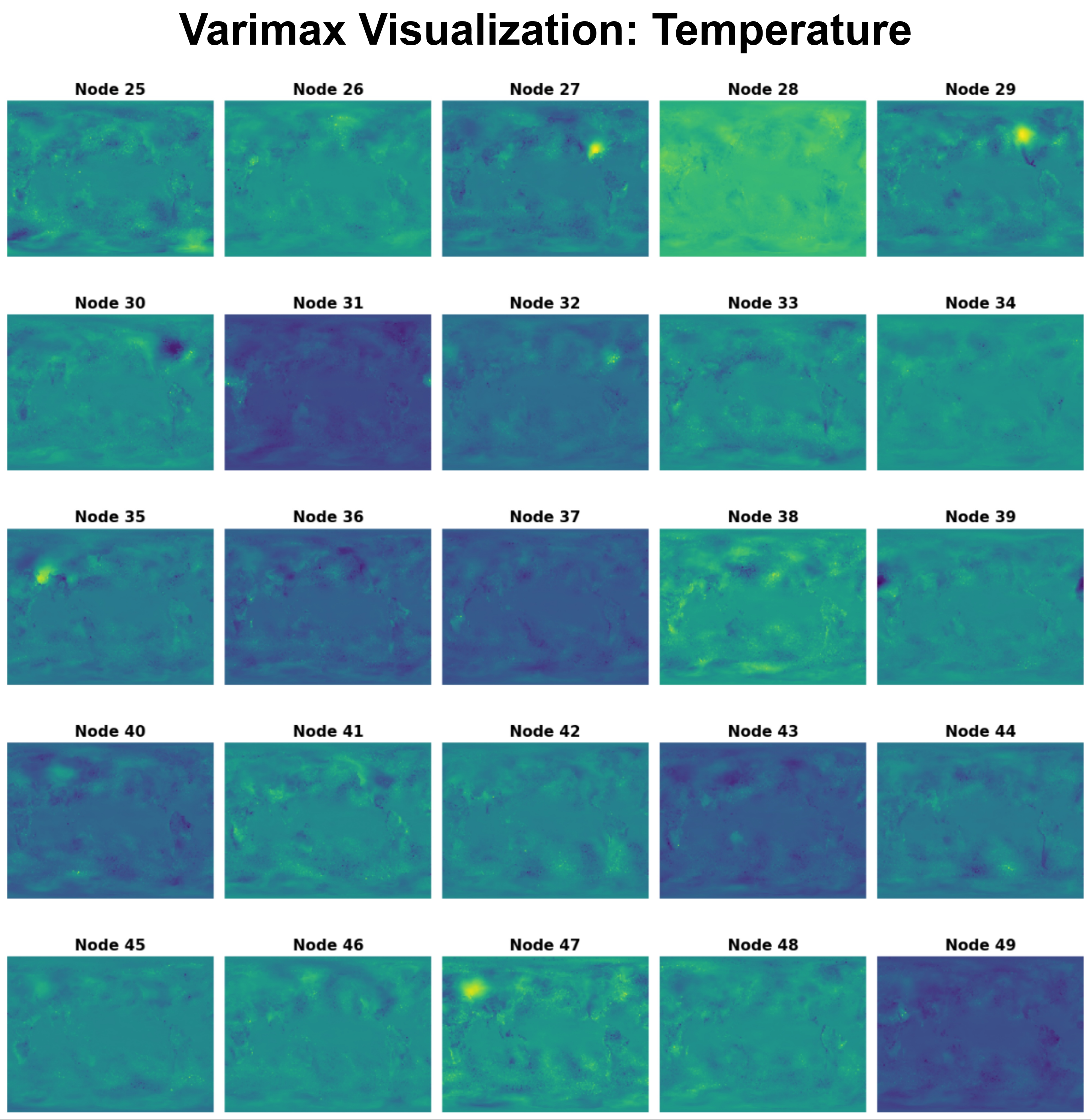}
    \caption{Visualization of the individual spatial nodes inferred by Varimax-PCA from the Climate Dataset. Temperature variate displayed}
    \label{fig:sst_varimax_temp}
\end{figure}

\change{Figures \ref{fig:sst_varimax_precip} and \ref{fig:sst_varimax_temp} show the node-wise visualization of modes inferred by Mapped-PCMCI, with demonstrations of individual inferred modes for both variates. The visualizations show that many modes recovered by Varimax are diffuse, uninterpretable, and lack clear physical locations, with clusters (e.g., node 42, 44) suggesting similar underlying components.}


\end{document}